\documentclass{article}
\usepackage{iclr2027_conference,times}
\usepackage{bm}
\usepackage{xcolor}
\usepackage{amsmath,amssymb,amsthm,mathtools,bm}
\usepackage{booktabs,multirow,array,tabularx}
\usepackage{graphicx,subcaption}\usepackage{wrapfig}
\usepackage[table,dvipsnames]{xcolor}
\usepackage{microtype}
\usepackage{enumitem}
\usepackage{etoolbox}
\usepackage{needspace}\usepackage{placeins}
\usepackage{algorithm}
\usepackage[noend]{algpseudocode}
\usepackage{siunitx}
\usepackage{url}
\colorlet{LinkColor}{MidnightBlue}
\definecolor{TableAccent}{RGB}{55,91,130}
\definecolor{OurMethodColor}{HTML}{119F87}
\usepackage{bm}
\newcommand{\bestvalue}[1]{\bm{#1}}
\newcommand{\ourmethod}[1]{\textcolor{OurMethodColor}{\textbf{#1}}}
\newcommand{\resultstablestyle}{%
  \small
  \arrayrulecolor{black!55}%
  \setlength{\heavyrulewidth}{0.6pt}%
  \setlength{\lightrulewidth}{0.35pt}%
  \setlength{\cmidrulewidth}{0.3pt}%
  \renewcommand{\arraystretch}{1.2}%
}
\colorlet{UncertaintyColor}{gray}
\usepackage[colorlinks=true,allcolors=LinkColor]{hyperref}

\setcitestyle{authoryear,round,citesep={;},aysep={,},yysep={;}}
\graphicspath{{figures/}}

\patchcmd{\paragraph}{1.5ex plus 0.5ex minus .2ex}{0pt}{}{
  \PackageError{ftfc}{Could not compact paragraph heading spacing}{}
}
\patchcmd{\subparagraph}{1.5ex plus 0.5ex minus .2ex}{0pt}{}{
  \PackageError{ftfc}{Could not compact subparagraph heading spacing}{}
}

\newif\ificlrhascitations
\makeatletter
\pretocmd{\NAT@citex}{\global\iclrhascitationstrue}{}{
  \PackageWarning{ftfc}{Could not track natbib citations}
}
\pretocmd{\nocite}{\global\iclrhascitationstrue}{}{
  \PackageWarning{ftfc}{Could not track nocite commands}
}
\makeatother

\newcommand{\R}{\mathbb{R}}
\newcommand{\E}{\mathbb{E}}

\newcommand{\DKL}{\mathrm D_{\mathrm{KL}}}
\DeclareMathOperator{\argmin}{arg\,min}

\newtheorem{theorem}{Theorem}
\newtheorem{proposition}[theorem]{Proposition}

\newtheorem{corollary}[theorem]{Corollary}

\theoremstyle{definition}

\definecolor{TakeawayBorder}{RGB}{55,91,130}
\definecolor{TakeawayBackground}{RGB}{241,246,251}

\newcommand{\uncertainty}[1]{\,{\color{UncertaintyColor}\scriptstyle\ensuremath{\pm#1}}}

\newcommand{\methodcite}[1]{{\scriptsize\citep{#1}}}

\title{Fenchel Tilting: \\Weighted Correction for Efficient Finetuning of Generative Models}
\author{  
Maksim Bobrin\textsuperscript{1,2}\thanks{Corresponding author: \texttt{maxs.bobrin@gmail.com}} \quad  
Maksim Zhdanov\textsuperscript{1,3} \quad  
Dmitry Dylov\textsuperscript{1,2}\\[0.35em]  
\textsuperscript{1}AXXX \quad \textsuperscript{2}Applied AI Institute \quad \textsuperscript{3}MBZUAI}
\iclrfinalcopy
\begin{document}
\maketitle
\thispagestyle{fancy}
\begin{abstract}
Adapting a pretrained generative model to an arbitrary preference expressed 
as a utility function underlies reward alignment, guided design, 
and constraint satisfaction, enabling diverse applications.
Existing fine-tuning methods trade off generality against computational cost: they either
restrict the family class of supported preferences to keep optimization simple 
or preserve generality at the expense of efficiency.
We introduce Fenchel Tilt Flow Control (FTFC), which 
decouples utility optimization from generative-model fitting.
FTFC first optimizes for a target distribution by jointly fitting an effective reward and density-ratio weights on pretrained samples. Method combines the utility’s variational structure with Fenchel duality, supporting general 
$f$-divergence penalties that determine how rewards are transformed into an distribution-correction weights. These weights are then frozen and used to modify a diffusion or flow model in a single stage of importance-weighted denoising or flow matching, without differentiating through sampling trajectories.
We establish exact duality for concave utilities under suitable conditions and show that weighted fitting reproduces the optimal target distribution for a given utility.
Across image and molecule generation benchmarks, FTFC improves over baselines on diverse preference functions, while also being up to $20\times$ more efficient.
Proposed method enables adaptation beyond expected-reward maximization without complex optimization, while preserving robustness for more general class of the utility functions compared to baselines.
\end{abstract}

\section{Introduction}
\label{sec:introduction}
Diffusion and flow models support text-to-image synthesis \citep{rombach2022latent,esser2024scaling}, molecular generation and drug design \citep{dunn2024flowmol,schneuing2024diffsbdd}, and robotic control \citep{chi2023diffusionpolicy}. Reproducing the training distribution, however, does not directly optimize qualities such as aesthetic appeal, prompt fidelity, or molecular validity. Reward-based fine-tuning adapts a pretrained model to such external scores \citep{xu2023imagereward}, but average reward cannot express all preferences: molecular discovery may prioritize rare high-quality candidates, whereas reliable generation may emphasize the worst outcomes. This motivates optimizing a utility of the full output distribution while remaining close to the pretrained model \citep{de2025flow,wang2026efficient}.

For nonlinear utilities, the functional gradient depends on the generated distribution itself \citep{de2025flow}. Flow Density Control (FDC) handles this dependence by alternating utility linearization and generative fine-tuning, using solvers such as Adjoint Matching \citep{domingo2025adjoint}; it therefore requires an outer mirror-descent loop and an inner control solver. Tail-Aware Flow Fine-Tuning (TFFT) separates risk-sensitive threshold selection from a single entropy-regularized update \citep{wang2026efficient}, but its decomposition is specific to CVaR under KL regularization. Broader structured utilities and divergences require the effective pseudo-reward and induced target distribution to be determined jointly.

Motivated by distribution correction estimation (DICE) in off-policy reinforcement learning \citep{nachum2019dualdice,lee2021optidice}, we introduce \emph{Fenchel Tilt Flow Control} (FTFC). FTFC first selects a target distribution relative to the pretrained model and then fits the generator. For concave utilities with suitable $f$-divergence penalties, Fenchel duality yields a supporting reward and closed-form density-ratio weights that specify the probability-mass shift. The frozen weights then reweight one stage of native denoising or flow-matching training \citep{zhang2025energyweighted,potaptchik2025tilt}. Precomputed endpoints and rewards, combined with fresh native noise, avoid reward gradients and differentiation through sampling trajectories. FTFC thus separates \emph{selecting the endpoint distribution} from \emph{learning the generative map}.

\Needspace{4\baselineskip}
Our contributions are:
\begin{itemize}[leftmargin=*,topsep=2pt,itemsep=2pt,parsep=0pt]
\item \textbf{Dual formalization of general utility.} We cast utility optimization as normalized density-ratio fitting with supported $f$-divergence penalties, including KL and Pearson $\chi^2$. For concave utilities, the Fenchel dual jointly recovers a marginal reward and optimal weights; we also give necessary primal--dual gap conditions.
\item \textbf{Simple and scalable fine-tuning.}
FTFC reduces utility-based adaptation to two simple steps: optimize the
target weights on cached samples, then freeze them and fine-tune the
generator by reweighting its native denoising or flow-matching loss.
No reward gradients, trajectory differentiation, or specialized
generative optimization is required.
\item \textbf{Efficient adaptation.}
Across molecular and image benchmarks, FTFC is $10$--$20\times$ faster
than prior fine-tuning methods. By separating target selection from
generator fitting, it avoids repeated reward optimization and
differentiation through generation trajectories.
\end{itemize}
\section{Related Work}
\label{sec:related-work}
\paragraph{General Utility Optimization.}
Flow Density Control (FDC) optimizes nonlinear utilities by alternating utility linearization with entropy-regularized fine-tuning \citep{de2025flow}, implemented through Adjoint Matching \citep{domingo2025adjoint}. Tail-aware Flow Fine-Tuning (TFFT) instead selects a CVaR threshold before one KL-regularized fine-tuning stage \citep{wang2026efficient}. FTFC extends this separation to structured utilities and supported $f$-divergences: it determines the marginal reward and target weights before updating the generator. Under KL-regularized CVaR, it recovers the TFFT target objective.

\paragraph{Distribution Correction Estimation (DICE).}
DICE methods address off-policy distribution shift by estimating state-action occupancy ratios rather than trajectory importance weights. DualDICE estimates these ratios for evaluation \citep{nachum2019dualdice}, whereas OptiDICE uses Fenchel duality to optimize a divergence-regularized occupancy measure from offline data \citep{lee2021optidice}. FTFC transfers this view to generative fine-tuning by estimating target-to-pretrained density ratios. For nonlinear utilities, the effective reward depends on the unknown target distribution; Fenchel duality resolves this coupling by jointly recovering the marginal reward and normalized weights, with endpoint normalization replacing RL occupancy-flow constraints.

\paragraph{Learning the target generator.}
A specified target distribution can be learned through stochastic control, including non-KL reward tilts \citep{tang2024finetuning}, or weighted flow and diffusion regression \citep{zhang2025energyweighted}. DiffCon derives general-$f$ weighted denoising from a trajectory-distribution objective and an equivalent KL transition-control form \citep{yang2026diffusioncontroller}. Tilt Matching avoids reward gradients and trajectory backpropagation \citep{potaptchik2025tilt}, while online reward-weighted flow matching combines regression, fresh samples, and Wasserstein regularization \citep{fan2025online}. Guidance offers another route \citep{feng2025guidance}. Diffusion-DICE learns it from occupancy ratios \citep{mao2024diffusiondice} and FTFC uses its in-sample estimator only for optional guidance. Its main algorithm instead fits the target distribution with the native denoising or flow-matching loss while keeping utility-dependent weights fixed.
\section{Problem Setup}
\label{sec:problem-setup}
We adopt the notation of Flow Density Control (FDC)
\citep{de2025flow} and Tail-aware Flow Fine-Tuning (TFFT)
\citep{wang2026efficient}. We will formalize the problem setting in the deterministic flow-based framework.

\paragraph{Generative models as policies.}
Let $\mathcal{P}(\mathcal{X})$ denote the Borel probability measures on the
sample space $\mathcal X=\R^d$. A flow
model is specified by a parametrized velocity
field $u_\theta(x,t)$: given the current sample $x$ and generation time $t$,
it predicts the direction and speed of the sample's motion, i.e \textit{vector field}. Pretraining learns
$\theta$ so that transporting initial noise $X_0\sim p_0$ produces samples
approximating the data distribution $p_{\mathrm{data}}$
\citep{lipman2023flow,lipman2024flowmatchingguidecode}.
We interpret $(x,t)$ as a state and the predicted velocity as an
action. The generative network itself therefore defines a deterministic
\emph{policy} $\pi_\theta(x,t):=u_\theta(x,t)$, abbreviated as $\pi$.
Sampling follows $X_t=\psi_t^\pi(X_0)$ with
\begin{equation}
  \frac{d}{dt}\psi_t^\pi(x)
  =\pi\!\left(\psi_t^\pi(x),t\right),
  \qquad \psi_0^\pi(x)=x,
  \qquad t\in[0,1].
  \label{eq:setup-flow}
\end{equation}
We denote the distribution of $X_t$ by $p_t^\pi$, with $p_1 ^\pi$ be an endpoint (final) distribution. The pretrained weights
$\theta_{\mathrm{pre}}$ define the fixed reference output law
$q:=p_1^{\mathrm{pre}}$.
Thus $p_0$ describes input noise, while $q$ describes generated samples.

\paragraph{Generative optimization.}
Fine-tuning updates the network pretrained parameters $\theta$, changing its movement
rule $\pi_\theta$ and hence its output distribution $p_1^{\pi_\theta}$.
Given a general (possibly, non-linear) utility function $\mathcal{F}:\mathcal{P}(\mathcal{X})\to\R$, a divergence
$\mathcal{D}$ measuring departure from $q$, and $\alpha\geq0$, the objective is
\begin{equation}
  \begin{gathered}[b]
    \text{\small\bfseries\color{TableAccent}General Utility}\\[-2pt]
    \boxed{\sup_{\pi\in\Pi}\;\mathcal{G}(p_1^\pi),
    \qquad
    \mathcal{G}(p):=\mathcal{F}(p)-\alpha\mathcal{D}(p\|q).}
  \end{gathered}
  \label{eq:go-objective}
\end{equation}
For a flow backbone, $\Pi=\{\pi_\theta:\theta\in\Theta\}$ is the family of velocity fields
represented by the model, with allowed parameter set $\Theta$ and fixed
source $p_0$.
For a reward $r:\mathcal{X}\to\R$, choosing
$\mathcal{F}(p)=\E_{X\sim p}[r(X)]$ and $\mathcal{D}=\DKL$ recovers
standard reward fine-tuning (mean maximizing). TFFT \citep{wang2026efficient} instead uses left- or right-CVaR of
$r(X)$, with quantile level $\beta\in(0,1)$, to optimize the lower or upper
reward tail by reformulating CVaR objective with an equivalent variational form.

\paragraph{Endpoint distribution correction.}
The distribution formulation extends to an $f$-\textit{divergence},
$\mathcal D=D_f$. If $p\ll q$, its density ratio
$w=dp/dq$ gives $p=wq$, meaning $p(A)=\int_A w(x)q(dx)$, and
$D_f(p\|q)=\E_q[f(w)]$ for a convex generator $f$ with $f(1)=0$.
KL is the particular choice $f(s)=s\log s$, with $0\log0=0$:
\[
  \DKL(p\|q)=\E_p[\log w]=\E_q[w\log w].
\]
For this formulation, restrict $\Pi$ to policies with $p_1^\pi\ll q$
and finite objective terms. Write $w_\pi:=dp_1^\pi/dq$ and define
$\mathcal{W}_\Pi:=\{w_\pi:\pi\in\Pi\}$, the set of ratios actually
produced by these policies. The objective depends on $\pi$ only through
its terminal law: $p_1^\pi=w_\pi q$ implies
$\mathcal{G}(p_1^\pi)=\mathcal{F}(w_\pi q)-\alpha\E_q[f(w_\pi(x))]$.
Conversely, every $w\in\mathcal{W}_\Pi$ comes from at least one admissible
policy, by definition. Both parameterizations therefore attain exactly the
same objective values. Taking suprema yields
\begin{equation}
  \begin{aligned}
    \sup_{\pi\in\Pi}\mathcal{G}(p_1^\pi)
    &=\sup_{w\in\mathcal{W}_\Pi}
      \left\{\mathcal{F}(wq)-\alpha\E_q[f(w)]\right\}\\
    &\leq\sup_{w\geq0,\;\E_q[w]=1}
      \left\{\mathcal{F}(wq)-\alpha\E_q[f(w)]\right\}.
  \end{aligned}
  \label{eq:setup-endpoint-objective}
\end{equation}
Every $w_\pi$ is nonnegative and satisfies $\E_q[w_\pi]=1$. The inequality
enlarges $\mathcal{W}_\Pi$ to all ratios satisfying these constraints, with
finite objective terms; absolute continuity alone does not guarantee that
every such ratio is realizable by the model.
An optimizer $w^\star$ of this relaxed problem, when it exists, defines
the target $p^\star=w^\star q$, which provides a connection between optimal distribution maximizing reward and pretrained model.

\begin{figure}[!ht]
  \centering
  \includegraphics[width=0.92\linewidth]{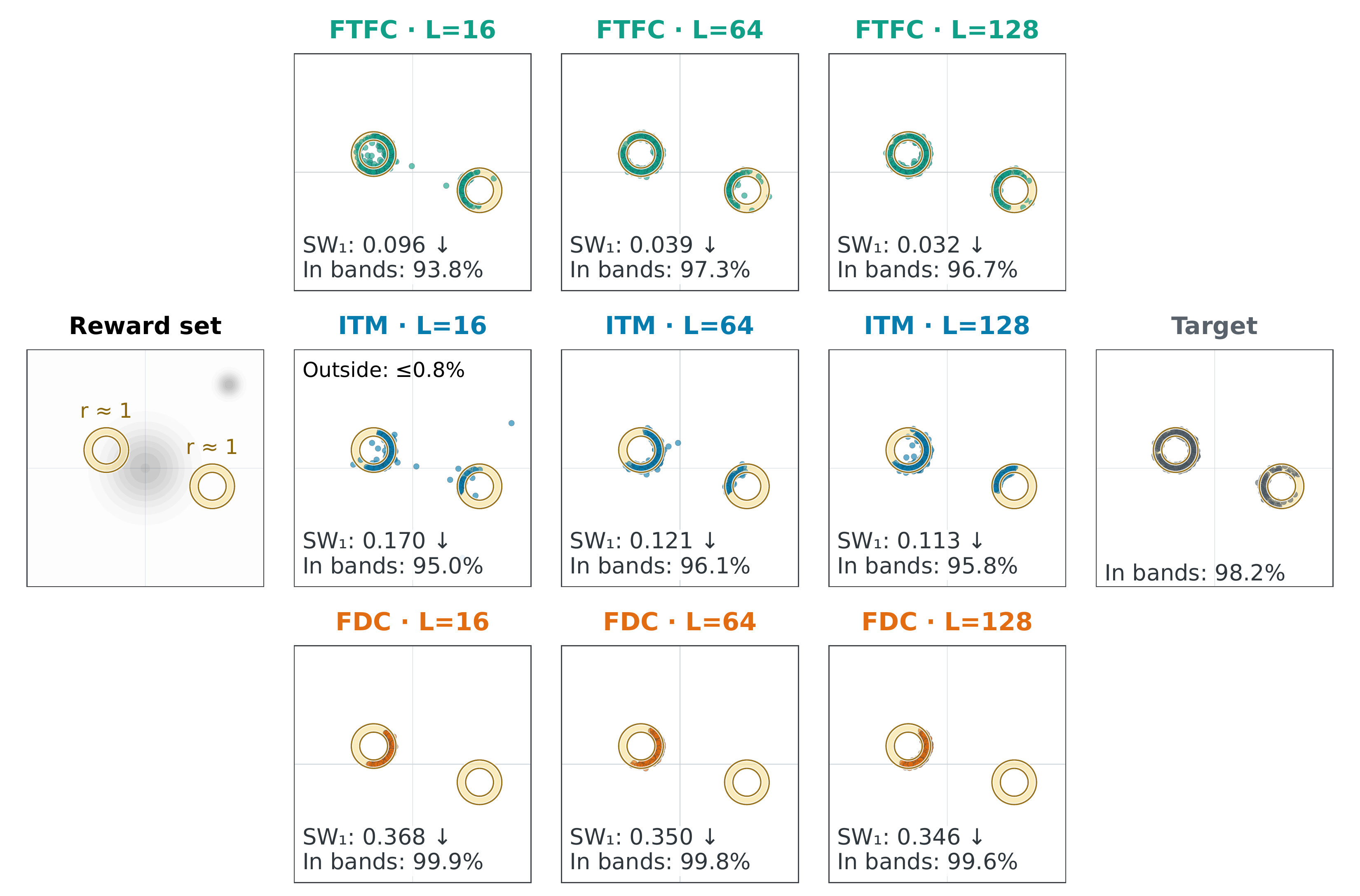}
  \caption{\textbf{Left-tail CVaR on two rings.}
  Gold bands correspond to $d(x)\leq0.15$. $L$ counts number of sampling steps.
  SW$_1$ is distance to the independent numerical target (right, ground truth). FTFC converges faster to target according to the utility function. Comparison is made with FDC \citep{de2025flow} and ITM \citep{potaptchik2025tilt}. See Section \ref{sec:motivation} for details.}
  \label{fig:toy2d-left-cvar}
\end{figure}

\section{Method: Fenchel Tilt Flow Control}
\label{sec:method}\label{ftfc:sec:method}
FTFC finds the target distribution \textit{before} fitting the generative map through dual optimization
(Figures~\ref{fig:ftfc-method}) by finding an near-optimal weight correction \ref{thm:ftfc-general-dual}.
We consider the relaxed endpoint problem in
Eq.\eqref{eq:setup-endpoint-objective}, with $\alpha>0$.

\subsection{Motivation}
\label{sec:motivation}

Consider two rings with reward $r(x)=1-d(x)^2$, where $d(x)$ is the
distance to the nearer ring. Since points near either ring receive similar
reward, high reward does not require covering both rings or their full
support. We optimize the lowest-$20\%$ CVaR with a KL penalty relative to
the pretrained distribution $q$, so the objective depends not only on
sample quality but also on how probability mass is allocated.

Figure~\ref{fig:toy2d-left-cvar} compares FTFC with Implicit Tilt Matching
(ITM; \citealp{potaptchik2025tilt}) and FDC (\citealp{de2025flow}), using
the same calibrated CVaR reward. FDC attains higher raw CVaR but misses the
right ring, while ITM covers both rings but concentrates on particular arcs.
FTFC instead most closely matches the numerical target in sliced-Wasserstein
distance. The example exposes the key motivation for FTFC: knowing which
samples are rewarding is not enough to determine how probability should be
distributed among them. FTFC resolves this ambiguity first, by selecting the
target distribution by cheap dual optimization, and only then fits the generator to that fixed target
using weighted denoising or flow matching~\eqref{eq:ftfc-native-population}.
Additional objectives are given in Appendix~\ref{app:toy2d-entropy}.

\subsection{Optimal correction in the closed form}
\label{ftfc:sec:setup}
When $\mathcal F$ is differentiable, let $g_p:\mathcal X\to\mathbb R$
denote its \emph{functional derivative at $p$} (first
variation). On a finite sample space, this is
$g_p(x_i)=\partial\mathcal F(p)/\partial p_i$.
The subscript $p$ identifies the distribution being perturbed, and
$g_p(x)$ is the derivative at point $x$.
Intuitively, it measures the gain from redistributing probability: moving a small mass from $y$
to $x$ changes utility at rate $g_p(x)-g_p(y)$.
To derive the optimal weight, perturb $w$ to $w+\varepsilon v$ in
\eqref{eq:setup-endpoint-objective}, keeping $\E_q[v]=0$ so the law
stays normalized. Since the distribution changes by $\varepsilon vq$,
the first variation is
\[
 \left.\frac{d}{d\varepsilon}\mathcal G((w+\varepsilon v)q)
 \right|_{\varepsilon=0}
 =\E_q\!\left[(g_{wq}-\alpha f'(w))v\right].
\]
At an interior optimum this vanishes for every admissible such $v$.
Hence the net marginal utility $g_{p^\star}-\alpha f'(w^\star)$ must
be a constant $\nu^\star$, $q$-almost everywhere; otherwise moving mass
from a lower value to a higher one would improve the objective. Thus
\begin{equation}
 \alpha f'(w^\star(x))=g_{p^\star}(x)-\nu^\star,
        \qquad p^\star=w^\star q.
 \label{eq:ftfc-weight-stationarity}
\end{equation}
Here $\nu^\star$ enforces normalization. For KL,
$f(w)=w\log w-w+1$ has derivative $f'(w)=\log w$.
Thus~\eqref{eq:ftfc-weight-stationarity} gives
$w^\star(x)=\exp((g_{p^\star}(x)-\nu^\star)/\alpha)$;
choosing $\nu^\star$ so that $\E_q[w^\star]=1$ yields
\begin{equation}
 \begin{gathered}[b]
 \text{\small\bfseries\color{TableAccent}KL Weights}\\[-2pt]
 \boxed{w^\star(x)=
   \frac{\exp(g_{p^\star}(x)/\alpha)}
        {\E_{X\sim q}[\exp(g_{p^\star}(X)/\alpha)]}.}
 \end{gathered}
 \label{eq:ftfc-marginal-gibbs}
\end{equation}
The denominator normalizes the weights so that
$\E_q[w^\star]=1$.
Table~\ref{tab:ftfc-divergence-responses} in
Appendix~\ref{app:ftfc-new-proofs} lists $f$, $f'$, together with the closed form
weight formulas for other divergences.
For expected reward, $\mathcal F(p)=\E_p[r]$ and $g_p(x)=r(x)$.
For nonlinear utility, $g_p$ plays the role of a \emph{marginal reward}:
moving probability toward larger values of $g_p$ improves utility to
first order, but these values depend on the current distribution.
Equation~\eqref{eq:ftfc-marginal-gibbs} therefore requires the marginal
reward at the unknown optimum $p^\star$. We next use dual optimization
to determine this reward and the normalized density ratios together.

\subsection{Fenchel duality and weight recovery}
\label{ftfc:sec:compiler}
To find the unknown marginal reward $g_{p^\star}$ from
Section~\ref{ftfc:sec:setup}, we now optimize over candidate reward
functions $g:\mathcal X\to\mathbb R$.
Here $\mathcal F$ is the \emph{same utility functional} as in
\eqref{eq:go-objective}. We assume it is proper,
closed, and concave. For each $g$, define $C(g)$ as the smallest
scalar offset satisfying
$\mathcal F(p)\leq C(g)+\E_p[g]$ for every normalized
$q$-supported law $p$. Equivalently,
\begin{equation}
 C(g):=\sup_{p'}\{\mathcal F(p')-\E_{p'}g\},\qquad
 \mathcal F(p)=\inf_g\{C(g)+\E_pg\}.
 \label{eq:ftfc-envelope}
\end{equation}
At a differentiable $p$, concavity makes the bound tight for
$g=g_p$. More generally, taking the tightest of these bounds recovers
$\mathcal F$ exactly by concave biconjugacy
\citep{rockafellar1970convex}. This representation alone does not find
$g_{p^\star}$: we must optimize $g$ and recover its normalized weights.

For any feasible $w\geq0$, $\E_qw=1$, candidate reward $g$, and scalar
label{eq:ftfc$\nu$ with finite terms, the regularized objective satisfies
\[
 \begin{aligned}
 \mathcal G(wq)
 &\leq C(g)+\E_q[wg-\alpha f(w)]\\
 &=C(g)+\nu+\E_q[(g-\nu)w-\alpha f(w)]\\
 &\leq\underbrace{C(g)+\nu+
   \alpha\E_q f_+^*((g-\nu)/\alpha)}_{D_f(g,\nu)}.
 \end{aligned}
\]
Appendix~\ref{app:ftfc-reward-bound} derives the full chain:
the first inequality uses the utility bound in \eqref{eq:ftfc-envelope}.
The equality uses $\nu(1-\E_qw)=0$, introducing the normalization multiplier.
The last inequality maximizes each nonnegative weight separately,
using $f_+^*(u):=\sup_{v\geq0}\{uv-f(v)\}$.
Minimizing this bound recovers the optimal value of the endpoint
relaxation in \eqref{eq:setup-endpoint-objective}, under the
strong-duality conditions in Theorem~\ref{thm:ftfc-general-dual}.
The weights attaining the last inequality are recovered by conjugacy,
with $\nu$ chosen to normalize them:
\begin{equation}
 \begin{gathered}[b]
 \text{\small\bfseries\color{TableAccent}Fenchel Dual}\\[-2pt]
 \boxed{\begin{aligned}
 D_f(g,\nu)&:=C(g)+\nu+\alpha\E_q f_+^*((g-\nu)/\alpha),\\
 (g^\star,\nu^\star)&\in\arg\min_{g,\nu}D_f(g,\nu),\\
 w^\star(x)&\in\partial f_+^*((g^\star(x)-\nu^\star)/\alpha),
          \qquad \E_qw^\star=1.
 \end{aligned}}
 \end{gathered}
 \label{eq:ftfc-functional-fdual}
\end{equation}
The subgradient allows zero weights. The multiplier $\nu$ enforces
normalization within the weight response; rescaling the weights
afterward is generally invalid beyond KL.
For example, half-Pearson uses $f(w)=(w-1)^2/2$ and
$w=(1+(g-\nu)/\alpha)_+$. This conjugate recovery is the DICE connection
\citep{lee2021optidice}, with endpoint normalization replacing
occupancy-flow constraints.
\begin{theorem}[Duality and optimality gap]
\label{thm:ftfc-general-dual}
For any feasible $p=wq$ and finite primal and dual terms, put
$u=(g-\nu)/\alpha$. Then
\begin{equation}
 \begin{aligned}
 D_f(g,\nu)-\mathcal G(p)
 &=\underbrace{C(g)+\E_pg-\mathcal F(p)}_{\text{supporting-reward slack}}\\[-2pt]
 &\quad+\underbrace{\alpha\E_q[f(w)+f_+^*(u)-uw]}_{\text{weight-response residual}}
 \ \geq0.
 \end{aligned}
 \label{eq:ftfc-certificate-main}
\end{equation}
Thus the gap bounds any improvement in $\mathcal G$ achievable by
another feasible endpoint law.
Under strong duality and attainment, both residuals vanish at an
optimum and \eqref{eq:ftfc-functional-fdual} recovers its weights.
A sufficient setting is a finite bank with positive reference masses,
finite continuous concave utility, and $f$ finite continuous and
strictly convex on $[0,\infty)$, differentiable on $(0,\infty)$,
and superlinear.
\end{theorem}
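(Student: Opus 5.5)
The identity \eqref{eq:ftfc-certificate-main} is pure algebra, so I would start there. Using $\mathcal G(p)=\mathcal F(p)-\alpha\E_q f(w)$ and $D_f(g,\nu)=C(g)+\nu+\alpha\E_q f_+^*(u)$, I subtract and then add and subtract $\E_p g=\E_q[wg]$. The constraint $\E_q w=1$ gives $\nu=\E_q[\nu w]$, and therefore
\[
\nu+\alpha\E_q f_+^*(u)+\alpha\E_q f(w)=\E_q[wg]+\alpha\E_q[f(w)+f_+^*(u)-uw],
\]
because $\alpha uw=(g-\nu)w$. Rearranging yields exactly the two bracketed terms. Finiteness of the primal and dual terms is what lets me split the expectations linearly without meeting $\infty-\infty$.

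Nonnegativity comes term by term. The supporting-reward slack is $\geq0$ by the definition of $C(g)$ in \eqref{eq:ftfc-envelope}, since $p$ is a normalized $q$-supported law. The integrand of the weight-response residual is $\geq0$ pointwise by the Fenchel--Young inequality for $f_+^*(u)=\sup_{v\geq0}\{uv-f(v)\}$: take $v=w(x)\geq0$. The consequence "the gap bounds any improvement" then follows. For any other feasible $p'$, nonnegativity gives $\mathcal G(p')\leq D_f(g,\nu)$, so $\mathcal G(p')-\mathcal G(p)\leq D_f(g,\nu)-\mathcal G(p)$.

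For the recovery statement, I assume strong duality with attainment: some $(g^\star,\nu^\star)$ and optimal $p^\star=w^\star q$ satisfy $D_f(g^\star,\nu^\star)=\mathcal G(p^\star)$. The gap is then zero and is a sum of two nonnegative terms, so both vanish. A zero weight-response residual means that, $q$-a.e., Fenchel--Young holds with equality. That is equivalent to $w^\star(x)\in\partial f_+^*(u^\star(x))$, where $f_+^*$ is the conjugate of $f+\iota_{[0,\infty)}$ and this function is closed convex. This is precisely the recovery rule in \eqref{eq:ftfc-functional-fdual}, and the normalization $\E_q w^\star=1$ is inherited from feasibility.

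The sufficient setting is the main obstacle, and I would handle it in finite dimensions. With a bank $\{x_i\}$ and masses $q_i>0$, the primal problem maximizes the continuous concave function $\mathcal F(\sum_i w_iq_i\delta_{x_i})-\alpha\sum_i q_if(w_i)$ over the compact simplex $\{w\geq0,\sum q_iw_i=1\}$, so a maximizer exists. Strict convexity of $f$ makes the penalty strictly concave, which gives uniqueness.

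For strong duality I would write the primal as $\sup_p\{\mathcal F(p)-\alpha D_f(p\|q)\}$, a sum of two concave functions on $\R^n$. The Fenchel--Rockafellar theorem then gives zero gap and dual attainment, provided a relative-interior qualification holds. It does: $\mathcal F$ is finite and continuous on the whole simplex, while the penalty's domain contains the relative interior point $w\equiv1$. The dual function is $\inf_g\{C(g)+\sup_p[\E_pg-\alpha D_f]\}$. Introducing the multiplier $\nu$ for $\sum q_iw_i=1$, the inner supremum equals $\inf_\nu\{\nu+\alpha\sum_iq_if_+^*((g_i-\nu)/\alpha)\}$ by one-dimensional LP duality, since Slater's condition holds at $w\equiv1$. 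This identifies the dual with $\min D_f$.

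Superlinearity of $f$ makes $f_+^*$ finite everywhere, so all dual terms are finite. Attainment in $\nu$ follows from coercivity: as $\nu\to-\infty$ the $f_+^*$ term grows, and as $\nu\to+\infty$ the linear $\nu$ term dominates, since $f_+^*$ is bounded below by $-f(0)$. Finally, differentiability of $f$ on $(0,\infty)$ plus strict convexity make $f_+^*$ differentiable, so $\partial f_+^*$ is single-valued and the recovered weights are well defined. The delicate point I expect to check is that the dual optimum in $g$ is attained. For this I would take $g^\star$ to be a supergradient of $\mathcal F$ at $p^\star$, which exists because $\mathcal F$ is finite concave on a neighborhood in the simplex's affine hull, or else pass to the boundary case via the closedness of $\mathcal F$.
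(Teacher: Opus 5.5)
Your derivation of the gap identity, its nonnegativity, the bound on improvements, and the recovery $w^\star\in\partial f_+^*(u^\star)$ from Fenchel--Young equality matches the paper's. The difference is in the sufficient setting.

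The paper argues constructively. It takes the unique maximizer $b^*$ on the simplex and shows it is interior when $f'_+(0)=-\infty$. Otherwise it extends $f$ affinely (by its tangent at $0$, or below a small positive ratio) to get a finite differentiable penalty $\bar J$. The subgradient sum rule then gives $0\in\partial(-\mathcal F_N+I_{\Delta_N})(b^*)+\nabla\bar J(b^*)$. This yields an explicit certificate $g_i^*=\alpha f'(b_i^*/a_i)$ (or $\alpha f'_+(0)$ at zero coordinates) with $\nu^*=0$, for which both residuals visibly vanish.

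You instead invoke Fenchel--Rockafellar with the relative-interior qualification at $w\equiv1$. That gives zero gap and attainment in $g$ in one step. You then dualize the normalization with a single linear equality multiplier and use coercivity for $\nu$. This is convex Lagrangian duality rather than ``LP duality'', but your Slater argument at $w\equiv1$ is correct.

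Your route avoids the case analysis and the tangent extension. The paper's route gives an explicit dual optimizer and shows that $\nu$ is redundant, since $C(g+c)=C(g)-c$ on normalized laws. It also shows exactly when zero weights can occur.

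One caution about your closing remark. You do not need a separate argument for attainment in $g$, because the qualification you already checked delivers it. The fallback you propose would also not work as stated, for two reasons:
\begin{itemize}
\item An arbitrary supergradient of $\mathcal F$ at $p^\star$ removes only the supporting-reward slack. The weight-response residual vanishes only for the particular supergradient selected by the optimality condition.
\item At a boundary $p^\star$ (possible when $f'_+(0)>-\infty$, e.g.\ half-Pearson), $\mathcal F$ is not defined on a neighbourhood in the affine hull. So the existence of a suitable finite supergradient must be derived from the optimality of $p^\star$, which is precisely the paper's sum-rule step.
\end{itemize}
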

At zero gap, $C(g^\star)+\E_{p^\star}g^\star=\mathcal F(p^\star)$:
the affine upper bound in \eqref{eq:ftfc-envelope} touches $\mathcal F$
at the recovered law $p^\star=w^\star q$. At a differentiable interior
optimum, their first variations agree, identifying $g^\star$ with
$g_{p^\star}$ up to an additive constant. Thus
\eqref{eq:ftfc-functional-fdual} determines the reward and law together.
Appendix~\ref{app:ftfc-new-proofs} proves the result.

\subsection{Target Optimization for Structured Utilities}
\label{ftfc:sec:practical-calibration}

The functional dual in~\eqref{eq:ftfc-functional-fdual} optimizes over an
arbitrary reward function $g$, which is generally impractical. For many
structured utilities, however, the unknown effective reward has only a
small number of parameters. We assume and utilize this structure to optimize the
target weights on the cached sample bank before fitting the generator.

\paragraph{Feature parametrized utilities.}
Consider
$\mathcal F(p)=\E_p[r]-\Psi(m_p)$ with $m_p=\E_p[\phi]$
where $\phi:\mathcal X\to\mathbb R^k$ contains the statistics relevant to
the utility and $\Psi$ is proper, closed, and convex. Its marginal reward is
\[
    g_p(x)=r(x)-\nabla\Psi(m_p)^\top\phi(x),
\]
so instead of optimizing an unrestricted $g$, it suffices to determine the
$k$ marginal costs $z=\nabla\Psi(m_p)$. Using
$\Psi^*(z)=\sup_m\{z^\top m-\Psi(m)\}$ gives
\begin{equation}
 \begin{gathered}[b]
 \text{\small\bfseries\color{TableAccent}Feature Dual}\\[-2pt]
 \boxed{\begin{aligned}
 \mathcal F(p)&=\inf_z\{\Psi^*(z)+\E_p[r-z^\top\phi]\},\\
 (z^\star,\nu^\star)&\in\arg\min_{z,\nu}
 \{\Psi^*(z)+\nu+\alpha\E_q f_+^*((r-z^\top\phi-\nu)/\alpha)\},\\
 g^\star&=r-z^{\star\top}\phi,\qquad
 z^\star\in\partial\Psi(\E_{p^\star}\phi).
 \end{aligned}}
 \end{gathered}
 \label{eq:ftfc-feature-dual}
\end{equation}
Thus target optimization reduces to solving for $(z^\star,\nu^\star)$
on cached samples. The resulting $g^\star$ is converted into normalized
target weights by~\eqref{eq:ftfc-functional-fdual}. Representative utilities
and their corresponding $(\phi,\Psi)$ are given in
Appendix~\ref{app:ftfc-utility-catalog}; numerical details are given in
Appendix~\ref{app:ftfc-solve-statistics}.

\paragraph{CVaR utilities.}
CVaR does not require moment features: its effective reward is determined
by a scalar threshold. Upper-CVaR follows the concave formulation above
with scalar threshold optimization. Lower-CVaR is convex in the output law
and therefore requires a separate argument. On a finite sample bank, we
show that a globally optimal lower-CVaR threshold can be chosen among the
observed rewards (Proposition~\ref{prop:ftfc-general-tail-knots}).
Searching these thresholds yields the target weights, which are then frozen
for the same generator-fitting stage. Benchmark-specific pseudo-rewards and
threshold objectives are given in Appendix~\ref{app:ftfc-tail-benchmarks}.

\begin{figure}[tb]
\centering
\begingroup
\setlength{\fboxsep}{9pt}
\setlength{\fboxrule}{0.5pt}
\fbox{\begin{minipage}{\dimexpr\linewidth-2\fboxsep-2\fboxrule\relax}
\normalsize
\noindent\makebox[\linewidth][s]{%
$\mathcal F$\hfill $\xrightarrow[\text{\tiny Eq.~$\eqref{eq:ftfc-functional-fdual}$}]{\text{\small$\min_{g,\nu}D_f(g,\nu)$}}$\hfill $(g^\star,\nu^\star)$\hfill $\xrightarrow[\text{\tiny Eq.~$\eqref{eq:ftfc-functional-fdual}$}]{\text{\small$w^\star\in\partial f_+^*((g^\star-\nu^\star)/\alpha)$}}$\hfill $p^\star=w^\star q$\hfill $\xrightarrow[\text{\tiny Eq.~$\eqref{eq:ftfc-native-population}$}]{\text{\small$\min_\theta\mathcal L(\theta;w^\star)$}}$\hfill $\widehat\pi$%
}
\end{minipage}}
\endgroup
\caption{\textbf{FTFC: select a target, then fit it.} The dual determines $(g^\star,\nu^\star)$; the DICE link recovers normalized endpoint weights, which are frozen for native fitting.}\label{fig:ftfc-method}
\end{figure}

\subsection{Weighted denoising and flow matching}
\label{ftfc:sec:realization}
Calibration in Section~\ref{ftfc:sec:practical-calibration} returns
weights $w$ defining the target $p=wq$. Freeze these weights and
initialize the generator from its pretrained parameters.
The calibrated reward affects the generator through the weighted
training loss. For each endpoint $X\sim q$, independently sample a
time $t\sim\mathcal U(0,1)$ and noise $\epsilon$. Weight the usual denoising
or flow-matching loss by $w(X)$. Since $p=wq$, this is equivalent to
training on the target distribution:
\begin{equation}
 \begin{aligned}
 \mathcal L(\theta;w)
 &=\E_{X\sim q,\,t,\epsilon}[w(X)\ell_\theta(X,t,\epsilon)]\\
 &=\E_{X\sim p,\,t,\epsilon}[\ell_\theta(X,t,\epsilon)].
 \end{aligned}
 \label{eq:ftfc-native-population}
\end{equation}
Here $\ell_\theta$ is the standard denoising or flow-matching loss
\citep{zhang2025energyweighted}:
\begin{equation}
 \begin{aligned}
 \ell_\theta(X,t,\epsilon)&=\|m_\theta(Y_t,t)-A_t\|^2,\\
 (Y_t,A_t)&=
 \begin{cases}
 (a_tX+\sigma_t\epsilon,\ \epsilon),&\text{diffusion},\quad \epsilon\sim\mathcal N(0,I),\\
 ((1-t)\epsilon+tX,\ X-\epsilon),&\text{flow},\quad \epsilon\sim p_0.
 \end{cases}
 \end{aligned}
 \label{eq:ftfc-native-labels}
\end{equation}
The respective optimal predictors are the target noise predictor and
velocity field, since conditional regression gives
\begin{equation}
 m_p(y,t)=\E_p[A_t\mid Y_t=y]
       =\frac{\E_q[w(X)A_t\mid Y_t=y]}{\E_q[w(X)\mid Y_t=y]}.
 \label{eq:ftfc-conditional-main}
\end{equation}
These identities require finite conditional moments, $\sigma_t>0$,
and a positive denominator. Fresh noise preserves the chosen source law; reweighting
cached generation trajectories generally does not.

Equation~\eqref{eq:ftfc-conditional-main} justifies the weighted
regression used by FTFC. For diffusion models, the same calibrated
endpoint distribution also admits an alternative realization through
score guidance, which we detail below.

\Needspace{12\baselineskip}
\begin{theorem}[Score correction under endpoint reweighting]
\label{thm:ftfc-weight-guidance}
Let $w^\star\geq0$, $\E_qw^\star=1$ be recovered by
\eqref{eq:ftfc-functional-fdual}, and set $p^\star=w^\star q$.
For $Y_t=a_tX+\sigma_t\epsilon$ with independent $\epsilon\sim\mathcal N(0,I)$
and $\sigma_t>0$, denote the noisy densities by
$\widetilde q_t,\widetilde p_t^\star$ and define
$h_t(y)=\E_q[w^\star(X)\mid Y_t=y]$.
Assume $0<h_t<\infty$, $\E_{t,\widetilde q_t}|\log h_t|<\infty$,
and differentiation under the integral is valid. Writing
$s_q=\nabla_y\log\widetilde q_t$ and
$\epsilon_q=\E_q[\epsilon\mid Y_t=y]$ (and analogously for $p^\star$), we have
\begin{equation}
 \begin{aligned}
 \widetilde p_t^\star(y)&=h_t(y)\widetilde q_t(y),\\
 s_{p^\star}(y,t)&=s_q(y,t)+\nabla_y\log h_t(y),\\
 \epsilon_{p^\star}(y,t)&=\epsilon_q(y,t)-\sigma_t\nabla_y\log h_t(y).
 \end{aligned}
 \label{eq:ftfc-bridge-main}
\end{equation}
Over integrable scalar functions, the in-sample loss has the unique
minimizer
\begin{equation}
 \boxed{\begin{aligned}
 \mathcal L_{\rm IGL}(\ell;w^\star)
 &:=\E_{t,X\sim q,\epsilon}[w^\star(X)e^{-\ell(Y_t,t)}+\ell(Y_t,t)],\\
 \ell^\star(y,t)&=\log h_t(y),\qquad
 \nabla_y\ell^\star=s_{p^\star}-s_q.
 \end{aligned}}
 \label{eq:ftfc-igl-main}
\end{equation}
\end{theorem}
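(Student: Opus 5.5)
The proof has three parts: a density identity, differentiation of that identity, and a pointwise convex minimization. The Fenchel-dual origin of $w^\star$ plays no role beyond $w^\star\geq0$ and $\E_qw^\star=1$, so we treat $w^\star$ as a generic normalized weight.

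\textbf{Step 1 (noisy density).} Let $k_t(y\mid x)=\mathcal N(y;a_tx,\sigma_t^2I)$ denote the Gaussian transition kernel, which is well defined because $\sigma_t>0$. Then
\[
\widetilde p_t^\star(y)=\int k_t(y\mid x)\,w^\star(x)\,q(dx).
\]
Bayes' rule gives the conditional law of $X$ given $Y_t=y$ under $q$ as $k_t(y\mid x)\,q(dx)/\widetilde q_t(y)$. Hence the integral above equals $\widetilde q_t(y)\,\E_q[w^\star(X)\mid Y_t=y]=h_t(y)\widetilde q_t(y)$. Since $0<h_t<\infty$, we may take logarithms, and differentiating in $y$ (using the assumed interchange of derivative and integral) gives $s_{p^\star}=s_q+\nabla_y\log h_t$.

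\textbf{Step 2 (noise predictors).} We use Tweedie's identity for the Gaussian corruption $Y_t=a_tX+\sigma_t\epsilon$, which for any endpoint law reads $\E[\epsilon\mid Y_t=y]=-\sigma_t\nabla_y\log\widetilde p_t(y)$. It follows by differentiating $\log\int k_t(y\mid x)\,p(dx)$, using $\nabla_y k_t=-\frac{y-a_tx}{\sigma_t^2}k_t$, and then writing $y-a_tx=\sigma_t\epsilon$. Applying it to $q$ and to $p^\star$ and subtracting turns the score relation from Step 1 into $\epsilon_{p^\star}=\epsilon_q-\sigma_t\nabla_y\log h_t$.

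\textbf{Step 3 (IGL minimizer).} We condition on $(t,Y_t)$ via the tower property:
\[
\mathcal L_{\rm IGL}(\ell;w^\star)=\E_{t,\,Y_t\sim\widetilde q_t}\bigl[h_t(Y_t)e^{-\ell(Y_t,t)}+\ell(Y_t,t)\bigr].
\]
For fixed $(y,t)$, the map $\ell\mapsto h e^{-\ell}+\ell$ with $h>0$ is strictly convex. Its derivative $-he^{-\ell}+1$ vanishes only at $\ell=\log h$. Hence $\ell^\star=\log h_t$ is the pointwise unique minimizer, and it is integrable because $\E|\log h_t|<\infty$. Its loss is finite, namely $\E[1+\log h_t]$.

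To upgrade pointwise to global uniqueness, write the excess loss of any integrable competitor $\ell$ as $\E[h(e^{-\ell}-e^{-\ell^\star})+(\ell-\ell^\star)]$. This quantity is the expectation of a nonnegative Bregman-type integrand that is zero only where $\ell=\ell^\star$. Therefore any other minimizer must agree with $\ell^\star$ almost everywhere under $dt\,\widetilde q_t$. Finally, the gradient claim $\nabla_y\ell^\star=s_{p^\star}-s_q$ is exactly the score relation from Step 1.

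\textbf{Main obstacle.} The delicate part is Step 3: making sure the loss is well defined, possibly equal to $+\infty$, for every integrable $\ell$, and that uniqueness holds in the almost-everywhere sense. One must also check that $\E[w^\star e^{-\ell}]$ reduces to $\E[h_te^{-\ell}]$ even when it is infinite; this holds by Tonelli, since the integrand is nonnegative. The regularity needed for Steps 1 and 2 is covered by the differentiation-under-the-integral assumption.
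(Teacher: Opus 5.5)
Your proposal is correct and follows essentially the same route as the paper. The paper also obtains $\widetilde p_t^\star=h_t\widetilde q_t$ by conditioning under the reference joint law (phrased with test functions rather than your explicit kernel integral), log-differentiates it, applies the Gaussian Tweedie identity under both $p^\star$ and $q$, and conditions the IGL loss on $(t,Y_t)$ so that the excess loss is $\E_{t,\widetilde q_t}[e^{-d}-1+d]$ with $d=\ell-\log h_t$, which is exactly your Bregman-type integrand; your remarks on Tonelli and on competitors with infinite loss are valid refinements of that argument.
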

The result holds for any normalized nonnegative weights, including
general-$f$ responses with zeros. Weighted denoising
\eqref{eq:ftfc-native-population} learns the same target predictor
directly; Algorithm~\ref{alg:method} uses this route.
Appendix~\ref{app:ftfc-overall} proves the theorem and gives the
corresponding correction for Gaussian flow interpolations.
Under exact realization of the weighted regression target, the fitted
generator recovers the selected endpoint law $p^\star=w^\star q$ and
therefore inherits its endpoint optimality; we formalize this result and
its regularity conditions in Appendix~\ref{app:target-realization}.

\begin{algorithm}[!tb]
\caption{Fenchel Tilt Flow Control (FTFC)}
\label{alg:method}
\begin{algorithmic}[1]
\Require Pretrained model $\theta_0$ with endpoint distribution $q$;
utility $\mathcal F_N$; divergence $f$; regularization $\alpha$;
sample bank size $N$; training steps $K$

\State Sample and cache endpoints $\{x_i\}_{i=1}^N\sim q$ and
evaluate their rewards and utility features

\Statex \textbf{Stage 1: Calibrate the target distribution}
\State Compute the supporting rewards $\{\widehat g_i\}_{i=1}^N$
from the utility
\Comment{Sec.~\ref{ftfc:sec:practical-calibration}}
\State Compute normalized density-ratio weights
\Comment{Eq.~\eqref{eq:ftfc-functional-fdual}}
\Statex \hspace{\algorithmicindent}$\displaystyle
\widehat w_i
\in
\partial f_+^*\!\left(
    \frac{\widehat g_i-\widehat\nu_{h_i}}{\alpha}
\right),
\qquad
\sum_{i\in I_h} a_i\widehat w_i=\rho_h$
\State Freeze $\widehat w$

\Statex \textbf{Stage 2: Fit the generator}
\State $\theta\gets\theta_0$
\For{$k=1,\ldots,K$}
    \State Sample a minibatch of cached endpoints $x_i\sim q$
    \State Sample fresh $t$ and noise $\epsilon$, and construct the native
input-target pair $(Y_t,A_t)$ as in~\eqref{eq:ftfc-native-labels}
    \Comment{Eq.~\eqref{eq:ftfc-native-labels}}
    \State $\displaystyle
L(\theta)\gets\frac{1}{B}\sum_{i\in\mathcal B}
\widehat w_i
\|m_\theta(Y_t^{(i)},t)-A_t^{(i)}\|^2$
    \Comment{Eq.~\eqref{eq:ftfc-native-population}}
    \State $\theta\gets\theta-\eta_k\nabla_\theta L(\theta)$
\EndFor
\State \Return $\theta$
\end{algorithmic}
\end{algorithm}

\section{Experiments}
\label{sec:experiments}
\subsection{Molecular generation on QM9}
\label{sec:experiments-qm9}

We adapt FlowMol \citep{dunn2024flowmol} on QM9
\citep{ramakrishnan2014qm9} using negative GFN1-xTB energy
\citep{friede2024dxtb}, with an upper-superquantile objective targeting
the best \(0.2\%\) of rewards. We compare FTFC with pretrained FlowMol,
expected-reward Adjoint Matching (AM; \citealt{domingo2025adjoint}),
FDC \citep{de2025flow}, and TFFT \citep{wang2026efficient} under the
shared ODE evaluation in Table~\ref{tab:qm9}.

\begin{table}[!t]
  \centering
  \caption{\textbf{QM9 / FlowMol molecular generation.}
  ODE evaluation with $50{,}000$ attempts per seed.
  Rewards are in Hartree; bold denotes the best observed value.}
  \label{tab:qm9}

  \begingroup
  \resultstablestyle
  \setlength{\tabcolsep}{2.2pt}
  \renewcommand{\arraystretch}{1.05}

  \begin{tabularx}{\linewidth}{
    @{}
    >{\raggedright\arraybackslash}p{0.13\linewidth}
    *{7}{>{\centering\arraybackslash}X}
    @{}
  }
    \toprule
    \textbf{Method}
    & \shortstack{\textbf{Train}\\\textbf{days} $\downarrow$}
    & \shortstack{\textbf{Mean}\\$R$ $\uparrow$}
    & \shortstack{\textbf{SQ}$_{.998}$\\$\uparrow$}
    & \shortstack{\textbf{Upper}\\$10\%$ $\uparrow$}
    & \shortstack{\textbf{Valid.}\\(\%) $\uparrow$}
    & \shortstack{\textbf{SA}\\$\downarrow$}
    & \shortstack{\textbf{Topology}\\$R_{\mathrm{valid}}$ $\uparrow$} \\
    \midrule

    Pretrained
    & $0.0$
    & $28.6$
    & $38.7$
    & $34.1$
    & $\bestvalue{96.4}$
    & $4.6$
    & $0.8$ \\

    AM
    & $0.5$
    & $28.4$
    & $38.1$
    & $33.5$
    & $96.4$
    & $4.7$
    & $0.7$ \\

    FDC
    & $1.4$
    & $28.6$
    & $38.4$
    & $33.6$
    & $95.1$
    & $\bestvalue{4.4}$
    & $0.6$ \\

    TFFT
    & $0.6$
    & $28.9$
    & $39.2$
    & $34.4$
    & $94.2$
    & $4.7$
    & $0.7$ \\

    \midrule
    \ourmethod{FTFC}
    & $\bestvalue{0.04}$
    & $\bestvalue{29.1}\uncertainty{0.1}$
    & $\bestvalue{39.4}\uncertainty{0.3}$
    & $\bestvalue{35.5}\uncertainty{0.1}$
    & $96.1\uncertainty{1.8}$
    & $\bestvalue{4.4}\uncertainty{0.1}$
    & $\bestvalue{0.8}\uncertainty{0.0}$ \\

    \bottomrule
  \end{tabularx}
  \endgroup
\end{table}

\paragraph{Tail reward and molecular geometry.}
FTFC attains the highest observed mean and both upper-tail rewards.
It also exceeds every fine-tuned baseline on topology reward, which
measures connectivity preservation and relaxation strain
\citep{kotani2026atomic}, while remaining competitive in synthetic
accessibility \citep{ertl2009synthetic}. Thus FTFC offers a better
observed reward--geometry trade-off than these fine-tuned alternatives.
Including sample-bank acquisition, FTFC takes about one hour per seed,
versus $12-35$ hours for the baselines.

These runs show that one fitting stage with fixed endpoint weights can
improve a nonlinear tail objective, without repeated online reward
optimization.
These comparisons use the recorded method budgets, with three FTFC
seeds and one run per baseline. Appendix~\ref{app:qm9} gives the
protocol, metric definitions, and timing scope; it also documents
TFFT's inactive tail-reward gradients in this reconstruction.

\subsection{Text-to-image generation}
\label{sec:experiments-images}
We evaluate FTFC on the Stable Diffusion v1.5 benchmark of \citet{wang2026efficient}. ImageReward is the sole fine-tuning reward: EXP-FT optimizes its expectation, while FDC and L-TFFT target lower-tail CVaR at $\beta=0.2$ with KL regularization. CLIP \citep{radford2021clip}, HPSv2.1 \citep{wu2023hpsv2}, and DreamSim \citep{fu2023dreamsim} are evaluation-only metrics. We generate ten images for each of 100 released prompts; Table~\ref{tab:stable-diffusion} reports the results, and Appendix~\ref{app:stable-diffusion} gives settings, metric definitions, and uncertainty estimates.
\begin{table}[!ht]
  \centering
  \caption{\textbf{Stable Diffusion v1.5 image generation.}
  Evaluation over $1{,}000$ images. Values are mean $\pm$ standard error
  across five seeds; bold denotes the best observed mean.}
  \label{tab:stable-diffusion}

  \begingroup
  \resultstablestyle
  \setlength{\tabcolsep}{2.2pt}
  \renewcommand{\arraystretch}{1.05}

  \begin{tabularx}{\linewidth}{
    @{}
    >{\raggedright\arraybackslash}p{0.14\linewidth}
    *{6}{>{\centering\arraybackslash}X}
    @{}
  }
    \toprule
    & \multicolumn{2}{c}{\textbf{ImageReward}}
    & \multicolumn{2}{c}{\textbf{Alignment}}
    & \textbf{Diversity}
    & \textbf{Train} \\
    \cmidrule(lr){2-3}
    \cmidrule(lr){4-5}
    \cmidrule(lr){6-6}
    \cmidrule(l){7-7}

    \textbf{Method}
    & \shortstack{\textbf{Mean}\\$\uparrow$}
    & \shortstack{\textbf{L-CVaR}$_{.2}$\\$\uparrow$}
    & \shortstack{\textbf{CLIP}\\$\uparrow$}
    & \shortstack{\textbf{HPSv2.1}\\$\uparrow$}
    & \shortstack{\textbf{DreamSim}\\var. $\uparrow$}
    & \shortstack{\textbf{days}\\$\downarrow$} \\
    \midrule

    Pretrained
    & $0.3\uncertainty{0.1}$
    & $-1.0\uncertainty{0.0}$
    & $0.28\uncertainty{0.00}$
    & $0.26\uncertainty{0.00}$
    & $0.33\uncertainty{0.01}$
    & $0.0$ \\

    EXP-FT
    & $0.7\uncertainty{0.0}$
    & $-0.5\uncertainty{0.0}$
    & $0.28\uncertainty{0.00}$
    & $\bestvalue{0.28}\uncertainty{0.00}$
    & $0.30\uncertainty{0.01}$
    & $3.4$ \\

    FDC
    & $0.7\uncertainty{0.1}$
    & $-0.5\uncertainty{0.0}$
    & $0.28\uncertainty{0.00}$
    & $0.27\uncertainty{0.00}$
    & $0.30\uncertainty{0.01}$
    & $3.6$ \\

    L-TFFT
    & $0.8\uncertainty{0.1}$
    & $-0.5\uncertainty{0.0}$
    & $0.28\uncertainty{0.00}$
    & $0.27\uncertainty{0.00}$
    & $0.30\uncertainty{0.01}$
    & $2.8$ \\

    \midrule
    \ourmethod{FTFC}
    & $\bestvalue{0.9}\uncertainty{0.1}$
    & $\bestvalue{-0.4}\uncertainty{0.0}$
    & $\bestvalue{0.28}\uncertainty{0.00}$
    & $0.27\uncertainty{0.00}$
    & $\bestvalue{0.35}\uncertainty{0.01}$
    & $\bestvalue{0.2}$ \\

    \bottomrule
  \end{tabularx}

  \par\vspace{1pt}
  {\footnotesize\color{black!65}
  Training time is in A100 GPU-days; evaluation details are given in
  Appendix~\ref{app:stable-diffusion}.}
  \endgroup
\end{table}\vspace{-2pt}
\paragraph{Reward improvement without diversity collapse.}
FTFC raises mean and lower-tail ImageReward while keeping DreamSim variance at the pretrained level; all fine-tuned baselines reduce it. Thus, the gain is not explained by concentrating probability on a narrow set of high-reward outputs. FTFC instead selects and fits a fixed redistribution of pretrained mass. EXP-FT and L-TFFT improve reward with a larger diversity shift, whereas FTFC obtains this trade-off in one frozen-weight fitting stage.
\paragraph{Qualitative comparison.}
Figure~\ref{fig:stable-diffusion-qualitative} shows the lowest-, median-, and highest-reward samples for the prompt \textit{footage of an astronaut in a tropical beach}. Fine-tuning improves the worst samples but can alter the output distribution: L-TFFT, for example, favors saturated, fantastical skies despite high ImageReward. FTFC improves low-reward samples while retaining greater visual variation, consistent with its DreamSim diversity. Appendix~\ref{app:stable-diffusion} provides details.
\begin{figure}[!ht]
  \centering
  \includegraphics[width=\linewidth]{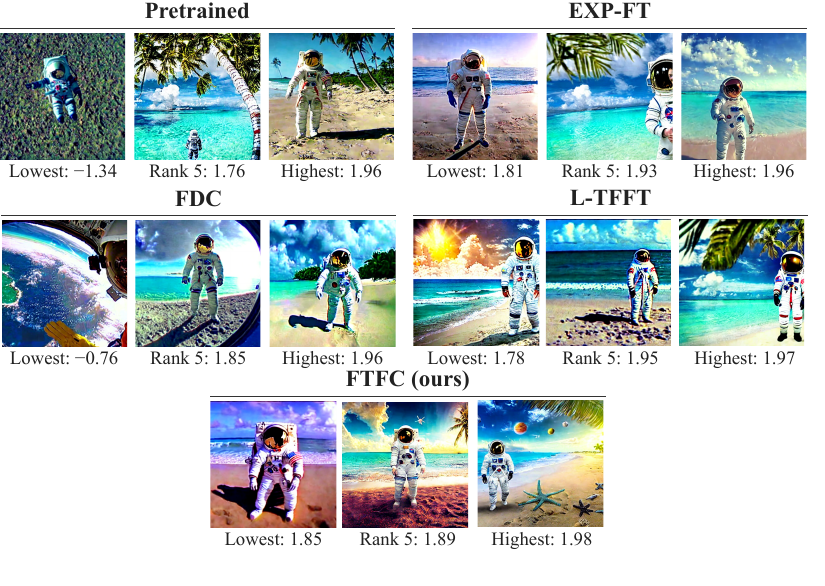}\vspace{-6pt}
  \caption{\textbf{Qualitative Stable Diffusion v$1.5$ comparison.}
  The prompt is \textit{footage of an astronaut in a tropical beach}.
  Each method panel shows ranks $1$ (lowest), $5$, and $10$ (highest), corresponding to ImageReward score. See Section \ref{sec:experiments-images}.}
  \label{fig:stable-diffusion-qualitative}
\end{figure}

\subsection{Molecular design on GEOM-Drugs}
\label{sec:experiments-molecules}
We evaluate FlowMol3 \citep{dunn2025flowmol3} on GEOM-Drugs \citep{axelrod2022geom}, independently reconstructing the energy-guided setting of \citet{wang2026efficient}. FDC, R-TFFT, and FTFC target the upper $10\%$ of negative GFN1-xTB rewards; EXP-FT uses AM to maximize expected reward. All methods share the backbone, reward oracle, and three-seed evaluation in Table~\ref{tab:geomdrugs}.
\paragraph{Preserving molecular quality during adaptation.}
FTFC has the highest observed graph validity and topology reward, with energy rewards near pretrained and above EXP-FT and R-TFFT. FDC attains higher mean and tail energy rewards but lowers topology reward through more protocol failures and greater relaxation strain, showing that higher energy reward can accompany worse geometry. FTFC avoids the larger degradation of the fine-tuned baselines; its small edge over pretrained is not statistically established. Figure~\ref{fig:geomdrugs-random-samples} shows one threshold-selected molecular panel, with further examples and protocol details in Appendix~\ref{app:geomdrugs-random-samples}.

\begin{table}[!ht]
  \centering
  \caption{\textbf{GEOM-Drugs / FlowMol3 molecular design.}
  Evaluation over three seeds and $2{,}000$ molecules per seed.
  Bold denotes the best observed mean.}
  \label{tab:geomdrugs}

  \begingroup
  \resultstablestyle
  \setlength{\tabcolsep}{2.2pt}
  \renewcommand{\arraystretch}{1.05}

  \begin{tabularx}{\linewidth}{
    @{}
    >{\raggedright\arraybackslash}p{0.17\linewidth}
    *{5}{>{\centering\arraybackslash}X}
    @{}
  }
    \toprule
    \textbf{Method}
    & \shortstack{\textbf{Mean}\\$R$ $\uparrow$}
    & \shortstack{\textbf{R-CVaR}\\$_{0.9}$ $\uparrow$}
    & \shortstack{\textbf{Valid.}\\(\%) $\uparrow$}
    & \shortstack{\textbf{SA}\\$\downarrow$}
    & \shortstack{\textbf{Topology}\\$R_{\mathrm{valid}}$ $\uparrow$} \\
    \midrule

    Pretrained
    & $65.9\uncertainty{0.1}$
    & $94.9\uncertainty{0.2}$
    & $99.8\uncertainty{0.0}$
    & $7.5\uncertainty{0.01}$
    & $0.95\uncertainty{0.0}$ \\

    EXP-FT
    & $67.9\uncertainty{0.8}$
    & $94.7\uncertainty{1.5}$
    & $99.8\uncertainty{0.2}$
    & $7.3\uncertainty{0.03}$
    & $0.93\uncertainty{0.0}$ \\

    FDC
    & $69.2\uncertainty{2.4}$
    & $\bestvalue{97.9}\uncertainty{1.9}$
    & $99.6\uncertainty{0.2}$
    & $7.2\uncertainty{0.01}$
    & $0.93\uncertainty{0.0}$ \\

    R-TFFT
    & $70.6\uncertainty{0.8}$
    & $90.8\uncertainty{0.9}$
    & $99.8\uncertainty{0.2}$
    & $7.3\uncertainty{0.03}$
    & $0.93\uncertainty{0.0}$ \\

    \midrule
    \ourmethod{FTFC}
    & $\bestvalue{75.3}\uncertainty{0.2}$
    & $95.0\uncertainty{0.3}$
    & $\bestvalue{99.9}\uncertainty{0.1}$
    & $\bestvalue{7.1}\uncertainty{0.01}$
    & $\bestvalue{0.97}\uncertainty{0.0}$ \\

    \bottomrule
  \end{tabularx}

  \par\vspace{2pt}
  {\footnotesize\color{black!65}
  Mean $\pm$ standard deviation over three seeds. More details in Appendix~\ref{app:geomdrugs}.}

  \endgroup
\end{table}

\paragraph{Ablating the $f$-divergence penalty.}
We vary only the $f$-divergence used to regularize FTFC molecular target Eq. \eqref{eq:setup-endpoint-objective} and Eq.\eqref{eq:ftfc-feature-dual}, while keeping the utility, sample bank, and generator-fitting procedure fixed. Table~\ref{tab:f-penalty-ablation} reports mean reward, upper-tail CVaR, and topology $R_{\mathrm{valid}}$. Mean reward captures typical energetic quality, while upper-tail CVaR asks whether the method still places mass on the rare, especially favorable molecules that matter in screening. Topology reward is a necessary complement: a high-reward conformation is practically useful only when its bonding pattern and relaxed geometry remain sound.

\begin{wrapfigure}{r}{0.52\linewidth}
  \vspace{-6pt}
  \centering
  \includegraphics[width=\linewidth]{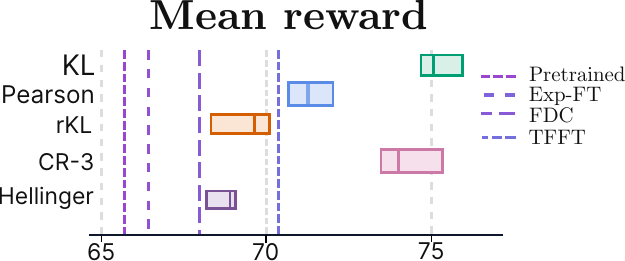}
  \caption{Reward comparison for the $f$-divergence ablation on GEOM-Drugs. Dashed markers denote the baseline scores.}
  \label{fig:geomdrugs-ablation}
  \vspace{-6pt}
\end{wrapfigure}

Table~\ref{tab:f-penalty-ablation} shows that the choice of $f$-divergence has its clearest effect on the upper tail. Cressie-Read-3 achieves the strongest upper-tail CVaR, followed by squared Hellinger and KL, while Half-Pearson and Reverse KL trail behind. Thus, when the objective is to enrich the pool of especially favorable molecules, Cressie--Read-3 is the preferred choice: it combines the best tail score with a competitive mean reward and topology validity. KL instead gives the highest mean reward and topology validity, but its upper-tail CVaR is lower than Cressie--Read-3. Reverse KL is weakest on both mean reward and upper-tail CVaR, while Half-Pearson and squared Hellinger remain close to KL in the tail. Overall, the results show that the $f$-divergence is a meaningful control knob for prioritizing elite-molecule quality: Cressie--Read-3 favors the upper tail, whereas KL favors average reward and validity. The appendix gives the divergence-specific weight maps and their connection to the FTFC dual.
\begin{table}[!t]
\centering
  \caption{Ablation of the $f$-divergence penalty in FTFC on GEOM-Drugs. Values are mean $\pm$ standard deviation over three seeds.}
  \label{tab:f-penalty-ablation}
  \begingroup
  \small
  \setlength{\tabcolsep}{4pt}
  \renewcommand{\arraystretch}{1.05}
  \begin{tabularx}{\linewidth}{
    @{}
    >{\raggedright\arraybackslash}p{0.28\linewidth}
    *{3}{>{\centering\arraybackslash}X}
    @{}
  }
    \toprule
    \textbf{$f$ penalty}
    & \shortstack{\textbf{Mean}\\reward $\uparrow$}
    & \shortstack{\textbf{Upper-tail}\\CVaR $\uparrow$}
    & \shortstack{\textbf{Topology}\\$R_{\mathrm{valid}}$ $\uparrow$} \\
    \midrule
    KL
    & $\bestvalue{75.3}\uncertainty{0.2}$
    & $95.0\uncertainty{0.1}$
    & $\bestvalue{0.97}\uncertainty{0.0}$ \\
    Half-Pearson
    & $71.4\uncertainty{0.4}$
    & $94.9\uncertainty{0.1}$
    & $0.95\uncertainty{0.0}$ \\
    Reverse KL
    & $68.0\uncertainty{0.4}$
    & $93.8\uncertainty{0.1}$
    & $0.95\uncertainty{0.0}$ \\
    Cressie-Read-3
    & $73.2\uncertainty{0.7}$
    & $\bestvalue{96.3}\uncertainty{0.1}$
    & $0.96\uncertainty{0.0}$ \\
    Squared Hellinger
    & $69.0\uncertainty{0.2}$
    & $95.1\uncertainty{0.3}$
    & $0.95\uncertainty{0.0}$ \\
    \bottomrule
  \end{tabularx}
  \endgroup
\end{table}
\begin{figure}[!htbp]  
\centering  
\includegraphics[width=0.68\linewidth]{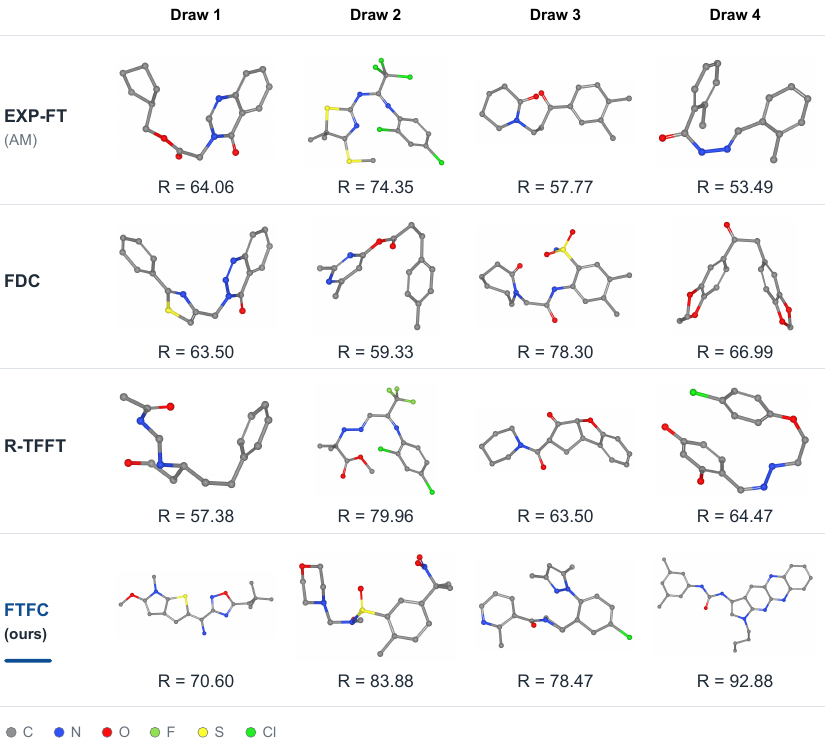}  
\caption{\textbf{Molecular examples with one threshold-selected panel.}  
FTFC Draw~2 is the first new sample with measured reward \(R\geq80\); the other panels show unselected draws. This selection is illustrative and does not establish comparative performance.  
Labels give the measured, unscaled GFN1 reward \(R=-E_{\mathrm{GFN1}}\). Appendix~\ref{app:geomdrugs-random-samples} specifies the protocol and number of candidates.}  
\label{fig:geomdrugs-random-samples}
\end{figure}

\FloatBarrier

\paragraph{Molecular fitting in FTFC.}
FTFC samples complete molecules at their target masses, preserving coordinates, atom types, formal charges, and bonds. FlowMol3 retains its native time and modality weights, categorical masks, fake-atom handling, and stochastic self-conditioning; fresh-prior fitting uses the native objective without reward-gradient or validity losses. Thus frozen endpoint weights define the target without changing the per-molecule weighting.

\section{Conclusion \& Limitations}
\label{sec:conclusion}
We introduced Fenchel Tilt Flow Control (FTFC), a distribution-correction approach to utility-based generative fine-tuning. The key idea is to separate \emph{which distribution to generate} from \emph{how to generate it}: FTFC first optimizes normalized density-ratio weights defining the target distribution, then freezes them and fits the generator with its native denoising or flow-matching objective. This enables nonlinear utilities and general $f$-divergence penalties without repeated reward optimization during generator training. Across image and molecular generation tasks, FTFC achieves strong performance with substantially lower training time. 

FTFC remains limited by the coverage of the pretrained distribution and its assumptions on finite sample bank. Moreover, the theoretical target-optimization guarantees do not account for finite-bank estimation or imperfect generator fitting, and the general dual requires suitable concavity and regularity conditions. Extending FTFC beyond fixed sample banks and to broader nonconcave utilities are promising directions.

\FloatBarrier

\section*{Reproducibility Statement}

The Appendix contains the details required to reproduce our results, including dataset and checkpoint information, preprocessing, evaluation protocols, model and optimizer configurations, all hyperparameters, training budgets, random seeds, and metric definitions. It also gives the relevant derivations, background, details on baselines, and implementation-specific settings used in the experiments. 

\section*{Ethics Statement}

This work uses publicly available datasets and pretrained models and does not involve human subjects, personal data, or human-subject experiments. The method is intended for research on generative modeling.

\section*{AI Use Statement}

AI tools were used only to polish the manuscript text and improve grammar and readability. All technical content, experiments, and final wording were reviewed and verified by the authors.

\ificlrhascitations  
\clearpage  
\bibliography{references}  
\bibliographystyle{iclr2027_conference}
\fi
\appendix
\section{Additional Background Derivations}
\label{app:proofs}
This appendix derives the utility dual, its finite-bank form, and the
transfer from endpoint weights to a generative model.

\subsection{Utility duality and weight recovery}
\label{app:ftfc-new-proofs}
\paragraph{From the original objective to the dual bound.}
\label{app:ftfc-reward-bound}
Let $p\ll q$ with density ratio $w=dp/dq\geq0$, $\E_qw=1$, and
assume the displayed terms are finite. Change of measure and the
definition of the $f$-divergence give, respectively,
\[
 \begin{aligned}
 \E_{wq}[g]&=\int g(x)\,dp(x)
           =\int g(x)w(x)\,dq(x)=\E_q[wg],\\
 D_f(wq\|q)&=\int f\!\left(\frac{dp}{dq}(x)\right)dq(x)
           =\E_q[f(w)].
 \end{aligned}
\]
The supremum defining $C(g)$ includes the choice $p'=wq$, so
\[
 C(g)=\sup_{p'}\{\mathcal F(p')-\E_{p'}[g]\}
 \geq\mathcal F(wq)-\E_{wq}[g].
\]
Substituting into the original objective~\eqref{eq:go-objective} yields
\[
 \begin{aligned}
 \mathcal G(wq)
 &=\mathcal F(wq)-\alpha D_f(wq\|q)\\
 &=\mathcal F(wq)-\alpha\E_q[f(w)]\\
 &\leq C(g)+\E_{wq}[g]-\alpha\E_q[f(w)]\\
 &=C(g)+\E_q[wg]-\alpha\E_q[f(w)]\\
 &=C(g)+\E_q[wg-\alpha f(w)].
 \end{aligned}
\]
Next, introduce a scalar multiplier $\nu$ for normalization. Since
$\E_qw=1$, adding $\nu(1-\E_qw)=0$ leaves the bound unchanged:
\[
 \begin{aligned}
 L(w,g,\nu)
 &:=C(g)+\E_q[wg-\alpha f(w)]+\nu(1-\E_qw)\\
 &=C(g)+\nu+\E_q[(g-\nu)w-\alpha f(w)].
 \end{aligned}
\]
For fixed $g$ and $\nu$, maximize the integrand over a nonnegative
scalar weight $v$ at each $x$. Factoring out $\alpha>0$ gives
\[
 \begin{aligned}
 \sup_{v\geq0}\{(g(x)-\nu)v-\alpha f(v)\}
 &=\alpha\sup_{v\geq0}
   \left\{\frac{g(x)-\nu}{\alpha}v-f(v)\right\}\\
 &=\alpha f_+^*\!\left(\frac{g(x)-\nu}{\alpha}\right),
 \qquad f_+^*(u):=\sup_{v\geq0}\{uv-f(v)\}.
 \end{aligned}
\]
The value at $v=w(x)$ cannot exceed this supremum. Taking expectation
under $q$ therefore yields
\[
 \begin{aligned}
 \mathcal G(wq)&\leq L(w,g,\nu)\\
 &\leq C(g)+\nu+\alpha\E_q
    f_+^*\!\left(\frac{g-\nu}{\alpha}\right)
    =:D_f(g,\nu).
 \end{aligned}
\]
For proper closed convex $f$, equality in the pointwise bound holds when
$w(x)\in\partial f_+^*((g(x)-\nu)/\alpha)$; imposing $\E_qw=1$ gives the
response in~\eqref{eq:ftfc-functional-fdual}. The bound holds for every
finite $(g,\nu)$; equality of optimized primal and dual requires the
regularity conditions below.

For the exact population envelope in~\eqref{eq:ftfc-envelope}, use the
$L^1(q)$--$L^\infty(q)$ pairing and a proper, concave, norm-upper-semicontinuous
extension of $w\mapsto\mathcal F(wq)$; concave biconjugacy then applies.

\paragraph{Response to a fixed reward.}
Under strong duality for normalization, the optimized response value is
\begin{equation}
 \begin{aligned}
 T_f(g)&:=\sup_{w\geq0,\,\E_qw=1}
          \{\E_q[wg]-\alpha\E_qf(w)\}\\
 &=\inf_\nu\{\nu+\alpha\E_qf_+^*((g-\nu)/\alpha)\}.
 \end{aligned}
 \label{eq:ftfc-fixed-response}
\end{equation}
Integrated Fenchel equality recovers the normalized weights in
\eqref{eq:ftfc-functional-fdual}. Write $u=(g(x)-\nu)/\alpha$.
For the strictly convex generators in
Table~\ref{tab:ftfc-divergence-responses}, a positive maximizer of
$uv-f(v)$ satisfies $f'(v)=u$. Inverting $f'$ on its range gives the
interior response; the table also accounts for zero weights and the
admissible domain of $u$.

\begin{table}[htbp]
 \centering
 \small
 \setlength{\tabcolsep}{5pt}
 \renewcommand{\arraystretch}{1.4}
 \caption{Divergence generators, derivatives for $w>0$, and scalar
 weight responses $w(u)=\arg\max_{v\geq0}\{uv-f(v)\}$,
 where $u=(g(x)-\nu)/\alpha$ and $(a)_+=\max\{a,0\}$.}
 \label{tab:ftfc-divergence-responses}
 \fbox{\begin{tabular}{@{}lcccc@{}}
  \toprule
  Divergence & $f(w)$ & $f'(w)$ & $w(u)$ & Domain of $u$ \\
  \midrule
  KL & $w\log w-w+1$ & $\log w$ & $e^u$ & $\mathbb R$ \\
  Half-Pearson & $\tfrac12(w-1)^2$ & $w-1$ & $(1+u)_+$ & $\mathbb R$ \\
  Reverse KL & $-\log w+w-1$ & $1-1/w$ & $(1-u)^{-1}$ & $u<1$ \\
  \bottomrule
 \end{tabular}}
\end{table}
Half-Pearson permits $w=0$ when $u\leq-1$. Reverse KL has
$f(0)=+\infty$ and requires $u<1$ for a finite maximizing weight.
In every row, $\nu$ is chosen so that $\E_q[w]=1$.

For KL with finite $Z_g$, $f'(w)=\log w$ gives
$w_g(x)=e^{(g(x)-\nu)/\alpha}$.
Normalization gives $1=e^{-\nu/\alpha}\E_q[e^{g/\alpha}]$, hence
\begin{equation}
 \begin{aligned}
 Z_g&=\E_{X\sim q}[e^{g(X)/\alpha}],\qquad \nu_g=\alpha\log Z_g,\\
 T_{\rm KL}(g)&=\alpha\log Z_g,\qquad
 w_g(x)=e^{g(x)/\alpha}/Z_g.
 \end{aligned}
 \label{eq:ftfc-dual-main}
\end{equation}
Taking $g=g_{p^\star}$ yields~\eqref{eq:ftfc-marginal-gibbs}.

\begin{proof}[Proof of Theorem~\ref{thm:ftfc-general-dual}]
\emph{Gap identity.} Put $u=(g-\nu)/\alpha$. Since $\E_qw=1$,
\begin{align*}
 D_f(g,\nu)-\mathcal G(wq)
 &=C(g)+\nu+\alpha\E_q f_+^*(u)-\mathcal F(wq)+\alpha\E_q f(w)\\
 &=\underbrace{C(g)+\E_q[wg]-\mathcal F(wq)}_{\geq0}
   +\alpha\underbrace{\E_q[f(w)+f_+^*(u)-uw]}_{\geq0}.
\end{align*}
The first term is the envelope gap and the second is the Fenchel--Young gap.
Both vanish exactly when the envelope is tight and
$u\in\partial(f+I_{[0,\infty)})(w)$ almost surely, proving optimality and
recovering the weights by conjugacy.

\emph{Finite-bank attainment.} Let $a_i>0$, $b\in\Delta_N$, and
$J(b)=\alpha\sum_i a_i f(b_i/a_i)$. Continuity on the compact simplex
gives a maximizer $b^*$ of $\mathcal F_N-J$; strict convexity of $J$
makes it unique. Write $\ell_0=f'_+(0)$.
If $\ell_0=-\infty$, this maximizer is interior. Indeed, for
$b_\varepsilon=(1-\varepsilon)b^*+\varepsilon a$,
\[
 \frac{\mathcal F_N(b_\varepsilon)-\mathcal F_N(b^*)}{\varepsilon}
 \geq\mathcal F_N(a)-\mathcal F_N(b^*),\qquad
 \frac{J(b_\varepsilon)-J(b^*)}{\varepsilon}\longrightarrow-\infty
\]
if any coordinate of $b^*$ is zero, contradicting optimality.

When $\ell_0$ is finite, extend $f$ below zero by its tangent at zero.
In the steep case, extend it below a positive ratio smaller than all
$b_i^*/a_i$. This yields a finite differentiable convex penalty
$\bar J$ agreeing with $J$ near $b^*$. The convex subgradient sum rule gives
\begin{align*}
 0&\in\partial(-\mathcal F_N+I_{\Delta_N})(b^*)+\nabla\bar J(b^*),\\
 \mathcal F_N(b)&\leq\mathcal F_N(b^*)+g^{*\top}(b-b^*),
 \qquad b\in\Delta_N,
\end{align*}
where $g_i^*=\alpha f'(b_i^*/a_i)$ for positive ratios and
$g_i^*=\alpha\ell_0$ at zero. Thus, with $\nu^*=0$,
\[
 C_N(g^*)+g^{*\top}b^*=\mathcal F_N(b^*),\qquad
 g_i^*/\alpha\in\partial(f+I_{[0,\infty)})(b_i^*/a_i).
\]
Both residuals vanish, establishing dual attainment and strong duality.
The argument also holds on a product of scaled simplices for fixed
condition masses. For a barrier such as reverse KL, compactness,
lower semicontinuity, and $f(0)=+\infty$ give an interior optimum;
the same local argument applies.

\emph{KL specialization.} With $f(w)=w\log w-w+1$,
$f_+^*(u)=e^u-1$, and normalized $w_g=e^{g/\alpha}/Z_g$,
\[
 \E_q[f(w)+f_+^*(\log w_g)-w\log w_g]
 =\E_q[w\log(w/w_g)-w+w_g]=\DKL(wq\|w_gq).
\]
Consequently,
\begin{equation}
 \begin{aligned}
 D_{\rm KL}(g)-\mathcal G(p)
 &=C(g)+\E_pg-\mathcal F(p)+\alpha\DKL(p\|w_gq),\\
 D_{\rm KL}(g)&=C(g)+\alpha\log Z_g.
 \end{aligned}
 \label{eq:ftfc-kl-certificate}
\end{equation}
At an optimal supporting reward, $w_gq=p^*$; hence for any finite
candidate dual value $D$,
\[
 \alpha\DKL(p\|p^*)\leq\mathcal G(p^*)-\mathcal G(p)\leq D-\mathcal G(p).
\]
\end{proof}

\paragraph{Sufficient population conditions for the structured dual.}
Let $r,\phi$ be bounded, $\Psi:\mathbb R^k\to\mathbb R$ finite
continuous convex, and let $f$ satisfy the finite-bank assumptions of
Theorem~\ref{thm:ftfc-general-dual}, with $f(1)=0$.
Then~\eqref{eq:ftfc-feature-dual} has zero duality gap and attained
primal and dual optima; the optimal ratio is unique. To see this, define
\[
 A(w)=\alpha\E_q f(w)-\E_q[wr]
      +I_{\{w\geq0,\,\E_qw=1\}}(w),\qquad Bw=\E_q[w\phi].
\]
The integral functional is closed convex, $B$ is continuous, and
$A(1)<\infty$. Continuity of $\Psi$ at $B1$ qualifies
Fenchel--Rockafellar duality \citep{rockafellar1970convex}:
\[
 \inf_w\{A(w)+\Psi(Bw)\}
     =\max_z\{-\Psi^*(z)-T_f(r-z^\top\phi)\}.
\]
The value is finite because rewards and attainable moments are bounded
and $\E_q f(w)\geq f(1)=0$. A minimizing sequence has bounded
$\E_q f(w)$; superlinearity gives uniform integrability and weak
compactness in $L^1(q)$. Weak lower semicontinuity then gives primal
attainment, and strict convexity gives uniqueness.

For any bounded score $s$, the conjugate response
$R(u)=\arg\max_{v\geq0}\{uv-f(v)\}$ is continuous, nondecreasing,
with $R(-\infty)=0$ and $R(+\infty)=+\infty$. Consequently
\[
 \E_q R((s-\nu)/\alpha)\longrightarrow
 \begin{cases}+\infty,&\nu\to-\infty,\\0,&\nu\to+\infty,\end{cases}
 \qquad\exists\nu:\ \E_q R((s-\nu)/\alpha)=1.
\]
At this root, integrated Fenchel equality proves~\eqref{eq:ftfc-fixed-response}
and scalar dual attainment. Let $D_{f,\Psi}(z,\nu)$ denote the objective
in~\eqref{eq:ftfc-feature-dual}. For any feasible $w$ and finite dual
value, expansion gives
\begin{equation}
 \begin{aligned}
 D_{f,\Psi}(z,\nu)-\mathcal G(wq)
 &=\Psi(m_w)+\Psi^*(z)-z^\top m_w\\
 &\quad+\alpha\E_q[f(w)+f_+^*(u)-uw],\\
 m_w&=\E_q[w\phi],\qquad u=(r-z^\top\phi-\nu)/\alpha.
 \end{aligned}
 \label{eq:ftfc-general-gap}
\end{equation}
Thus the two optimality conditions are the normalized conjugate
response and $z\in\partial\Psi(m_w)$.
For a proper closed convex extended-valued $\Psi$, it suffices that it be continuous at an
attainable finite-penalty moment, relative to the moment affine space.
Coverage entropy qualifies after removing feature coordinates that
vanish $q$-almost surely. These sufficient conditions cover KL and
half-Pearson; unbounded rewards and population reverse KL need separate
integrability and attainment conditions. The finite-bank results do
not require bounded population rewards.

\paragraph{Examples of fixed features.}
\label{app:ftfc-structured-examples}
Expected reward has $\Psi=0$ and $g=r$, so no feature vector is needed.
For nonlinear utilities, examples with $\gamma>0$ are:
\begin{center}
\begingroup
\setlength{\tabcolsep}{4pt}
\renewcommand{\arraystretch}{1.25}
\fbox{\begin{minipage}{\dimexpr\linewidth-2\fboxsep-2\fboxrule\relax}
\small
\begin{tabularx}{\linewidth}{@{}p{0.24\linewidth}p{0.32\linewidth}X@{}}
\textbf{Utility} & \textbf{Features $\phi(x)$} & \textbf{Penalty $\Psi(m)$}\\
\midrule
Coverage entropy & Region memberships & $\gamma\sum_jm_j\log m_j$\\
Moment matching & Chosen descriptors & $\frac\gamma2\|m-m_0\|^2$\\
D-optimal design ($r=0$) & $v(x)v(x)^\top$ & $-\log\det(M_0+m)$\\
\end{tabularx}
\end{minipage}}
\endgroup
\end{center}
For coverage, $\phi_j\geq0$ and $\sum_j\phi_j=1$, so $m_j$ is a
region's aggregate mass. The rings use 16 soft radial-basis memberships
(Appendix~\ref{app:toy2d-entropy}); marginal costs discourage
overrepresented regions. This is region entropy, not differential
entropy of the full density. Moment matching instead specifies desired
descriptor means $m_0$. The conjugates are given below.
For experimental design, $v(x)$ is a fixed experiment feature vector
and $M_p=M_0+\E_p[vv^\top]$, $M_0\succ0$. The utility is
$\log\det M_p$, with $g_p(x)=v(x)^\top M_p^{-1}v(x)$;
matrix inner products replace $z^\top\phi$
\citep[Section~7.5]{boyd2004convex}.

For $\gamma>0$ and a desired moment $m_0$, the structured dual uses
\[
 \begin{aligned}
 \Psi(m)&=\tfrac\gamma2\|m-m_0\|^2
 &\Longrightarrow\quad \Psi^*(z)&=z^\top m_0+\tfrac1{2\gamma}\|z\|^2,\\
 \Psi(m)&=\gamma\sum_jm_j\log m_j,\quad m\in\Delta_k
 &\Longrightarrow\quad \Psi^*(z)&=\gamma\log\sum_j e^{z_j/\gamma}.
 \end{aligned}
\]
At an optimum, $z^\star\in\partial\Psi(m_{p^\star})$.
For the entropy penalty, marginal costs are defined up to a common
additive constant on the simplex. Under KL, eliminating $\nu$ gives
\[
 D_{\rm KL,\Psi}(z)=\Psi^*(z)+\alpha\log\E_qe^{(r-z^\top\phi)/\alpha},
 \qquad \nabla D_{\rm KL,\Psi}(z)=\nabla\Psi^*(z)-\E_{w_{g_z}q}\phi.
\]
At an optimum, the moment implied by the conjugate equals the weighted
feature mean.

\paragraph{Utility examples.}
\label{app:ftfc-utility-catalog}
Table~\ref{tab:ftfc-utility-calibration} writes the utility families in FTFC
notation, using examples from \citet[Table~1]{de2025flow}. It distinguishes
the convex Moment Dual~\eqref{eq:ftfc-feature-dual}, scalar outer optimization,
and objectives with full-density dependence. The statements concern endpoint
calibration; native fitting and bank generalization remain separate.

\begin{table}[t]
\centering
\caption{\textbf{Utility functionals and their FTFC calibration.}
The first five rows use the Moment Dual~\eqref{eq:ftfc-feature-dual};
the last two calibrate a scalar tail threshold.}
\label{tab:ftfc-utility-calibration}
\begingroup
\small
\setlength{\tabcolsep}{0pt}
\renewcommand{\arraystretch}{1.15}
\begin{tabularx}{\linewidth}{@{}>{\raggedright\arraybackslash}p{0.34\linewidth}@{\hspace{6pt}}>{\raggedright\arraybackslash}p{0.33\linewidth}@{\hspace{6pt}}>{\raggedright\arraybackslash}X@{}}
\toprule
\textbf{Utility $\mathcal F(p)$} & \textbf{Representation} & \textbf{Reward / calibration}\\
\midrule
\textbf{Expected reward}\newline
$\E_p r$
& $\Psi=0$; no features
& $g^\star=r$\\[6pt]
\textbf{Moment matching}\newline
$\E_p r-\frac\gamma2\|m_p-m_0\|^2$
& $\phi$: chosen descriptors\newline
$\Psi(m)=\frac\gamma2\|m-m_0\|^2$
& $z^\star=\gamma(m^\star-m_0)$\newline
$g^\star=r-z^{\star\top}\phi$\\[6pt]
\textbf{Coverage entropy}\newline
$\E_p r+\gamma H(m_p)$
& $\phi$: region memberships\newline
$\Psi(m)=\gamma\sum_jm_j\log m_j$
& $z_j^\star=\gamma(1+\log m_j^\star)$\newline
$g^\star=r-z^{\star\top}\phi$\\[6pt]
\textbf{D-optimal design}\newline
$\log\det M_p$,\quad $r=0$
& $\phi(x)=v(x)v(x)^\top$\newline
$\Psi(m)=-\log\det(M_0+m)$
& $g^\star=v^\top M_{p^\star}^{-1}v$\\[6pt]
\textbf{Constraint barrier}$^\dagger$\newline
$\E_p r+\gamma\log(B-\E_p c)$
& $\phi(x)=c(x)$\newline
$\Psi(m)=-\gamma\log(B-m)$
& $z^\star=\gamma/(B-m^\star)$\newline
$g^\star=r-z^\star c$\\[4pt]
\midrule
\textbf{Lower-tail CVaR}\newline
$L_\tau(p)=\sup_\eta\{\eta+\E_p g_\eta^-\}$
& $g_\eta^-(x)=-(\eta-r(x))_+/\tau$
& Maximize $\eta+T_f(g_\eta^-)$\newline
over observed reward knots.\\[6pt]
\textbf{Upper-tail CVaR}\newline
$U_\tau(p)=\inf_\eta\{\eta+\E_p g_\eta^+\}$
& $g_\eta^+(x)=(r(x)-\eta)_+/\tau$
& Minimize $\eta+T_f(g_\eta^+)$\newline
over the scalar threshold.\\
\bottomrule
\end{tabularx}
\par\vspace{3pt}
\begin{minipage}{\linewidth}
\footnotesize
$m_p=\E_p\phi$, $m^\star=m_{p^\star}$, $\gamma>0$,
$H(m)=-\sum_jm_j\log m_j$, and
$M_p=M_0+\E_p[vv^\top]$ with $M_0\succ0$.
The tail mass is $\tau\in(0,1)$; $T_f(s)=\sup_{p\ll q}\{\E_p s-\alpha D_f(p\Vert q)\}$
is the optimized fixed-reward value.
$^\dagger$The barrier uses the maximization convention, with $\E_p c<B$.
Coverage entropy concerns fixed memberships, not the full continuous density.
\end{minipage}
\endgroup
\end{table}

\paragraph{Entropy and coverage.}
For a finite outcome space, $\phi_j(x)=\mathbf1\{x=x_j\}$ gives
$m_j=p(x_j)$, so $\Psi(m)=\gamma\sum_jm_j\log m_j$ represents
$\gamma H(p)$ exactly with $r=0$. Region indicators instead give
the entropy of region masses. In a continuous space,
$H(p)=-\int p(x)\log p(x)\,d\mu(x)$ depends on the entire density
relative to the reference measure $\mu$; its marginal reward is
$-1-\log p(x)$ where defined. The functional dual can apply under
its regularity assumptions, but fixed region features do not exactly
represent this differential entropy or supply its density values.

\paragraph{Experimental design.}
For any concave matrix criterion $s$, choose
$\phi(x)=v(x)v(x)^\top$ and
$\Psi(m)=-s(M_0+m)$, with matrix inner products in place of dot products.
Then $\mathcal F(p)=s(M_p)$ and
$g_p(x)=\langle\nabla s(M_p),v(x)v(x)^\top\rangle$ when differentiable.
The choices $s(M)=\log\det M$ and
$s(M)=-\operatorname{tr}(M^{-1})$ yield
$g_p(x)=v(x)^\top M_p^{-1}v(x)$ and
$g_p(x)=v(x)^\top M_p^{-2}v(x)$, respectively.
The criterion $s(M)=-\lambda_{\max}(M)$ is also concave and uses a
supergradient at repeated eigenvalues. We use $M_0\succ0$ to ensure
the domain of the log-determinant and inverse criteria; other fixed
offsets require checking that domain.

\paragraph{Constraint penalties and barrier signs.}
Any convex penalty of the mean cost, $\Psi(\E_p c)$, fits the moment
class. For the upper bound $\E_p c<B$, the maximization barrier is
$\mathcal F(p)=\E_p r+\gamma\log(B-\E_p c)$:
$\Psi(m)=-\gamma\log(B-m)$ is convex, and the marginal cost
$z=\gamma/(B-m)$ grows as the budget is approached.
This convention differs from the expression
$\E_p r-\gamma\log(\E_p c-B)$ printed in FDC's table.
The latter would require $\Psi(m)=\gamma\log(m-B)$, which is concave,
so it is not an instance of the convex Moment Dual.

\paragraph{Mean--variance reward.}
For $\mathcal F(p)=\E_p r-\gamma\operatorname{Var}_p(r)$,
$\phi=(r,r^2)$ gives $\Psi(m)=\gamma(m_2-m_1^2)$.
This representation is exact, but $\Psi$ is not convex. The marginal reward is
$g_p=r-\gamma r^2+2\gamma(\E_p r)r$; Equation~\eqref{eq:ftfc-feature-dual}
does not provide a convex calibration for it.
There is instead a scalar outer reduction:
\[
 \begin{aligned}
 \mathcal F(p)&=\sup_a\E_p[r-\gamma(r-a)^2],\\
 \sup_p\{\mathcal F(p)-\alpha D_f(p\Vert q)\}
 &=\sup_a T_f(r-\gamma(r-a)^2).
 \end{aligned}
\]
Indeed, $\E_p(r-a)^2=\operatorname{Var}_p(r)+(\E_p r-a)^2$;
the maximizing center is $a=\E_p r$, and the two suprema commute.
On a finite bank, $a$ can be restricted to the observed reward range.
The outer objective need not be concave, and the CVaR reward-knot
rule does not apply.

\textbf{Conditional mode objectives.}
Let fixed condition masses be $\rho_h>0$ and
$\bar p=\sum_h\rho_h p_h$. The negative-sign expression in FDC's
table is $-\sum_h\rho_h\DKL(p_h\Vert\bar p)$.
On finite outcome cells $A_j$, use the joint law $P(h,x)=\rho_h p_h(x)$
and features $\phi_{hj}(h',x)=\mathbf1\{h'=h,\,x\in A_j\}$.
Writing $m_{hj}=P(h,A_j)$ and $\bar m_j=\sum_hm_{hj}$, its discrete
version has $r=0$ and the convex penalty
\[
 \Psi(m)=\sum_{h,j}m_{hj}\log\frac{m_{hj}}{\rho_h\bar m_j},
 \qquad
 g_P(h,x)=-\log\frac{m_{hj}}{\rho_h\bar m_j}\quad(x\in A_j).
\]
Convexity follows from joint convexity of relative entropy with fixed
$\rho_h$; normalization uses one multiplier per condition.
For continuous conditionals, the exact functional retains their full
density ratios. The sign matters: maximizing the negative expression
reduces conditional separation, whereas maximizing positive mutual
information encourages distinct modes and reverses the convex penalty's
sign. The latter is not covered by the convex Moment Dual.

\textbf{Tail objectives.}
The lower- and upper-tail rows use the exact threshold representations
in \eqref{eq:ftfc-tails-main}, including fractional mass at atoms.
They require no moment-feature vector. Their outer optimization
directions differ: lower CVaR maximizes its threshold objective and
upper CVaR minimizes it, as derived in \eqref{eq:ftfc-tail-duals-main}.
Proposition~\ref{prop:ftfc-general-tail-knots} supplies the exact
finite-bank search for the lower tail.

\paragraph{Divergence choices in the molecular ablation.}
The $f$-divergence controls how FTFC converts the calibrated supporting reward into density-ratio weights.  Writing $s=(g^\star-\nu^\star)/\alpha$, FTFC's general dual gives $w^\star\in\partial f_+^\star(s)$, where the nonnegative conjugate enforces $w^\star\geq 0$ and the scalar $\nu^\star$ normalizes the target.  The ablation in Table~\ref{tab:f-penalty-ablation} therefore changes the shape of this reward-to-weight map while leaving the utility, sample bank, and generator-fitting stage unchanged.

For KL, $f(t)=t\log t-t+1$, so $f^\star(s)=e^s-1$ and FTFC recovers the familiar exponential tilt $w^\star=e^s$.  Reverse KL, $f(t)=-\log t+t-1$, instead gives $f^\star(s)=-\log(1-s)$ on $s<1$ and $w^\star=(1-s)^{-1}$.  Squared Hellinger, $f(t)=(\sqrt t-1)^2$, has $f^\star(s)=s/(1-s)$ on the same domain and yields $w^\star=(1-s)^{-2}$.  Thus Reverse KL and squared Hellinger both impose a finite-domain barrier: their dual arguments must remain below one, with the normalization multiplier selecting an admissible target.

The Half-Pearson choice uses $f(t)=\tfrac12(t-1)^2$.  Its unconstrained map is affine, $w^\star=1+s$; the nonnegative conjugate used by FTFC truncates this to $[1+s]_+$.  Cressie--Read-3 uses $f(t)=(t^3-3t+2)/6$, giving $w^\star=\sqrt{[1+2s]_+}$.  These two choices replace KL's exponential response with polynomial responses and can set sufficiently unfavorable samples to zero weight.  All five penalties are convex, normalized by $f(1)=0$, and fit the same Fenchel calibration and frozen-weight fitting result; the ablation tests the resulting target redistribution rather than a change in the generator objective.

\subsection{Finite-bank optimization}
\label{app:ftfc-method-details}\label{app:ftfc-support}\label{app:ftfc-fenchel}
Let $a_i>0$ be reference masses on cached endpoints $x_i$,
$\sum_i a_i=1$. For condition groups $I_h$, keep their reference
probabilities $\rho_h=\sum_{i\in I_h}a_i$ fixed. Target masses $b_i$
and ratios $w_i$ satisfy
\begin{equation}
 \begin{aligned}
 \mathcal C&=\{b\geq0:\sum_{i\in I_h}b_i=\rho_h\ \text{for every }h\},
 \qquad b_i=a_iw_i,\\
 \mathcal G_N(b)&=\mathcal F_N(b)-\alpha\sum_i a_i f(b_i/a_i).
 \end{aligned}
 \label{eq:ftfc-bank}
\end{equation}
Without conditioning, there is one group, so $\mathcal C$ is the simplex.
The group multipliers enter the Lagrangian as
$\sum_h\nu_h(\rho_h-\sum_{i\in I_h}b_i)$. Maximizing each ratio
$b_i/a_i\geq0$ independently yields
\begin{equation}
 \begin{aligned}
 D_{f,N}(g,\nu)&=C_N(g)+\sum_h\rho_h\nu_h
       +\alpha\sum_i a_i f_+^*((g_i-\nu_{h_i})/\alpha),\\
 C_N(g)&=\sup_{b\in\mathcal C}\{\mathcal F_N(b)-b^\top g\},\qquad
 b_i\in a_i\partial f_+^*((g_i-\nu_{h_i})/\alpha).
 \end{aligned}
 \label{eq:ftfc-empirical-dual}
\end{equation}
Choose $\nu$ so $b\in\mathcal C$. Theorem~\ref{thm:ftfc-general-dual}
applies with sums in place of expectations. For
$\mathcal F_N(b)=r^\top b-\Psi(\sum_i b_i\phi_i)$, substitute
$g_i=r_i-z^\top\phi_i$ and the envelope offset $\Psi^*(z)$.
For proper closed convex $\Psi$, the finite-dimensional qualification is
$b^\circ\in\operatorname{ri}\mathcal C$ with
$\sum_i b_i^\circ\phi_i\in\operatorname{ri}\operatorname{dom}\Psi$;
the preceding Fenchel argument then applies.

\paragraph{Solving for $z$ and $\nu$.}
\label{app:ftfc-solve-statistics}
For an unconditional bank, cache $x_i\sim q$, rewards $r_i$, and
features $\phi_i=\phi(x_i)$, with reference masses $a_i=1/N$. Replace $\E_q$ in the middle line of
\eqref{eq:ftfc-feature-dual} by $\sum_i a_i$ to obtain $D_N(z,\nu)$.
For each $z$, choose $\nu$ to normalize the conjugate response:
\[
 w_i(z)\in\partial f_+^*((r_i-z^\top\phi_i-\nu)/\alpha),
        \qquad\sum_i a_iw_i(z)=1.
\]
Under the response assumptions in Appendix~\ref{app:ftfc-new-proofs},
this is a monotone scalar equation; for KL, log-sum-exp gives the solution.
Minimize the resulting convex objective over $z$, whose gradient,
when defined, is
\[
 \nabla_z\!\left[\min_\nu D_N(z,\nu)\right]
       =\nabla\Psi^*(z)-\sum_i a_iw_i(z)\phi_i.
\]
Calibration matches the statistics implied by $z$ to those induced by
its weights. It uses cached endpoints, without updating the generator.
With fixed groups, use one normalizer per group as in
\eqref{eq:ftfc-empirical-dual}; the moment gradient is unchanged.

For KL, $Z_{z,h}=\sum_{i\in I_h}(a_i/\rho_h)
e^{(r_i-z^\top\phi_i)/\alpha}$ eliminates the normalizers:
\begin{equation}
 \begin{aligned}
 D_N(z)&=\Psi^*(z)+\alpha\sum_h\rho_h\log Z_{z,h},\\
 b_{z,i}&=a_i e^{(r_i-z^\top\phi_i)/\alpha}/Z_{z,h_i},
 \qquad m_b=\sum_i b_i\phi_i,\\
 D_N(z)-\mathcal G_N(b)
 &=\alpha\DKL(b\|b_z)+\Psi(m_b)+\Psi^*(z)-z^\top m_b.
 \end{aligned}
 \label{eq:ftfc-fenchel-gap}
\end{equation}
The gradient, when defined, is $\nabla\Psi^*(z)-m_{b_z}$.
The moment and entropy conjugates are in
Appendix~\ref{app:ftfc-structured-examples}. A global utility shares its $z$ or tail
threshold across groups; separate conditional utilities are different
objectives.

\paragraph{Tail rewards in the image and molecule benchmarks.}
\label{app:ftfc-tail-benchmarks}
CVaR uses an exact scalar-threshold representation instead of fixed
moment features. For tail mass $\tau$, its lower- and upper-tail
pseudo-rewards are
\begin{equation}
 g_c^{\rm lower}(x)=-\frac{(c-r(x))_+}{\tau},
 \qquad g_c^{\rm upper}(x)=\frac{(r(x)-c)_+}{\tau}.
 \label{eq:ftfc-benchmark-rewards}
\end{equation}
The threshold $c$ is calibrated using
\eqref{eq:ftfc-tail-duals-main}: upper-CVaR minimizes its threshold
objective; lower-CVaR maximizes it and is not covered by the concave
utility theorem. On a finite bank, the latter maximum is attained at
an observed reward (Appendix~\ref{app:ftfc-utilities}). These objectives
recover TFFT under KL \citep{wang2026efficient}.
\label{app:ftfc-benchmark-inputs}
In Stable Diffusion, calibration uses $r=100\,\mathrm{ImageReward}$,
$\alpha=1$, and lower-tail mass $0.2$. In QM9, it uses
$r=-E_{\mathrm{GFN1}}/100$, $\alpha=0.01$, and upper-tail mass $0.002$.
Both optimize a scalar threshold rather than moment features.
Prompt and atom-count probabilities remain fixed, respectively:
for group $h$, one normalizer enforces
$\sum_{i\in I_h}a_iw_i=\rho_h$, while the threshold is shared across groups.

\paragraph{CVaR calibration.}
\label{app:ftfc-utilities}
For tail mass $\tau\in(0,1)$, the lower and upper reward averages are
\begin{equation}
 L_\tau(p)=\sup_c\{c-\tau^{-1}\E_p(c-r)_+\},\qquad
 U_\tau(p)=\inf_c\{c+\tau^{-1}\E_p(r-c)_+\}.
 \label{eq:ftfc-tails-main}
\end{equation}
These definitions count fractional mass at threshold atoms.
Their threshold-dependent rewards are $-(c-r)_+/\tau$ and
$(r-c)_+/\tau$. Upper-CVaR is concave in the law; lower-CVaR is convex.
Using the fixed-reward response \eqref{eq:ftfc-fixed-response} gives
\begin{equation}
 \begin{aligned}
 V_U&=\min_c\{c+T_f((r-c)_+/\tau)\},\\
 V_L&=\max_c\{c+T_f(-(c-r)_+/\tau)\}.
 \end{aligned}
 \label{eq:ftfc-tail-duals-main}
\end{equation}
For the upper tail, restrict $c$ to the reward range and apply
convex--concave minimax under the finite-bank assumptions.
For the lower tail, commute two suprema. With KL,
\eqref{eq:ftfc-dual-main} recovers TFFT's threshold objectives
\citep{wang2026efficient}. The lower-tail objective need not be
concave; its exact finite-bank solution follows instead from
piecewise convexity.

\begin{proposition}[Lower-CVaR requires only reward knots]
\label{prop:ftfc-general-tail-knots}
Let $\mathcal B$ be a nonempty compact subset of the probability simplex,
$R$ lower semicontinuous, bounded below and finite somewhere on
$\mathcal B$, and $r_i$ finite. For $0<\tau<1$ and
$T_R(s)=\sup_{b\in\mathcal B}\{b^\top s-R(b)\}$,
\begin{equation}
 \max_{b\in\mathcal B}\{L_\tau(b)-R(b)\}
 =\max_{c\in\{r_1,\ldots,r_N\}}
       \{c+T_R(-(c\mathbf1-r)_+/\tau)\}.
 \label{eq:ftfc-general-tail-knots}
\end{equation}
An exact response at a maximizing knot attains the left side.
\end{proposition}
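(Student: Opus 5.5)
We unfold $L_\tau(b)=\sup_c\{c-\tau^{-1}b^\top(c\mathbf1-r)_+\}$ from \eqref{eq:ftfc-tails-main} and commute the two suprema:
\[
 \sup_{b\in\mathcal B}\{L_\tau(b)-R(b)\}
 =\sup_{c\in\mathbb R}\sup_{b\in\mathcal B}\{c+b^\top s_c-R(b)\}
 =\sup_{c\in\mathbb R}\varphi(c),
\]
with $s_c:=-(c\mathbf1-r)_+/\tau$ and $\varphi(c):=c+T_R(s_c)$. This interchange is always valid for suprema. We then restrict $c$ to the knots, and finally argue that all suprema are attained.

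\emph{Restricting $c$.} Order the rewards $r_{(1)}\le\dots\le r_{(N)}$. For each fixed $b$ the map $c\mapsto c+b^\top s_c$ is concave and piecewise affine in $c$, with kinks only at the $r_i$. On $c\le r_{(1)}$ we have $s_c=0$, so $\varphi(c)=c+T_R(0)$ is increasing there; hence $\sup_{c\le r_{(1)}}\varphi=\varphi(r_{(1)})$.

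On $c\ge r_{(N)}$, every $b$ in the simplex gives slope $1-\tau^{-1}\sum_ib_i=1-1/\tau<0$. So each member of the family is decreasing in $c$, their supremum $\varphi$ is nonincreasing there, and $\sup_{c\ge r_{(N)}}\varphi=\varphi(r_{(N)})$.

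On an interval $[r_{(k)},r_{(k+1)}]$, each $c\mapsto c+b^\top s_c-R(b)$ is affine. Therefore $\varphi$, as a supremum of affine functions, is convex on that interval, and a convex function on a closed interval attains its maximum at an endpoint. Note that $\varphi$ is not globally convex, since concave kinks appear at the knots. This piecewise convexity is the key step.

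Combining the three regions, $\sup_c\varphi(c)=\max_i\varphi(r_i)$, and this maximum over finitely many points is attained.

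\emph{Finiteness and attainment in $b$.} $R$ is bounded below and finite somewhere, and $s_c$ is bounded for fixed $c$, so $T_R(s_c)$ is finite. The function $b\mapsto b^\top s_c-R(b)$ is upper semicontinuous on the compact set $\mathcal B$, so the supremum defining $T_R(s_{c^*})$ is attained at some $b^*$ for a maximizing knot $c^*$. This $b^*$ is an "exact response".

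\emph{Left side attained.} Since $L_\tau(b^*)\ge c^*+b^{*\top}s_{c^*}$, we get
\[
 L_\tau(b^*)-R(b^*)\ \ge\ \varphi(c^*)\ =\ \sup_b\{L_\tau-R\},
\]
so $b^*$ attains the maximum on the left side and the two sides agree.

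The main technical care lies in the endpoint regions and in finiteness: confirming $T_R(0)<\infty$ and that the slope argument uses $\sum_ib_i=1$ on the simplex. The interval argument itself is straightforward once we observe that $\varphi$ is a pointwise supremum of functions that are affine between consecutive knots.
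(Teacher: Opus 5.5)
Your proof is correct and follows the paper's argument. You commute the suprema, show $c\mapsto c+T_R(-(c\mathbf1-r)_+/\tau)$ is increasing left of $\min_i r_i$, decreasing with slope $1-1/\tau$ right of $\max_i r_i$, and convex between consecutive knots, then use upper semicontinuity on the compact $\mathcal B$ to attain the response at a maximizing knot. The only cosmetic difference is phrasing: the paper argues through convexity and the translation property $T_R(s+d\mathbf 1)=T_R(s)+d$ of $T_R$, whereas you treat the threshold objective directly as a supremum of functions that are affine between knots.
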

\begin{proof}
Commuting the two suprema gives
$H(c)=c+T_R(-(c\mathbf1-r)_+/\tau)$. The response value $T_R$ is
convex and $T_R(s+d\mathbf1)=T_R(s)+d$. Hence
\[
 H(c)=\begin{cases}
 c+T_R(0),&c\leq\min_i r_i,\\
 \text{convex on each }[r_{(j)},r_{(j+1)}],&\min_i r_i\leq c\leq\max_i r_i,\\
 (1-1/\tau)c+T_R(r/\tau),&c\geq\max_i r_i.
 \end{cases}
\]
The middle claim follows because each hinge is affine between knots.
The outer pieces increase and decrease, respectively, so a knot
maximizes $H$. Compactness and lower semicontinuity give an attained
response there; it maximizes the joint objective over $(b,c)$.
\end{proof}
For unconditional KL and sorted rewards $r_{(1)}\leq\cdots\leq r_{(N)}$,
each knot normalizer is
\[
 Z(c)=e^{-c/(\alpha\tau)}
       \sum_{r_i\leq c}a_i e^{r_i/(\alpha\tau)}
       +\sum_{r_i>c}a_i,\qquad H(c)=c+\alpha\log Z(c).
\]
Sorting plus log-domain prefix/suffix sums costs $O(N\log N)$.
For general $f$, the same knot search is exact but requires its own
normalization solves. At $\alpha=0$, solve the unregularized mass
problem; the scaled conjugate formulas do not apply.

\paragraph{Optional score optimization.}
If no compact dual is available, a neural score model can optimize
the full-bank objective through normalized scores. For positive masses,
$g_i=\partial\mathcal F_N/\partial b_i$ and
$u_i=g_i-\alpha f'(b_i/a_i)$, differentiation gives
\begin{equation}
 \begin{aligned}
 b_i&=\frac{a_i e^{s_\psi(x_i)}}
           {\sum_{j\in I_{h_i}}(a_j/\rho_{h_i})e^{s_\psi(x_j)}},\\
 \nabla_\psi\mathcal G_N
 &=\sum_i b_i(u_i-\bar u_{h_i})\nabla_\psi s_\psi(x_i),
 \qquad\bar u_h=\rho_h^{-1}\sum_{i\in I_h}b_i u_i.
 \end{aligned}
 \label{eq:ftfc-gradient}
\end{equation}
Indeed, within a group,
$\partial b_i/\partial s_j=b_i(\mathbf1_{i=j}-b_j/\rho_h)$;
different groups have zero cross-derivatives. Chunking this sum with
fixed full-bank coefficients preserves the gradient.
For a concave utility and a supergradient $g$ at $b$, the envelope is
tight there. Theorem~\ref{thm:ftfc-general-dual} therefore supplies a
bound on target suboptimality; under KL it is $\alpha\DKL(b\|b_g)$.
Here $b_{g,i}=a_i e^{g_i/\alpha}/\sum_{j\in I_{h_i}}(a_j/\rho_{h_i})e^{g_j/\alpha}$.
At $\alpha=0$ the bound is $\sum_h\rho_h\max_{i\in I_h}g_i-g^\top b$.
For nonconcave utilities these tangent expressions do not imply
global optimality.

\paragraph{Finite-bank approximation.}
The gap bounds optimization error for the discrete reference
$q_N=\sum_i a_i\delta_{x_i}$. It does not bound population or
generator error. Conditioning a bank on reward
eligibility replaces $q_h$ by $q_h(\cdot\mid\text{eligible})$;
changing deployment condition frequencies changes the joint law.
Optimization accuracy alone therefore does not establish sample
coverage or agreement with the population target. The eligibility
rules and conditioning groups for each benchmark are given in Appendix~\ref{app:implementation}.

\subsection{From endpoint weights to a generative model}
\label{app:target-realization}
\begin{corollary}[Optimality under exact target realization]
\label{thm:ftfc-overall}
Let $wq$ be $\varepsilon$-optimal for the endpoint relaxation. Assume finite
second moments of the endpoint and source, and an admissible exact
target predictor, obtained by native regression or diffusion guidance
as above. For flows, assume the ODE and continuity
equation have matching unique marginals. For diffusion, assume a
well-posed reverse process, initialized at the target's correct noisy
law and integrated exactly. Then
\begin{equation}
 \boxed{p_1^{\widehat\pi}=wq,\qquad
 0\leq\sup_{\pi\in\Pi}\mathcal G(p_1^\pi)
             -\mathcal G(p_1^{\widehat\pi})\leq\varepsilon.}
 \label{eq:ftfc-overall-guarantee}
\end{equation}
\end{corollary}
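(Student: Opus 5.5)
The proof has two parts: an endpoint identification step and an optimality transfer step. First I will show that the fitted generator $\widehat\pi$ has terminal law exactly $wq$. Then I will use the sandwich from \eqref{eq:setup-endpoint-objective} to convert $\varepsilon$-optimality of $wq$ in the relaxation into $\varepsilon$-optimality over $\Pi$.

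\emph{Identification, flow case.} By \eqref{eq:ftfc-native-population}, the weighted loss equals the unweighted flow-matching loss with endpoint $X\sim p:=wq$ and fresh source noise $\epsilon\sim p_0$. By \eqref{eq:ftfc-conditional-main}, its population minimizer is the marginal velocity
\[
 m_p(y,t)=\E_p[X-\epsilon\mid Y_t=y].
\]
Finite second moments of $p$ and $p_0$ make this conditional expectation well defined and square-integrable. The interpolation $Y_t=(1-t)\epsilon+tX$ has marginals $\rho_t$ that solve the continuity equation with velocity $m_p$, which follows by differentiating $\E\varphi(Y_t)$ in $t$ and conditioning on $Y_t$. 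The ODE \eqref{eq:setup-flow} driven by $m_p$ and started at $p_0=\rho_0$ also produces marginals solving this equation. The assumed uniqueness of marginals then gives $p_1^{\widehat\pi}=\rho_1=p$.

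\emph{Identification, diffusion case.} The exact predictor is either $\epsilon_{p}$ obtained by weighted denoising, or $\epsilon_q-\sigma_t\nabla\log h_t$ obtained by guidance. Theorem~\ref{thm:ftfc-weight-guidance} shows these coincide and give the score of the noisy target law $\widetilde p_t=h_t\widetilde q_t$. A well-posed reverse process, started at the correct noisy law $\widetilde p_{t_{\max}}$ and driven by the exact score, reproduces the forward marginals in reverse time. Its terminal law is therefore $p$.

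\emph{Optimality.} Since $\widehat\pi$ realizes $p$, we have $w=w_{\widehat\pi}\in\mathcal W_\Pi$, assuming $\widehat\pi\in\Pi$. This is part of the admissibility of the exact predictor. Hence $\mathcal G(p_1^{\widehat\pi})\le\sup_\Pi\mathcal G$, which is the lower bound $0\le\cdot$ in \eqref{eq:ftfc-overall-guarantee}. For the upper bound, \eqref{eq:setup-endpoint-objective} gives $\sup_\Pi\mathcal G\le V_{\rm rel}$, the value of the relaxed problem. The $\varepsilon$-optimality of $wq$ gives $V_{\rm rel}\le\mathcal G(wq)+\varepsilon$. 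Chaining these yields the claim. When $wq$ comes from the dual, Theorem~\ref{thm:ftfc-general-dual} can supply $\varepsilon$ as the certificate gap.

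\emph{Main obstacle.} The hardest step is the flow identification: verifying that the regression minimizer defines a velocity for which the continuity equation holds in the weak sense. This needs integrability of $m_p$ in $L^1(\rho_t\,dt)$, which the second moments provide via Jensen. It also needs a careful appeal to the assumed uniqueness, because $m_p$ may not be Lipschitz. I would first try the standard weak-formulation argument with test functions in $C_c^\infty$.
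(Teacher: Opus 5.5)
Your proposal is correct and follows essentially the same route as the paper's proof. Both identify the regression minimizer with the marginal velocity, show by a test-function computation that the interpolation marginals solve the continuity equation, and invoke the assumed uniqueness to get $p_1^{\widehat\pi}=wq$; the diffusion case uses the exact score with a correctly initialized reverse process, and the sandwich in \eqref{eq:setup-endpoint-objective} transfers $\varepsilon$-optimality. The only refinement in the paper is that a global minimizer equals $v_p$ only $dt\,\mu_t$-almost everywhere, so it checks that $\mu_t$ also solves the continuity equation with velocity $\widehat\pi$ before applying uniqueness, whereas you take $\widehat\pi=m_p$ exactly as the admissible exact predictor.
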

Conditional regression identifies the target dynamics, and uniqueness identifies
their endpoint law. The policy optimum is bounded by the endpoint relaxation;
finite-bank estimation, imperfect fitting, and numerical sampling remain
separate errors.

\label{app:ftfc-overall}
\begin{proof}[Proof of Theorem~\ref{thm:ftfc-weight-guidance}]
Fix $p=wq$ and $Y_t\sim K_t(\cdot\mid X)$. Write
$\widetilde q_t=K_t\#q$, $\widetilde p_t=K_t\#p$, and
$h_t(y)=\E_q[w(X)\mid Y_t=y]$. Assume finite conditional moments and
$0<h_t<\infty$ on the relevant support. For any test function $\varphi$ making the
following expectations finite, conditioning under the reference joint law gives
\begin{align*}
 \E_p\varphi(Y_t)
 &=\E_q[w(X)\varphi(Y_t)]
   =\E_{\widetilde q_t}[h_t(Y_t)\varphi(Y_t)],\\
 \E_p[A_t\varphi(Y_t)]
 &=\E_{\widetilde q_t}[\E_q[w(X)A_t\mid Y_t]\varphi(Y_t)].
\end{align*}
Dividing the second conditional density by the first proves
\eqref{eq:ftfc-conditional-main}; logarithmic differentiation of
$\widetilde p_t=h_t\widetilde q_t$ proves~\eqref{eq:ftfc-bridge-main}.
For the Gaussian kernel,
\[
 \nabla_y\log\widetilde p_t(y)
 =\E_p\!\left[-\frac{y-a_tX}{\sigma_t^2}\,\middle|\,Y_t=y\right]
 =-\frac{\E_p[\epsilon\mid Y_t=y]}{\sigma_t},
\]
and likewise under $q$, proving the noise-predictor correction.
For $d(y,t)=\ell(y,t)-\log h_t(y)$, conditioning the in-sample loss gives
\begin{align*}
 \mathcal L_{\rm IGL}(\ell;w)
 &=\E_{t,\widetilde q_t}[h_t e^{-\ell}+\ell],\\
 \mathcal L_{\rm IGL}(\ell;w)-\mathcal L_{\rm IGL}(\log h;w)
 &=\E_{t,\widetilde q_t}[e^{-d}-1+d]\geq0.
\end{align*}
Since $e^{-d}-1+d=0$ exactly when $d=0$, the unique minimizer is
$\ell^\star=\log h$ almost everywhere. Its smooth representative has
the score gradient in \eqref{eq:ftfc-igl-main}. Taking $w=w^\star$
proves the claim. This is the function-space argument of
\citet[Theorem~1]{mao2024diffusiondice}; neither neural optimization
convergence nor score-gradient accuracy follows from a small loss alone.
\end{proof}

\begin{proof}[Proof of Corollary~\ref{thm:ftfc-overall}]
For $I_t=(1-t)\epsilon+tX$, $X\sim p=wq$ independent of $\epsilon\sim p_0$,
let $\mu_t=\operatorname{Law}(I_t)$ and
$v_p(y,t)=\E_p[X-\epsilon\mid I_t=y]$. Change of measure and regression
orthogonality give
\begin{align*}
 \mathcal R(\pi;w)&=\E_{t,X\sim p,\epsilon\sim p_0}
                    \|\pi(I_t,t)-(X-\epsilon)\|^2,\\
 \mathcal R(\pi;w)-\mathcal R(v_p;w)
 &=\int_0^1\E_{\mu_t}\|\pi(Y_t,t)-v_p(Y_t,t)\|^2\,dt.
\end{align*}
The cross term vanishes because
$\E_p[X-\epsilon-v_p(I_t,t)\mid I_t]=0$. Since $v_p\in\Pi$, a global
minimizer $\widehat\pi$ equals $v_p$ under $dt\,\mu_t$ almost everywhere.
For every smooth compactly supported $\varphi$,
\[
 \frac{d}{dt}\E_p\varphi(I_t)
 =\E_p[\nabla\varphi(I_t)^\top(X-\epsilon)]
 =\int\nabla\varphi(y)^\top v_p(y,t)\,\mu_t(dy).
\]
Thus $\mu_t$ solves the continuity equation for both $v_p$ and
$\widehat\pi$, with $\mu_0=p_0$ and $\mu_1=p$.
The assumed uniqueness and consistency with ODE transport imply
$p_1^{\widehat\pi}=p$. Since every admissible policy is feasible for
the endpoint relaxation and $wq$ is $\varepsilon$-optimal for that relaxation,
\[
 0\leq\sup_{\pi\in\Pi}\mathcal G(p_1^\pi)
       -\mathcal G(p_1^{\widehat\pi})
 \leq\varepsilon.
\]
For diffusion, regression or Theorem~\ref{thm:ftfc-weight-guidance}
yields the exact target score. A well-posed reverse process,
initialized at the target's correct noisy law and integrated exactly,
then has endpoint $p$ and the same objective bound.
If the terminal noising coefficient $a_T=0$, this initial noisy law
is the common Gaussian $\mathcal N(0,\sigma_T^2I)$.
Otherwise, replacing it by a Gaussian introduces an additional approximation.
The guidance route also requires the exact reference score of the
same noised $q$ used for calibration.
\end{proof}

\paragraph{Fitting on a finite bank.}
For frozen bank masses, the training objective and its two equivalent
sampling forms are, with $t$ and $\epsilon$ sampled as in
\eqref{eq:ftfc-native-labels},
\begin{equation}
 \begin{aligned}
 \mathcal R_N(\theta;b)
 &=\sum_i b_i\E_{t,\epsilon}[\ell_\theta(x_i,t,\epsilon)]\\
 &=\E_{i\sim b,\,t,\epsilon}[\ell_\theta(x_i,t,\epsilon)]\\
 &=\E_{i\sim a,\,t,\epsilon}[(b_i/a_i)\ell_\theta(x_i,t,\epsilon)].
 \end{aligned}
 \label{eq:ftfc-native}
\end{equation}
Fresh native noise preserves the chosen source law. If $Q$ is the
original generation-path law with source $p_0$, weighting its paths
by $w(X_1)$ instead gives source marginal
$p_0(dx_0)\E_Q[w(X_1)\mid X_0=x_0]$, generally different from $p_0$.
If native preprocessing uses a noninvertible map $T$, fitting targets
$T_\#(wq)$; any exact utility guarantee must use that same observable.
The corollary concerns exact target predictors and exact sampling.
Finite-bank estimation, restricted adapters, and time discretization
remain separate errors.

\paragraph{Guidance for Gaussian flow interpolations.}
For $Y_t=a_tX+b_t\epsilon$, $\epsilon\sim\mathcal N(0,I)$ independent of $X$,
write $\bar x_q(y)=\E_q[X\mid Y_t=y]$ and $\bar x_p(y)=\E_p[X\mid Y_t=y]$.
For $a_t,b_t>0$, Gaussian differentiation gives
\begin{align*}
 \nabla\log\widetilde q_t(y)&=(a_t\bar x_q(y)-y)/b_t^2,\\
 \nabla\log h_t(y)&=a_t(\bar x_p(y)-\bar x_q(y))/b_t^2,\\
 \epsilon_p-\epsilon_q&=-a_t(\bar x_p-\bar x_q)/b_t
                      =-b_t\nabla\log h_t,\\
 v_p-v_q&=(\dot a_t-a_t\dot b_t/b_t)(\bar x_p-\bar x_q)
         =\left(\frac{\dot a_t b_t^2}{a_t}-b_t\dot b_t\right)\nabla\log h_t.
\end{align*}
Here $b_t$ plays the role of the diffusion noise scale $\sigma_t$.
For a Gaussian flow interpolation, the correction is
\begin{equation}
 v_p=v_q+\kappa_t\nabla\log h_t,\qquad
 \kappa_t=\frac{\dot a_t b_t^2}{a_t}-b_t\dot b_t.
 \label{eq:ftfc-flow-guidance}
\end{equation}
For $(a_t,b_t)=(t,1-t)$, $\kappa_t=(1-t)/t$ on $0<t<1$;
endpoint values require suitable limits. These identities assume finite
conditional moments and differentiation under the integral.
The reference predictor must correspond to this same corruption or
interpolation; arbitrary pretrained velocity fields cannot be corrected
by this formula without that compatibility
\citep{feng2025guidance,zhang2025energyweighted}.

\subsection{Reference-preserving regression by changing the training label}
\label{app:ftfc-residual-regression}

\paragraph{Scope.}
We analyze an optional fitting-loss modification that leaves Fenchel
calibration unchanged. It accounts for the gap between the pretrained
predictor and the conditional mean of generated endpoints under, e.g.,
classifier-free guidance. The modification is not used in the benchmarks;
the derivation alone does not imply better generation quality.

Retain the native pair $(Y_t,A_t)$ from
\eqref{eq:ftfc-native-labels}, a frozen normalized ratio $w=dp/dq$, and
a frozen predictor $m_0(y,t)$. Write
$\delta_\theta=m_\theta-m_0$ and let $\lambda>0$. All expectations below
use the same fresh corruption or interpolation conditional on the
endpoint, and have finite displayed second moments. Consider the
centered quadratic risk
\begin{equation}
 \mathcal L_\lambda(\theta)
 =\E_{q,t,\epsilon}\!\left[
   (w+\lambda)\|\delta_\theta\|^2
   +2(w-1)\langle\delta_\theta,m_0-A_t\rangle\right].
 \label{eq:ftfc-centered-risk}
\end{equation}
The quadratic coefficient is positive even where $w=0$. If $w\equiv1$ and
$m_\theta=m_0$ pointwise, each example has zero parameter gradient; ordinary
regression to fresh native labels need not have this property for an imperfect
teacher.

\begin{proposition}[An unweighted regression form]
\label{prop:ftfc-residual-regression}
Define the endpoint mixture and corrected native label
\begin{equation}
 \begin{aligned}
 s_\lambda&=\frac{p+\lambda q}{1+\lambda},
 &a_\lambda(X)&=\frac{w(X)-1}{w(X)+\lambda},\\
 \widetilde A_t&=m_0(Y_t,t)
             +a_\lambda(X)\bigl(A_t-m_0(Y_t,t)\bigr).
 \end{aligned}
 \label{eq:ftfc-residual-label}
\end{equation}
Then the ordinary squared regression loss
\begin{equation}
 \mathcal J_\lambda(\theta)
 :=\E_{X\sim s_\lambda,t,\epsilon}
        \|m_\theta(Y_t,t)-\widetilde A_t\|^2
 =\frac{\mathcal L_\lambda(\theta)}{1+\lambda}+C_\lambda
 \label{eq:ftfc-residual-square}
\end{equation}
has a finite constant $C_\lambda$ independent of $\theta$.
No additional endpoint weight multiplies this loss. Let
$h_t(y)=\E_q[w(X)\mid Y_t=y]$ and
$m_q(y,t)=\E_q[A_t\mid Y_t=y]$. Its unrestricted conditional optimum is
\begin{equation}
 \begin{aligned}
 m_\lambda^*(y,t)
 &=\frac{h_t m_p+(1+\lambda)m_0-m_q}{h_t+\lambda},\\
 m_\lambda^*-m_{s_\lambda}
 &=\frac{1+\lambda}{h_t+\lambda}(m_0-m_q),
 \qquad
 m_{s_\lambda}=\frac{h_t m_p+\lambda m_q}{h_t+\lambda}.
 \end{aligned}
 \label{eq:ftfc-residual-bias}
\end{equation}
Here $h_t m_p$ means $\E_q[w(X)A_t\mid Y_t=y]$, also when $h_t=0$.
In particular, an exact reference teacher recovers the native predictor
of $s_\lambda$, rather than of the unmixed target $p$.
\end{proposition}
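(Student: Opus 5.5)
The plan is a direct change of measure from $s_\lambda$ back to $q$, followed by completing the square conditionally on $Y_t$. Because $p=wq$, the mixture has density ratio $ds_\lambda/dq=(w+\lambda)/(1+\lambda)$. The corruption $(t,\epsilon)$ is drawn freshly given the endpoint, so this ratio transfers to the joint law of $(X,t,\epsilon)$, and
\[
\mathcal J_\lambda(\theta)=\frac{1}{1+\lambda}\E_{q,t,\epsilon}\bigl[(w+\lambda)\|m_\theta-\widetilde A_t\|^2\bigr].
\]

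\textbf{Step 1: rewrite the residual.} Write the residual as $m_\theta-\widetilde A_t=\delta_\theta-a_\lambda(X)(A_t-m_0)$ and expand the square:
\[
\|\delta_\theta\|^2-2a_\lambda\langle\delta_\theta,A_t-m_0\rangle+a_\lambda^2\|A_t-m_0\|^2 .
\]

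\textbf{Step 2: multiply by $w+\lambda$.} Multiplying by $(w+\lambda)$ uses $(w+\lambda)a_\lambda=w-1$. The cross term becomes $2(w-1)\langle\delta_\theta,m_0-A_t\rangle$, and the quadratic term becomes $(w+\lambda)\|\delta_\theta\|^2$. Together these reproduce the integrand of \eqref{eq:ftfc-centered-risk}. The leftover term gives
\[
C_\lambda=\frac{1}{1+\lambda}\E_q\Bigl[\frac{(w-1)^2}{w+\lambda}\|A_t-m_0\|^2\Bigr],
\]
which is independent of $\theta$. For finiteness I would bound $(w-1)^2/(w+\lambda)\le w+c_\lambda$ for a constant $c_\lambda$ depending only on $\lambda>0$. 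This reduces finiteness to the assumed finite second moments of $A_t-m_0$ under $q$ and under $p$. Checking this bound and the integrability of each expanded term is the only delicate point; the rest is algebra.

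\textbf{Step 3: conditional optimum.} Since $\mathcal L_\lambda$ is a pointwise quadratic in $\delta_\theta(Y_t,t)$, I would condition on $Y_t=y$ under the reference joint law, exactly as in the proof of Theorem~\ref{thm:ftfc-weight-guidance}. The conditional coefficients are
\[
\E_q[w+\lambda\mid Y_t]=h_t+\lambda>0,\qquad
\E_q[(w-1)(m_0-A_t)\mid Y_t]=(h_t-1)m_0-h_tm_p+m_q .
\]
Here $h_tm_p$ stands for $\E_q[wA_t\mid Y_t]$, as stated in the proposition. Because $h_t+\lambda>0$, the conditional quadratic has the unique minimizer
\[
\delta^*=\frac{h_tm_p-m_q-(h_t-1)m_0}{h_t+\lambda},
\]
and $m^*_\lambda=m_0+\delta^*$ gives \eqref{eq:ftfc-residual-bias}.

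\textbf{Step 4: compare with the mixture predictor.} Apply \eqref{eq:ftfc-conditional-main} with the weight $(w+\lambda)/(1+\lambda)$ in place of $w$. The conditional weight is then $(h_t+\lambda)/(1+\lambda)$, so
\[
m_{s_\lambda}=\frac{h_tm_p+\lambda m_q}{h_t+\lambda}.
\]
Subtracting this from $m^*_\lambda$ yields $\frac{1+\lambda}{h_t+\lambda}(m_0-m_q)$. This difference vanishes when $m_0=m_q$, which is the final claim that an exact reference teacher recovers the native predictor of $s_\lambda$ rather than of $p$.
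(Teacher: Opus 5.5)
Your proposal is correct and follows essentially the paper's proof. It uses the same change of measure $ds_\lambda/dq=(w+\lambda)/(1+\lambda)$, the same expansion giving the identical constant $C_\lambda$, and the same conditional-regression computation with normalizer $h_t+\lambda$; the only difference is that you minimize the conditional quadratic of $\mathcal L_\lambda$ under $q$ instead of computing $\E_{s_\lambda}[\widetilde A_t\mid Y_t]$ directly, which is equivalent, and your finiteness bound holds with $c_\lambda=1/\lambda$, since $(w-1)^2\le(w+\lambda)(w+1/\lambda)$ for $w\ge0$.
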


\begin{proof}
Expanding the square in \eqref{eq:ftfc-residual-square} and using
$ds_\lambda/dq=(w+\lambda)/(1+\lambda)$ gives
\[
 C_\lambda=\frac{1}{1+\lambda}
  \E_q\!\left[\frac{(w-1)^2}{w+\lambda}
                     \|A_t-m_0\|^2\right].
\]
The remaining terms are exactly
$\mathcal L_\lambda/(1+\lambda)$. The teacher and calibrated ratios
are held fixed when differentiating. Conditional regression gives
$m_\lambda^*=\E_{s_\lambda}[\widetilde A_t\mid Y_t]$.
Since the conditional normalizer is $h_t+\lambda$,
\[
 \E_{s_\lambda}[a_\lambda(A_t-m_0)\mid Y_t]
 =\frac{h_t m_p-m_q-(h_t-1)m_0}{h_t+\lambda}.
\]
Adding $m_0$ proves the first identity; subtracting the mixture's
conditional mean proves the second.
\end{proof}

\paragraph{Symmetric mixture.}
At $\lambda=1$, draw an endpoint from $p$ or $q$ with equal probability
and use
\begin{equation}
 a_1(X)=\frac{w(X)-1}{w(X)+1}
       =\tanh\!\left(\tfrac12\log w(X)\right)\in[-1,1].
 \label{eq:ftfc-symmetric-residual}
\end{equation}
For $w=0$, the equality uses the limit $a_1=-1$.
Only a bounded scalar modifies the native label; the ratio network is absent at
inference. In general $-1/\lambda\leq a_\lambda\leq1$. The mixture strength
is a parameter of the endpoint law being fitted, not just of its estimator.
Shared conditioning-group masses are preserved, and any positive time weight
may multiply both objectives without changing the argument.

\paragraph{Posterior averaging removes component-label noise.}
One way to sample $s_\lambda$ is to draw a component variable
$B\in\{1,-1/\lambda\}$ with probabilities
$1/(1+\lambda)$ and $\lambda/(1+\lambda)$, then draw $X$ from $p$
or $q$, respectively. Bayes' rule gives
\[
 \Pr(B=1\mid X)=\frac{w(X)}{w(X)+\lambda},\qquad
 \E[B\mid X]=a_\lambda(X).
\]
The random label $m_0+B(A_t-m_0)$ has conditional expectation
$\widetilde A_t$ given the endpoint and its fresh native randomness.
For a fixed student and teacher, the squared-loss parameter gradient
is affine in this label. Consequently its soft-label gradient is the
conditional expectation of the random-component gradient. When these
gradients have finite second moments, the law of total covariance gives
\begin{equation}
 \operatorname{Cov}(G_{\rm soft})
 \preceq\operatorname{Cov}(G_{\rm component}).
 \label{eq:ftfc-residual-rb}
\end{equation}
This Rao--Blackwell bound compares the two label estimators under the
same mixture. It does not compare their variance with ordinary target
resampling, reference sampling, or other importance proposals, and it
does not imply identical Adam updates.

\paragraph{Error bound.}
Let $\widetilde s_{\lambda,t}$ and $\widetilde q_t$ be the corresponding
noisy or interpolated laws. Integrating \eqref{eq:ftfc-residual-bias}
and using
$d\widetilde s_{\lambda,t}/d\widetilde q_t=(h_t+\lambda)/(1+\lambda)$
yields
\begin{equation}
 \E_{t,\widetilde s_{\lambda,t}}
       \|m_\lambda^*-m_{s_\lambda}\|^2
 \leq\frac{1+\lambda}{\lambda}
       \E_{t,\widetilde q_t}\|m_0-m_q\|^2.
 \label{eq:ftfc-residual-teacher-bound}
\end{equation}
For $\Delta_\theta=\mathcal J_\lambda(\theta)
-\inf_m\mathcal J_\lambda(m)$, the infimum over all square-integrable
predictors, regression orthogonality and the triangle inequality give
\begin{equation}
 \|m_\theta-m_{s_\lambda}\|_{L^2(dt\,\widetilde s_{\lambda,t})}
 \leq\sqrt{\Delta_\theta}
 +\sqrt{\frac{1+\lambda}{\lambda}}\,
       \|m_0-m_q\|_{L^2(dt\,\widetilde q_t)}.
 \label{eq:ftfc-residual-error-budget}
\end{equation}
For the symmetric choice the squared-error amplification bound is two.
The bound does not estimate the actual teacher error. Even with an exact
teacher and fit, the endpoint guarantee requires the assumptions of
Corollary~\ref{thm:ftfc-overall} for $s_\lambda$; this construction is neither
an exact realization of $p$ nor a guarantee of improving the paper's utility.

\subsection{A centered regression on the reference sampler's transitions}
\label{app:ftfc-native-transition}

\paragraph{Scope.}
This optional realization uses saved transitions from the reference sampler
instead of fresh endpoint corruption. Fenchel calibration is unchanged, and
the approximate-teacher term of Appendix~\ref{app:ftfc-residual-regression}
vanishes at the population level. It is not used in the benchmark tables;
the identities alone do not imply better generation quality.

\paragraph{Utility and divergence are inputs to calibration.}
The endpoint target still solves
$\mathcal F(p)-\alpha D_f(p\|q)$ in
\eqref{eq:setup-endpoint-objective}, under the conditions of
Section~\ref{sec:method}. The construction below takes only its normalized
nonnegative ratio $w$ as input; it does not assume expected reward or KL
weights, so half-Pearson responses with zeros are allowed. The path KL is the
Gaussian transition-fitting discrepancy, distinct from the endpoint
$f$-divergence. This backend is optional; the flow and denoising realizations
in Section~\ref{ftfc:sec:realization} remain available.

Fix a conditioning value and suppress it in the notation. Let $Q$ be a
Markov reference path law with transitions
\[
 X_{t+1}=\mu_{q,t}(X_t)+\sigma_t\xi_t,\qquad
 \xi_t\sim\mathcal N(0,I),\quad \sigma_t>0,
 \qquad t=0,\ldots,T-1.
\]
Each innovation is independent of the preceding path. Write
$P=w(X_T)Q$, where $w\geq0$ and $\E_Qw=1$, and
$S_\lambda=(P+\lambda Q)/(1+\lambda)$ for $\lambda>0$.
The model $Q_\theta$ retains $Q$'s initial law and transition covariance,
and changes the transition mean to
$\mu_{q,t}(X_t)+\sigma_t r_\theta(X_t,t)$.
The correction must not observe its current innovation. Assume that the
displayed risks and relative entropies are finite.

\begin{proposition}[Centering and a conditional mixture correction]
\label{prop:ftfc-native-transition}
The centered risk
\begin{equation}
 \mathcal L^{\rm path}_\lambda(\theta)
 =\E_Q\sum_{t=0}^{T-1}\left[
   \tfrac12(w+\lambda)\|r_\theta(X_t,t)\|^2
   -(w-1)\langle\xi_t,r_\theta(X_t,t)\rangle\right]
 \label{eq:ftfc-native-centered}
\end{equation}
satisfies
\begin{equation}
 \mathcal L^{\rm path}_\lambda(\theta)
 =(1+\lambda)\left[\DKL(S_\lambda\|Q_\theta)
                         -\DKL(S_\lambda\|Q)\right].
 \label{eq:ftfc-native-kl}
\end{equation}
Define $h_t(x)=\E_Q[w(X_T)\mid X_t=x]$ and
$a_t(x)=\E_Q[w(X_T)\xi_t\mid X_t=x]$.
The unrestricted conditional minimizer is
\begin{equation}
 r_\lambda^*(x,t)=\frac{a_t(x)}{h_t(x)+\lambda}
 =\frac{h_t(x)}{h_t(x)+\lambda}\,r_P^*(x,t),\qquad
 r_P^*(x,t)=\E_P[\xi_t\mid X_t=x],
 \label{eq:ftfc-native-gate}
\end{equation}
where the second equality applies when $h_t(x)>0$.
When $h_t(x)=0$, $a_t(x)=0$ and $r_\lambda^*(x,t)=0$.
\end{proposition}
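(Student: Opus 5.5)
The plan is to reduce everything to the Girsanov-type likelihood ratio between two Gaussian Markov chains that share their initial law and covariances. Under $Q$, the innovations satisfy $\xi_t=(X_{t+1}-\mu_{q,t}(X_t))/\sigma_t$. Because $Q_\theta$ keeps the initial law and the covariance $\sigma_t^2I$ and only shifts the mean by $\sigma_t r_\theta(X_t,t)$, the chain rule for path densities gives
\[
 \log\frac{dQ_\theta}{dQ}(X_{0:T})
 =\sum_{t=0}^{T-1}\Bigl[\langle r_\theta(X_t,t),\xi_t\rangle-\tfrac12\|r_\theta(X_t,t)\|^2\Bigr].
\]
This follows from a one-step computation: the two Gaussian densities with means $\mu$ and $\mu+\sigma r$ and common covariance $\sigma^2I$ differ exactly by this exponent. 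Here $\xi_t$ is read as a function of the path, not as the model's own innovation.

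Next, I would write the difference of relative entropies as $\DKL(S_\lambda\|Q_\theta)-\DKL(S_\lambda\|Q)=\E_{S_\lambda}[\log dQ/dQ_\theta]$. This is valid when both terms are finite, which the statement assumes. I then change measure using $dS_\lambda/dQ=(w(X_T)+\lambda)/(1+\lambda)$. Multiplying by $1+\lambda$ gives
\[
 \E_Q\sum_t\Bigl[\tfrac12(w+\lambda)\|r_\theta\|^2-(w+\lambda)\langle\xi_t,r_\theta\rangle\Bigr].
\]
To reach the centered form \eqref{eq:ftfc-native-centered}, I add $(1+\lambda)\E_Q\langle\xi_t,r_\theta(X_t,t)\rangle$, which is zero. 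It vanishes because $r_\theta(X_t,t)$ is $\sigma(X_{0:t})$-measurable and, under $Q$, $\xi_t$ is independent of $X_{0:t}$ with mean zero. The coefficient of the cross term then becomes $-(w-1)$, which proves \eqref{eq:ftfc-native-kl}.

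For the minimizer, I would use the tower property, conditioning each summand on $X_t=x$. The term $\tfrac12(w+\lambda)\|r\|^2$ contributes $\tfrac12(h_t(x)+\lambda)\|r\|^2$. The cross term contributes $-\langle a_t(x)-\E_Q[\xi_t\mid X_t=x],r\rangle=-\langle a_t(x),r\rangle$, since the innovation is independent of $X_t$. The risk therefore decouples into pointwise strictly convex quadratics (as $\lambda>0$) in $r=r(x,t)$, each minimized uniquely at $a_t/(h_t+\lambda)$. When $h_t>0$, Bayes' rule for $P=wQ$ gives $\E_P[\xi_t\mid X_t]=a_t/h_t$, which yields the gated form. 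When $h_t(x)=0$, nonnegativity forces $w(X_T)=0$ $Q(\cdot\mid X_t=x)$-a.s., so $a_t(x)=0$; Cauchy--Schwarz applied to $w\xi_t$ makes this rigorous.

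I expect the main obstacle to be justifying the integrability. Specifically, I need the cross terms $\E_Q[(w+\lambda)\langle\xi_t,r_\theta\rangle]$ and $\E_Q\langle\xi_t,r_\theta\rangle$ to be finite, so that splitting the KL difference and applying the tower property are legitimate, and I need $\E_{S_\lambda}\log(dQ/dQ_\theta)$ to equal the KL difference. I would handle this by using the assumed finiteness of the displayed risks and relative entropies, together with the square-integrability of $r_\theta(X_t,t)$ under $S_\lambda$, which follows from finite risk because $w+\lambda\geq\lambda>0$.
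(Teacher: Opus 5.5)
Your proposal is correct and follows the paper's proof essentially step for step. Both use the Gaussian likelihood ratio $\log(dQ/dQ_\theta)=\sum_t[\tfrac12\|r_\theta\|^2-\langle\xi_t,r_\theta\rangle]$, reweight by $w+\lambda$, add the zero-mean adapted term $(1+\lambda)\E_Q\langle\xi_t,r_\theta(X_t,t)\rangle$, and then condition on $X_t$ to obtain the pointwise quadratic $\tfrac12(h_t+\lambda)\|r\|^2-\langle a_t,r\rangle$, with Bayes' rule and nonnegativity of $w$ covering the two cases. Your extra integrability bookkeeping is more careful than the paper, which simply assumes the displayed quantities are finite.
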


\begin{proof}
The Gaussian likelihood ratio gives
$\log(dQ/dQ_\theta)=\sum_t[\|r_\theta\|^2/2
-\langle\xi_t,r_\theta\rangle]$.
Since $r_\theta$ is adapted,
$\E_Q\langle\xi_t,r_\theta(X_t,t)\rangle=0$.
Multiplying the likelihood ratio by $w+\lambda$ and adding this zero-mean
term proves~\eqref{eq:ftfc-native-kl}. Conditional on $X_t=x$, the part
depending on $r$ is
$\tfrac12(h_t+\lambda)\|r\|^2-a_t^\top r$.
Its positive curvature gives the unique minimizer. Bayes' rule gives
$a_t=h_t\E_P[\xi_t\mid X_t]$ when $h_t>0$; nonnegativity of $w$
gives $a_t=0$ when $h_t=0$.
\end{proof}

\paragraph{A correction weighted by the current state.}
The factor $h_t/(h_t+\lambda)$ is the posterior probability of the target
component in $S_\lambda$ given $X_t$, so the ideal correction shrinks toward
the reference where target mass is small. This describes the conditional
optimum, not a bound on neural predictions or transition displacement. The
regression fits it directly and needs no estimate of $h_t$ at inference.
For another interpretation, set
$H_t(m)=\E_\xi[h_{t+1}(m+\sigma_t\xi)]$, so
$h_t(x)=H_t(\mu_{q,t}(x))$ by the Markov property. When differentiation
under this Gaussian integral is justified,
\begin{equation}
 r_\lambda^*(x,t)
 =\left.\sigma_t\nabla_m\log\bigl(H_t(m)+\lambda\bigr)
                                      \right|_{m=\mu_{q,t}(x)}.
 \label{eq:ftfc-native-mean-gradient}
\end{equation}
The derivative is with respect to the transition mean $m$, not generally
the state $x$; replacing it by a state gradient omits a Jacobian.

\paragraph{Sampling and deployment.}
Completing the square in~\eqref{eq:ftfc-native-centered} gives label
$(w-1)\xi_t/(w+\lambda)$ and curvature $w+\lambda$.
Sampling paths from $D=(P+Q)/2$ and multiplying both terms by
$dQ/dD=2/(1+w)$ preserves the risk. For $0<\lambda\leq1$, the corrected
curvature lies in $[2\lambda,2]$ and the signed linear coefficient in
$[-2,2]$. These bounds do not imply lower neural-gradient variance.
For a Gaussian diffusion step with classifier-free guidance $g$, write
\[
 \mu_q=A_tX_t+C_t[(1-g)\epsilon_q^-+g\epsilon_q^+],\qquad
 r_\theta=\frac{gC_t}{\sigma_t}
                 (\epsilon_\theta^+-\epsilon_q^+).
\]
Then the reference positive prediction cancels:
\begin{equation}
 \mu_q+\sigma_t r_\theta
 =A_tX_t+C_t[(1-g)\epsilon_q^-+g\epsilon_\theta^+].
 \label{eq:ftfc-native-cancellation}
\end{equation}
Training uses a frozen reference positive prediction, but deployment
needs only the frozen negative and learned positive branches. The branch
count alone does not establish wall-time parity with ordinary guidance.

\paragraph{Calibrating the mixture for a general objective.}
Fitting $S_\lambda$ after calibrating $P$ changes the endpoint target.
An alternative is to calibrate the desired mixture directly. Put
$\varrho=\lambda/(1+\lambda)$ and solve
\begin{equation}
 \sup_{v\geq\varrho,\,\E_qv=1}
       \{\mathcal F(vq)-\alpha\E_q f(v)\},\qquad
 w=\frac{v-\varrho}{1-\varrho}.
 \label{eq:ftfc-native-general-floor}
\end{equation}
Then $w\geq0$, $\E_qw=1$, and
$(w+\lambda)/(1+\lambda)=v$. Supplying this auxiliary ratio $w$ to
\eqref{eq:ftfc-native-centered} therefore projects toward the target
calibrated for the specified $\mathcal F$ and $f$. This is the original
endpoint objective with an explicit reference floor, not the unconstrained
problem or a replacement of its divergence by KL. Fixed condition masses
are preserved by the same affine transformation.
For a utility admitting the supporting envelope $C(g)$ from
\eqref{eq:ftfc-envelope}, replace $f_+^*$ in the dual by
\[
 f_{\varrho}^*(u)=\sup_{z\geq\varrho}\{uz-f(z)\},\qquad
 D_{f,\varrho}(g,\nu)=C(g)+\nu+
                      \alpha\E_q f_{\varrho}^*((g-\nu)/\alpha).
\]
The same Fenchel gap decomposition holds for feasible $v$; strong duality
still requires the corresponding attainment and regularity assumptions.
Where the differentiable strictly convex response is defined, it is
$v=\max\{\varrho,(f')^{-1}((g-\nu)/\alpha)\}$, with $\nu$ solved inside
the response to enforce normalization. Rescaling a non-KL response
afterward is generally invalid. For lower-CVaR plus a linear utility,
the finite-bank reward-knot argument remains valid because the reference
floor and divergence penalty do not depend on the threshold. Arbitrary
nonconcave utilities do not acquire a global guarantee from this change.

\paragraph{What the path projection does not guarantee.}
Even the unrestricted mean fit need not realize the terminal target.
The tilt generally changes the source law to
$dS_{\lambda,0}/dQ_0=(h_0+\lambda)/(1+\lambda)$, whereas the model keeps
$Q_0$. The KL chain rule therefore retains the source discrepancy
$\DKL(S_{\lambda,0}\|Q_0)$ for every $\theta$.
Furthermore, tilted conditional transitions need not be Gaussian with
covariance $\sigma_t^2I$; matching their means leaves a projection error.
Finite-bank ratios, sampled stored times, restricted neural predictors
and changed model occupancy add further errors.
Uniform time subsampling and coordinate averaging scale the population
risk by fixed positive constants; arbitrary time weights do not preserve
the path-KL identity. These limitations prevent substitution of this
projection for the exact realization assumed by
Corollary~\ref{thm:ftfc-overall}.

\section{Experimental Details}
\label{app:implementation}

\subsection{Target calibration and model fitting}
\label{app:ftfc-native-implementation}
FTFC assigns masses to cached pretrained samples and freezes them during
fitting. Direct calibration and the normalized neural-score solver in
Appendix~\ref{app:ftfc-method-details} use the same procedure.

\paragraph{Target calibration.}
Each endpoint stores a reward, utility features, and a conditioning group.
Features are normalized on the training bank, while prompt or molecular-size
probabilities are preserved. For concave utilities, the primal--dual gap
bounds finite-bank optimization error; an independent rings solver gives a
maximum gap below $2.5\times10^{-6}$. Lower-CVaR targets use global search,
since stationarity alone is insufficient. Held-out normalization, effective
sample size, and maximum density ratios assess bank coverage but do not bound
population or fitting error.

\paragraph{Constant adapter.}
For a frozen linear map $Wx$, the adapter takes the form
\begin{equation}
 Wx+U\bigl[(Vx)\odot(\gamma(c)-\gamma(0))\bigr],\qquad c=[1].
 \label{eq:ftfc-adapter}
\end{equation}
Here $U,V$ are low-rank projections and $\gamma$ is a descriptor encoder.
Initializing $U=0$ recovers the pretrained predictor. The descriptor is
constant, so each adapter represents one target distribution. We use AdamW
with zero weight decay; ranks and update budgets are given below.

\paragraph{Molecular fitting.}
We sample complete molecules by their target masses, preserving coordinates,
atom types, formal charges, and bonds. FlowMol~1 uses centered coordinates
and argmax categorical labels with coordinate and cross-entropy losses.
FlowMol3 retains its original time and modality weights, categorical masks,
fake atoms, and stochastic self-conditioning. The loss is averaged per
molecule, so size does not change its target weight. Fitting uses fresh priors
and the native objective, without reward-gradient or validity losses. At
generation time, atom counts follow the released training histogram; the
distinction from a bank restricted to reward-eligible molecules is discussed
in Appendix~\ref{app:ftfc-support}.

\paragraph{Stable Diffusion fitting.}
We cache final latents and prompt embeddings and train with fresh diffusion
noise and timesteps. For classifier-free guidance $g=7.5$, the effective
noise predictor is
\begin{equation}
 \epsilon_{\rm eff,\theta}
 =\epsilon_{\rm pre,uncond}
       +g(\epsilon_{\theta,cond}-\epsilon_{\rm pre,uncond}).
 \label{eq:ftfc-cfg}
\end{equation}
The unconditional prediction is frozen. We train on cached latents with
mean-squared error against the scheduler's noise target and the same guidance
convention as sampling; latents are not decoded and re-encoded.

\paragraph{Pretrained molecular checkpoints.}
QM9 fine-tuning starts from the released FlowMol~1 weights
\citep{dunn2024flowmol}.\footnote{\raggedright Pretrained FlowMol QM9
weights: \url{https://bits.csb.pitt.edu/files/FlowMol/trained_models_v02/qm9_gaussian/checkpoints/last.ckpt}.}
GEOM-Drugs fine-tuning starts from the released FlowMol3 v3.0 weights
\citep{dunn2025flowmol3}.\footnote{\raggedright Pretrained FlowMol3 GEOM-Drugs
weights: \url{https://bits.csb.pitt.edu/files/FlowMol/trained_models_v3/flowmol3/checkpoints/last.ckpt}.}

\subsection{QM9 benchmark protocol}
\label{app:qm9}

\paragraph{Dataset and pretrained model.}
QM9 contains 133,885 organic molecules with at most nine heavy atoms
from C, N, O, and F, plus explicit hydrogens
\citep{ramakrishnan2014qm9}. All methods start from the released FlowMol~1
checkpoint \citep{dunn2024flowmol} (Appendix~\ref{app:implementation}),
which jointly models coordinates, atom types, formal charges, and bonds.
We retain its preprocessing, Gaussian priors, feature schedules, categorical
decoding, and atom-count histogram; fine-tuning uses generated endpoints,
not the dataset's DFT labels.

\paragraph{Energy reward.}
The reward is $R(x)=-E_{\mathrm{GFN1\text{-}xTB}}(x)$ in Hartree, evaluated at
the generated, unrelaxed coordinates with the predicted total charge.
The differentiable \texttt{dxtb} oracle \citep{friede2024dxtb} supplies
coordinate gradients; atom, charge, and bond labels are decoded by argmax.
We use float32, implicit SCF differentiation, and tolerance $10^{-4}$.
Failed calculations receive $-500$ Ha; no draw is replaced, optimized, or
fragment-pruned before evaluation.

\paragraph{Baseline training.}
AM \citep{domingo2025adjoint} uses the mean-reward derivative scaled by
$1/\alpha$, with $\alpha=0.0045$, for 240 updates. FDC
\citep{de2025flow} targets the upper superquantile at $\beta=0.998$,
with no fixed terminal KL penalty. Its proximal coefficient
$\eta=0.01$ is distinct from FTFC's terminal KL coefficient.
The update counts and FDC coefficients follow Appendix E.4 of
\citet{de2025flow}; Table~\ref{tab:qm9-settings} gives the remaining
settings of our reconstruction.

\begin{table}[htbp]
  \centering
  \caption{\textbf{Fixed QM9 baseline settings.}
  FTFC settings are listed in Appendix~\ref{app:qm9-ftfc}.}
  \label{tab:qm9-settings}
  \begingroup
  \small
  \setlength{\tabcolsep}{5pt}
  \renewcommand{\arraystretch}{1.22}
  \begin{tabularx}{\linewidth}{@{}>{\raggedright\arraybackslash}p{0.38\linewidth}*{2}{>{\raggedright\arraybackslash}X}@{}}
    \toprule
    \textbf{Setting} & \textbf{AM} & \textbf{FDC} \\
    \midrule
    Objective & Mean reward, $\alpha=0.0045$ & Upper SQ, $\beta=0.998$, $\alpha=0$ \\
    Optimizer & Adam & Adam \\
    Adam moments / epsilon & $(0.9,0.999)$ / $10^{-8}$ & $(0.9,0.999)$ / $10^{-8}$ \\
    Learning rate & $10^{-4}$ & $10^{-4}$ \\
    Optimizer updates & $240$ & $10\times2=20$ \\
    Outer iterations & $1$ & $10$ \\
    Proximal coefficient $\eta$ & --- & $0.01$ \\
    Trajectories per update & $32$ & Up to $8$ tail + $24$ body \\
    Threshold bank per outer iteration & --- & $4{,}096$ fresh samples \\
    Candidate pool per inner update & --- & $4{,}096$ fresh samples \\
    Gradient-norm clipping & $10^{5}$ & $10^{5}$ \\
    Trajectory discretization & $250$ steps & $250$ steps \\
    Training sampler & Memoryless, terminal denoising & Memoryless, terminal denoising \\
    Evaluated checkpoint & Final scheduled update & Final outer iteration \\
    \bottomrule
  \end{tabularx}
  \endgroup
\end{table}

At each outer iteration, FDC estimates the 0.998-quantile from 4,096
fresh samples and holds it fixed for two inner updates. Each inner
update generates a separate pool of 4,096 trajectories and uniformly
retains up to eight tail and 24 body examples. For a stratum with
$N_h$ pool members and $n_h$ retained trajectories, each retained
example has weight $B N_h/(4096n_h)$, where $B$ is the retained batch
size. This corrects the batch mean for stratified sampling. Empty tail
strata contribute no examples. The reference model and inner optimizer
are updated and reset, respectively, at each outer iteration.

\paragraph{TFFT and tail-gradient availability.}
\label{app:qm9-tfft}
TFFT uses the AM solver for 240 updates, batch size 32, learning rate
$10^{-4}$, and $\alpha=0.0045$. Its threshold schedule is calibrated
on 10,000 independent pretrained samples. The right-tail level rises
from $\beta=0$ to $0.998$ during the first 120 updates and then remains
fixed. In the completed run, every positive-$\beta$ update had zero active
tail-reward gradients; only the initial expected-reward update had an active
signal. We report this checkpoint, which does not demonstrate sustained tail
optimization.

\subsubsection{FTFC configuration}
\label{app:qm9-ftfc}
Each of three FTFC runs uses 4,096 pretrained endpoints and upper
superquantile level $\beta=0.998$. Calibration uses rewards
$-E_{\mathrm{GFN1}}/100$ and KL coefficient $\alpha=0.01$, equivalent
to one Hartree in unscaled units. It preserves the atom-count marginal
and solves for the shared threshold and group normalizers to a
finite-bank primal--dual gap of $10^{-8}$.

With these masses fixed, we train rank-16 adapters for 80 AdamW updates:
batch size 32, microbatch size 16, learning rate $10^{-3}$, gradient-norm
clipping at 10, and float32 precision. Sampling endpoints according to
their target masses implements the weighted objective; the loss is
not weighted again. We retain the native coordinate and categorical
losses described in Appendix~\ref{app:ftfc-native-implementation}.
Evaluation uses the final adapter weights, without EMA or checkpoint
selection by test reward. The finite-bank certificate bounds target optimization error, not fitting error.

\paragraph{Sampling and repeated runs.}
Evaluation uses 250 explicit Euler steps on a uniform grid from 0.02
to 1, with 50,000 raw attempts per run in batches of 256. All methods share the evaluation atom-count list and random draws. The table
reports three FTFC runs and one run each for pretrained, AM, FDC, and TFFT,
using final scheduled checkpoints; additional AM runs share random streams and
are not independent repetitions.

\paragraph{Training time.}
\label{app:qm9-time}
Table~\ref{tab:qm9} reports wall-clock time per run on a shared RTX 4090
server, including method-specific sample acquisition, scoring, and
fitting. AM and FDC take approximately 12.0 and 35.2 hours. FTFC takes
58.4 minutes on average: $\approx3,462.0$ seconds for bank acquisition, 0.3
seconds for calibration, and 40.5 seconds for fitting. The acquisition
cost is charged even when the bank is reused. TFFT takes approximately
$15$ hours, including its $10,000$-sample threshold calibration.
Backbone pretraining, evaluation, and development runs are excluded. These
figures compare the stated budgets under shared-server load, not time to a
matched reward.

\subsubsection{Evaluation metrics}
\label{app:qm9-metrics}
Table~\ref{tab:qm9} uses all raw attempts, including the failure penalty;
all reported attempts have successful energy calculations. For ascending rewards
$R_{(1)}\leq\cdots\leq R_{(N)}$, the exact empirical superquantile is
\begin{equation}
  \widehat{\mathrm{SQ}}_\beta
  =\frac{\sum_{j=1}^{k}R_{(N-j+1)}+\delta R_{(N-k)}}{m},
  \qquad m=(1-\beta)N,\quad k=\lfloor m\rfloor,\quad\delta=m-k,
  \label{eq:qm9-fractional-sq}
\end{equation}
with the boundary term omitted when $\delta=0$. At $N=50{,}000$, the
SQ$_{0.998}$ and upper-10\% columns average the best 100 and 5,000
rewards, respectively.

Graph validity requires RDKit sanitization, a single connected
component, and successful canonical-SMILES conversion. It is reported
as a percentage of all attempts. Synthetic accessibility (SA)
\citep{ertl2009synthetic} is averaged over valid molecules with a
successful score, after removing explicit hydrogens; lower is better.
FTFC entries report the mean and sample standard deviation across
three runs. Baseline entries are individual-run measurements.

\paragraph{Topology-based validity reward.}
\label{app:qm9-validity}
We also evaluate $R_{\mathrm{valid}}\in[0,1]$, adapted from
\citet[Eq.~13]{kotani2026atomic}, using
Eq.~\eqref{eq:geomdrugs-validity-reward} and the GFN2-xTB protocol in
Appendix~\ref{app:geomdrugs-validity}. Generated hydrogens and formal
charges are retained. For molecules without hydrogens, a single-point
energy replaces the hydrogen-relaxation reference. Every raw draw
contributes, with zero assigned to structural or numerical failures;
RDKit is not a prefilter. This metric is evaluation-only and differs
from both graph validity and the GFN1 training reward.

The topology evaluation regenerates the original sampling populations from
the final checkpoints. Canonical SMILES, graph-validity flags, and available
geometric checks agree with the original evaluations, but missing coordinates
prevent exact identity checks for every geometry.

\paragraph{Reproduction scope.}
All comparisons use this protocol and a shared checkpoint. FDC's public
description omits some molecular sampling conventions, so our reconstruction
has not been verified against the authors' implementation or published results.

\subsection{GEOM-Drugs benchmark protocol}
\label{app:geomdrugs}

\paragraph{Pretrained model.}
We reconstruct the molecular task of \citet{wang2026efficient} using
the released FlowMol3 v3.0 checkpoint \citep{dunn2025flowmol3},
pretrained on GEOM-Drugs with explicit hydrogens and Kekulized bonds
\citep{axelrod2022geom}. All methods retain its atom-count histogram,
coordinate dynamics, categorical continuous-time Markov chains
(CTMCs), self-conditioning, and fake-atom handling. The results are
computed with this shared implementation rather than taken from
published tables.

\paragraph{Energy reward.}
Tables report $R(x)=-E_{\mathrm{GFN1}}(x)$ in atomic units; training
uses $6R(x)$ and $\alpha=1$, following \citet{wang2026efficient}.
We compute energies and coordinate derivatives with \texttt{dxtb}
\citep{friede2024dxtb} in float64, using the predicted total formal
charge and excluding fake atoms. Calculations allow 300 SCF iterations
at tolerance $10^{-8}$ and one retry. Invalid molecules or failed
calculations contribute no reward derivative and are ineligible for
threshold estimation. They remain in the evaluation population.
We apply neither energy clipping nor terminal-gradient clipping.

\paragraph{Baseline training.}
Table~\ref{tab:geomdrugs-settings} summarizes the settings. EXP-FT
uses Adjoint Matching \citep{domingo2025adjoint} with expected reward;
R-TFFT uses the same solver with a hinge reward. FDC
\citep{de2025flow} uses three successive 120-update stages and two
additional threshold-sampling stages. Evaluation uses the final
scheduled checkpoints. The methods therefore differ in total compute.

\begin{table}[htbp]
  \centering
  \caption{\textbf{Fixed molecular benchmark settings.}
  Method-specific choices and evaluation definitions are detailed in
  Appendix~\ref{app:geomdrugs}.}
  \label{tab:geomdrugs-settings}
  \small
  \setlength{\tabcolsep}{5pt}
  \renewcommand{\arraystretch}{1.12}
  \begin{tabularx}{\linewidth}{@{}lX@{}}
    \toprule
    \textbf{Setting} & \textbf{Value} \\
    \midrule
    Runs per method & $3$ \\
    Reward / KL coefficient & $6(-E_{\mathrm{GFN1}})$, $\alpha=1$ \\
    Target right-tail level & $\beta=0.9$ \\
    Optimizer & AdamW; learning rate $10^{-4}$; weight decay $0$ \\
    Updates & EXP-FT / R-TFFT: $120$; FDC: $3\times120$ \\
    Effective batch / microbatch & $8$ / at most $2$ molecules \\
    Training time grid & $100$ uniform transitions on $[0,1]$ \\
    Adjoint interval / loss records & $t\geq0.5$; $10$ selected records \\
    Gradient clipping & None (optimizer and terminal reward gradients) \\
    R-TFFT curriculum & $\beta_j=0.9j/119$, $j=0,\ldots,119$ \\
    Shared prior corpus & $10{,}000$ draws; $9{,}988$ eligible \\
    Moving FDC corpora & $10{,}000$ draws before each of stages $2$ and $3$ \\
    FDC stage coefficients / score time & $(1.2,3.1,5.0)$; $t=0.98$ \\
    Evaluation per run & $2{,}000$ draws; batch size $16$ \\
    Evaluation / corpus sampler & Native Euler/CTMC; $250$ time-grid points \\
    CTMC sampling parameters & Stochasticity $30$; high-confidence threshold $0.9$ \\
    Energy precision / SCF & Float64; $300$ iterations; tolerance $10^{-8}$ \\
    \bottomrule
  \end{tabularx}
\end{table}

Training uses 100 uniform memoryless transitions with coordinate noise
centered per molecule. The adjoint runs over $t\geq0.5$; each loss uses the
five records nearest the endpoint and five sampled from the remaining
eligible records. Self-conditioning contexts are detached during the adjoint.
EXP-FT and R-TFFT use residual weights $(3,0.4,1,2)$ for coordinates,
atom types, charges, and bonds; FDC uses only the coordinate residual (weight
1).

\paragraph{FDC thresholds and regularization.}
The first stage uses the empirical 0.9-quantile of the shared pretrained
corpus; each later stage refreshes it from 10,000 samples of the preceding
model. With $\eta=(1.2,3.1,5.0)$, the reward derivative is the hinge gradient
scaled by $1/[\eta_k(1-\beta)]$, and the density regularizer is
$-\alpha/\eta_k$ times the estimated current-to-pretrained coordinate-score
difference. Scores use endpoint predictions at $t=0.98$ without
self-conditioning. The pretrained anchor is fixed; each stage freezes its
reference model and carries over AdamW moments. We have not verified these
settings against the authors' molecular implementation.

\paragraph{R-TFFT thresholds.}
The shared corpus contains 10,000 pretrained molecules; $M=9{,}988$ have
valid graphs and converged energies. The curriculum is
$\beta_j=0.9j/119$ for $j=0,\ldots,119$. For $\beta_j>0$, solve
\begin{equation}
  \tau_j \in \argmin_\tau
  \left\{\tau+a\log\left[
    \frac{1}{M}\sum_{i=1}^{M}
    \exp\left(\frac{[R_i-\tau]_+}{a(1-\beta_j)}\right)
  \right]\right\},
  \qquad a=\frac{\alpha}{6}=\frac{1}{6}.
  \label{eq:geomdrugs-dual}
\end{equation}
Use bounded scalar minimization with tolerance $10^{-7}$ on
$[R_{\min}-2s,R_{\max}+2s]$, where
$s=\max(R_{\max}-R_{\min},\operatorname{std}(R),1)$. At $\beta_0=0$,
$\tau_0=R_{\min}-20s$ recovers the expected-reward derivative. The training
pseudo-reward is $6[R(x)-\tau_j]_+/(1-\beta_j)$.

\paragraph{Tail-gradient availability.}
\label{app:geomdrugs-tail}
For positive curriculum values, thresholds range from 176.8775 to 179.0905,
but only one eligible pretrained reward exceeds them. No eligible training
molecule exceeds its threshold in any R-TFFT run, so positive-$\beta$ updates
receive no active hinge-reward gradient. The initial expected-reward update
and later reference matching can still change the model. We report all three
runs; this is a limitation of our finite-sample reconstruction, not of TFFT
in general.

\subsubsection{Evaluation samples and metrics}
\label{app:geomdrugs-metrics}
Each method is evaluated in three runs with 2,000 molecules per run,
using the native Euler/CTMC sampler with 250 time-grid points and
batch size 16. Evaluation uses common random draws across methods.
Failed or repeated molecules are retained without replacement or
deduplication.

\paragraph{Energy and tail reward.}
Mean reward averages the unscaled negative GFN1 energy over molecules
that pass RDKit sanitization and have a finite, converged energy.
For the $N_s$ eligible rewards in run $s$, ordered increasingly, the
right-tail statistic is
\begin{equation}
  \widehat{\mathrm{R\mbox{-}CVaR}}_{0.9,s}
  =\frac{1}{m_s}\sum_{i=N_s-m_s+1}^{N_s}R_{(i)},
  \qquad m_s=\lceil0.1N_s\rceil.
  \label{eq:geomdrugs-metrics}
\end{equation}
This averages the best-scoring tenth, with equal weights and no
fractional boundary correction. Both energy metrics condition on
successful scoring. Since they use total energies of molecules with
varying size and composition, higher reward alone does not establish
better geometry or synthesizability.

\paragraph{Graph validity and synthetic accessibility.}
Validity is the percentage of all draws passing RDKit sanitization.
It does not require connectivity, energy convergence, or a stable
three-dimensional geometry. SA \citep{ertl2009synthetic} is averaged
over sanitized molecules with finite scores, regardless of energy
convergence; lower is better. The evaluation retains explicit hydrogens
when computing SA. This differs from the QM9 convention and must be
accounted for when comparing SA across benchmarks.

\paragraph{Aggregation.}
Tables report the mean and sample standard deviation across three runs,
computed from each run's metric. Validity uncertainty is in percentage
points. The runs share one pretrained corpus, so this variation does
not include uncertainty from acquiring a new corpus. Energy and SA
use the eligible populations above; the topology reward below uses
all generated molecules.

\paragraph{Qualitative examples.}
\label{app:geomdrugs-random-samples}
Figure~\ref{fig:geomdrugs-random-samples} shows four molecules per method.
The baselines use the first four draws from a separate sampling run.
FTFC uses an earlier checkpoint than the quantitative table: three
panels are unselected draws, while Draw~2 is the first new valid molecule
with converged reward $R\geq80$. The first candidate met this threshold.
This selected panel is excluded from quantitative evaluation and
cannot be used to compare overall performance.

All methods use the same sampling settings as the benchmark. The
annotations give the unscaled GFN1 rewards at the generated geometries.
Rendering applies only rigid alignment, hides hydrogens, and fits
each molecule to its panel; it does not optimize coordinates. Panel
sizes therefore do not represent a common spatial scale.

\subsubsection{Topology-based validity reward}
\label{app:geomdrugs-validity}
We adapt the validity reward of \citet[Eq.~13]{kotani2026atomic} to
measure whether relaxation preserves a generated molecule's heavy-atom
connectivity and how much energy it releases. All 30,000 benchmark
draws are scored, including RDKit-rejected graphs and failed GFN1
calculations. This metric is used only for evaluation.

\paragraph{Definition.}
Let $A_i$ indicate that molecule $i$ passes the connectivity, collision,
relaxation, and hydrogen-integrity checks below. We call this event
xTB topology preservation (XTP). Let $h_i$ be its number of heavy atoms,
$E_i^{\mathrm H}$ the energy after optimizing hydrogens with heavy atoms
fixed, and $E_i^{\mathrm F}$ the energy after unconstrained relaxation.
With energies in kcal/mol, the reward is
\begin{equation}
  v_i =
  \begin{cases}
    0, & A_i=0,\\
    0.60+0.40\exp(-\varepsilon_i/T_\varepsilon), & A_i=1,
  \end{cases}
  \qquad
  \varepsilon_i=\frac{|E_i^{\mathrm H}-E_i^{\mathrm F}|}{h_i},
  \quad T_\varepsilon=2.00.
  \label{eq:geomdrugs-validity-reward}
\end{equation}
The strain $\varepsilon_i$ and scale $T_\varepsilon$ are in kcal/mol
per heavy atom. The absolute energy gap follows the released reward
implementation and agrees with the signed relaxation decrease for
all passing samples here. Passing molecules receive a base reward of
0.60, with up to 0.40 more for low strain; any structural or numerical
failure receives zero.

\paragraph{Relaxation protocol.}
We retain generated hydrogens and formal charges, remove fake atoms, and
choose the minimum spin consistent with electron-count parity. The reference
heavy-atom graph uses predicted bonds without bond orders; RDKit sanitization
is not a prefilter. After connectivity and the released ADT collision checks,
GFN2-xTB relaxation proceeds in three stages:
\begin{enumerate}[leftmargin=*,itemsep=1pt,topsep=3pt]
  \item Optimize hydrogens with all heavy atoms fixed, using L-BFGS
  for at most 500 cycles.
  \item Relax all atoms while restraining the predicted heavy-atom
  bond distances to their values after hydrogen relaxation, with
  force constant 0.5.
  \item Remove all restraints and perform unconstrained relaxation.
\end{enumerate}
Every stage must converge with finite coordinates and energy. We use xTB
6.7.1, accuracy 1.0, at most 250 SCC iterations, and standard geometry
optimization tolerances. Time limits are 300 seconds for hydrogen relaxation
and 180 seconds for each later stage; timeouts and numerical failures score zero.

In the final geometry, atoms $i,j$ are connected when
$\|x_i-x_j\|<1.3[r_{\mathrm{cov}}(Z_i)+r_{\mathrm{cov}}(Z_j)]$,
using Cordero covalent radii. The resulting heavy-atom edge set must
match the predicted graph exactly. Hydrogens farther than 1.6\,\AA{}
from every heavy atom are detached. An even number of detached
hydrogens is removed and relaxation restarts, up to three times;
an odd number causes rejection. These rules—native hydrogens, rejection of odd detachments, and convergence
at every stage—differ from the authors' full generation and scoring pipeline.

\paragraph{Aggregation and interpretation.}
For $n=2{,}000$ molecules in run $s$, define
\begin{equation}
  R_{\mathrm{valid},s}=\frac{1}{n}\sum_{i=1}^{n}v_{s,i},
  \qquad
  p_s=\frac{1}{n}\sum_{i=1}^{n}A_{s,i},
  \qquad
  q_s=\frac{\sum_{i:A_{s,i}=1}v_{s,i}}{\sum_i A_{s,i}}.
  \label{eq:geomdrugs-validity-aggregation}
\end{equation}
Here $p_s$ is the XTP pass rate and $q_s$ the conditional mean reward.
When $p_s>0$,
\begin{equation}
  R_{\mathrm{valid},s}=p_s q_s,
  \qquad
  0.60p_s\leq R_{\mathrm{valid},s}\leq p_s\leq1.
  \label{eq:geomdrugs-validity-decomposition}
\end{equation}
Failures remain in the denominator, so $R_{\mathrm{valid}}$ combines topology
preservation with the strain of passing molecules; it is not a validity
percentage. It also differs from the GFN1 training reward. Local connectivity
preservation does not establish a global energy minimum or agreement with a
reference conformer.

Table~\ref{tab:geomdrugs-validity-diagnostics} separates these effects. FTFC has
the highest observed pass rate, while pretrained has the lowest relaxation
strain. R-TFFT passes more often than EXP-FT and FDC but has greater strain
among passing molecules; graph validity or XTP rate alone is therefore
insufficient to characterize the combined reward.

\begin{table}[htbp]
  \centering
  \caption{\textbf{Decomposition of the additional validity reward.}
  These diagnostics cover the same 30,000 draws as Table~\ref{tab:geomdrugs}.
  XTP is the fraction of all draws that pass the complete protocol.
  The remaining columns condition on XTP: mean validity reward and
  mean relaxation strain in kcal/mol per heavy atom. Each entry is
  the mean \(\pm\) sample standard deviation of three runs.
  Bold values mark the best means before rounding: higher XTP and reward,
  lower strain.}
  \label{tab:geomdrugs-validity-diagnostics}
  \begingroup
  \resultstablestyle
  \setlength{\tabcolsep}{8pt}
  \begin{tabular}{@{}lccc@{}}
    \toprule
    \textbf{Method} & \textbf{XTP (\%)} & \textbf{Reward given XTP} & \textbf{Strain given XTP} \\
    \midrule
    Pretrained & $98.52\uncertainty{0.20}$ & $\bestvalue{0.97}\uncertainty{0.00}$ & $\bestvalue{0.18}\uncertainty{0.00}$ \\
    EXP-FT (AM) & $97.68\uncertainty{0.12}$ & $0.95\uncertainty{0.00}$ & $0.27\uncertainty{0.02}$ \\
    FDC ($K=3$) & $97.68\uncertainty{1.08}$ & $0.95\uncertainty{0.01}$ & $0.27\uncertainty{0.04}$ \\
    R-TFFT & $98.23\uncertainty{0.26}$ & $0.94\uncertainty{0.02}$ & $0.32\uncertainty{0.11}$ \\
    \midrule

    \ourmethod{FTFC (ours)} & $\bestvalue{98.67}\uncertainty{0.08}$ & $0.97\uncertainty{0.00}$ & $0.19\uncertainty{0.00}$ \\
    \bottomrule
  \end{tabular}
  \endgroup
\end{table}

\subsubsection{FTFC configuration and training cost}
\label{app:geomdrugs-training-time}

\paragraph{FTFC training.}
FTFC solves the upper-CVaR dual with $\beta=0.9$, KL coefficient
$\alpha=1$, and reward scale six on the 9,988 eligible pretrained
endpoints. Normalization preserves the empirical masses of 92 molecular
size groups. The fixed target is fitted with a rank-16 adapter for
120 updates: batch size eight, microbatch size one, learning rate
$10^{-4}$, float32 precision, and gradient-norm clipping at ten.
No new reward queries are needed during fitting. Evaluation uses
the final adapter weights.

\paragraph{Cost with a cached corpus.}
Table~\ref{tab:geomdrugs-training-time} reports active GPU-allocation
time per run on an RTX 3090 Ti, including CPU work within training.
It includes setup, calibration, fitting, and method-specific online
sampling and scoring. The shared pretrained corpus is already available;
evaluation, backbone pretraining, and interruption downtime are excluded.
FTFC calibration takes 0.246 seconds and fitting 132.52 seconds on
average, or 2.36 minutes including setup and export. FDC additionally
requires two 10,000-molecule threshold refreshes. These timings compare
the stated training budgets, rather than time to reach the same reward; the
R-TFFT comparison also has the tail-gradient limitation described in
Appendix~\ref{app:geomdrugs-tail}.

\begin{table}[htbp]
  \centering
  \caption{\textbf{Training time on GEOM-Drugs.}
  Active minutes per run on one RTX 3090 Ti, with the shared pretrained
  corpus available. Values are mean \(\pm\) sample standard deviation
  over three runs. Bold values mark the lowest mean times among
  fine-tuned methods, including ties; updates are prescribed budgets.}
  \label{tab:geomdrugs-training-time}
  \begingroup
  \resultstablestyle
  \setlength{\tabcolsep}{4pt}
  \begin{tabularx}{\linewidth}{@{}l r *{3}{>{\centering\arraybackslash}X}@{}}
    \toprule
    \textbf{Method} & \textbf{Updates}
    & \textbf{Fitting + setup} & \textbf{Threshold refresh}
    & \textbf{Total training} \\
    \midrule

    Pretrained & $0$ & $0.00$ & $0.00$ & $0.00$ \\
    EXP-FT (AM) & $120$
    & $129.40\uncertainty{0.89}$ & $\bestvalue{0.00}$
    & $129.40\uncertainty{0.89}$ \\
    FDC ($K=3$) & $360$
    & $388.11\uncertainty{2.07}$
    & $414.65\uncertainty{11.03}$
    & $802.75\uncertainty{13.02}$ \\
    R-TFFT & $120$
    & $125.83\uncertainty{0.36}$ & $\bestvalue{0.00}$
    & $125.83\uncertainty{0.36}$ \\
    \midrule

    \ourmethod{FTFC (ours)} & $120$
    & $\bestvalue{2.36}\uncertainty{0.08}$ & $\bestvalue{0.00}$
    & $\bestvalue{2.36}\uncertainty{0.08}$ \\
    \bottomrule
  \end{tabularx}
  \par\vspace{4pt}
  \begin{minipage}{\linewidth}
    \footnotesize\color{black!65}
    Evaluation, original backbone pretraining, and documented interruption intervals
    are excluded. FDC's two required 10,000-molecule threshold refreshes
    are included. The common offline corpus is accounted for separately
    in the text. Total uncertainty is computed from per-run totals;
    rounded component means need not sum exactly to the rounded total.
  \end{minipage}
  \endgroup
\end{table}


\begin{figure}[htbp]
  \centering
  \includegraphics[width=\linewidth]{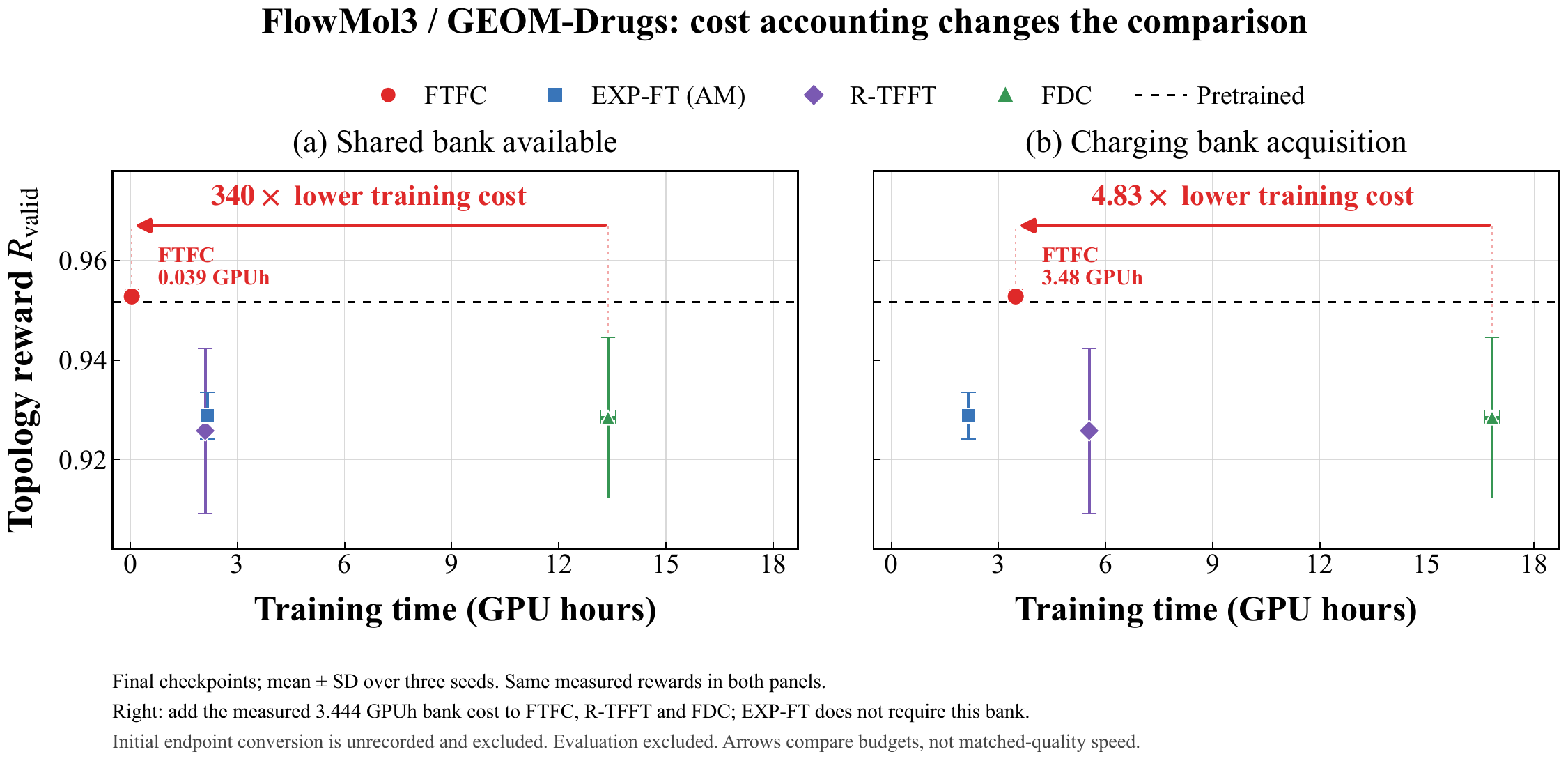}
  \caption{\textbf{Effect of corpus acquisition on GEOM-Drugs training cost.}
  Both panels use the same final topology rewards, averaged over three
  runs with 2,000 generated molecules per run; error bars
  show sample standard deviations. (a) The shared corpus is available.
  (b) Its measured acquisition cost, 3.444 GPU-hours, is charged once
  to each method that requires it. Initial endpoint conversion was
  not timed and is excluded, as is evaluation. Arrows compare recorded
  budgets rather than time to a matched reward. The horizontal dashed
  line shows the pretrained topology reward.}
  \label{fig:geomdrugs-training-cost}
\end{figure}


\subsection{Stable Diffusion benchmark protocol}
\label{app:stable-diffusion}

\paragraph{Model and prompts.}
All methods start from Stable Diffusion v1.5. Baselines use the released
TFFT implementation,\footnote{\url{https://github.com/zifan-5528/TFFT}}
which updates the full U-Net while freezing the VAE, text encoder, reward
model, and pretrained reference. FTFC uses the adapter procedure in
Appendix~\ref{app:ftfc-native-implementation}.

The training list has 10,000 records and 9,803 distinct prompts; evaluation
uses 100 prompts, two of which also occur in training, including the astronaut
prompt in Figure~\ref{fig:stable-diffusion-qualitative}. Baselines use their
final scheduled checkpoints. FTFC uses the evaluated checkpoint with the
highest lower-tail CVaR, so its result includes evaluation-set selection.

\paragraph{Baseline objectives and budgets.}
For raw ImageReward $r(x,c)$, EXP-FT uses Adjoint Matching
\citep{domingo2025adjoint} with the identity reward. L-TFFT and our
two-stage FDC reconstruction use a lower-tail hinge:
\begin{equation}
  \phi_{\mathrm{EXP}}(r)=r,
  \qquad
  \phi_{\tau}(r)=-\frac{[\tau-r]_+}{\beta},
  \qquad \beta=0.2,\quad [a]_+=\max(a,0).
  \label{eq:sd-reward-transform}
\end{equation}
Table~\ref{tab:sd-method-settings} gives thresholds and budgets. Thresholds
are fixed within each stage. FDC's second stage continues from the first with
a fresh optimizer and warmup. These settings reconstruct the baselines
\citep{de2025flow,wang2026efficient}; exact agreement with the authors'
experiments has not been established.

\begin{table}[!htbp]
  \centering
  \small
  \setlength{\tabcolsep}{4pt}
  \caption{Stable Diffusion method budgets and training thresholds.
  All trainable baselines are assigned 1,600 optimizer updates in total.}
  \label{tab:sd-method-settings}
  \begin{tabularx}{\linewidth}{@{}l*{3}{>{\centering\arraybackslash}X}@{}}
    \toprule
    \textbf{Method} & \textbf{Epochs} & \shortstack{\textbf{Optimizer}\\\textbf{updates}} & \shortstack{\textbf{Raw hinge}\\\textbf{threshold $\tau$}} \\
    \midrule
    Pretrained~\methodcite{rombach2022latent} & $0$ & $0$ & --- \\
    EXP-FT~\methodcite{domingo2025adjoint} & $20$ & $1{,}600$ & Identity reward \\
    L-TFFT~\methodcite{wang2026efficient} & $20$ & $1{,}600$ & $0.700759$ \\
    FDC, stage $1$~\methodcite{de2025flow} & $10$ & $800$ & $-0.49$ \\
    FDC, stage $2$~\methodcite{de2025flow} & $10$ & $800$ & $-0.09$ \\
    \bottomrule
  \end{tabularx}
\end{table}

\paragraph{Gradient estimation and optimization.}
The terminal adjoint is $-100g$. For each endpoint, we add 20 Gaussian
perturbations with standard deviation 0.02, decode and score them, and
differentiate the identity or hinge reward. Norms are clipped at the per-image
85th percentile before averaging, so the hinge acts before smoothing and
clipping. The control penalty remains relative to the frozen pretrained U-Net.
Table~\ref{tab:sd-training-settings} gives the remaining settings: each epoch
has 80 updates with 125 replay records per update; each loss minibatch samples
two denoising positions from the first 60\% of the trajectory and two from the
remainder. Replay losses are not importance-weighted. Guidance is used only
for image sampling, not adjoint or control computations.

\begin{table}[!htbp]
  \centering
  \small
  \renewcommand{\arraystretch}{1.08}
  \caption{Shared Stable Diffusion baseline training settings. Reward-gradient
  clipping during smoothing is distinct from optimizer-gradient clipping.}
  \label{tab:sd-training-settings}
  \begin{tabularx}{\linewidth}{@{}lX@{}}
    \toprule
    \textbf{Setting} & \textbf{Value} \\
    \midrule
    Optimizer & AdamW; learning rate $3\times10^{-6}$ \\
    Adam moments, epsilon & $(0.9,0.95)$, $10^{-8}$ \\
    Weight decay & $0$ \\
    Learning-rate schedule & Linear warmup for $20$ updates, then constant \\
    Minibatch / accumulation & $5$ records / $25$ minibatches \\
    Replay buffer / passes & $100$ trajectories / $10$ passes \\
    Trajectory / loss positions & $50$ denoising steps / $4$ sampled positions \\
    Training reward & ImageReward-v1.0; multiplier $100$ \\
    Tail objective & Lower tail, $\beta=0.2$, for FDC and L-TFFT \\
    Gradient smoothing & $20$ latent perturbations, Gaussian std.\ $0.02$ \\
    Smoothing gradient clipping & Per-image $0.85$-quantile of perturbation norms \\
    Other clipping & Optimizer-gradient and per-sample loss clipping disabled \\
    Guidance / prompt dropout & Sampling CFG $7.5$ / dropout $0.2$ \\
    Adjoint / control CFG & Both disabled \\
    Precision & Float32; TF32 permitted \\
    Validation frequency & Every $0.1$ epoch on the $100$ evaluation prompts \\
    Final model selection & Last checkpoint after the scheduled updates \\
    Hardware per worker & One NVIDIA A100-SXM4 with $80$\,GB memory \\
    \bottomrule
  \end{tabularx}
\end{table}

\paragraph{Sampling.}
Training and evaluation use the released stochastic DDIM scheduler
with 50 steps, $\eta=1$, trailing timestep spacing, and noise-schedule
endpoints 0.002 and 0.009. The terminal cumulative alpha is adjusted
to $(1+\bar\alpha_{t_{\mathrm{last}}})/2$. Each method generates ten
$512\times512$ images for each of 100 prompts, using guidance 7.5 and
common random draws across methods. All 1,000 images enter quantitative
evaluation without selection. Scorers use their released preprocessing
and unperturbed images.

\paragraph{ImageReward and lower-tail CVaR.}
ImageReward-v1.0 \citep{xu2023imagereward} predicts preference for a
prompt--image pair. Let $r_{pi}$ be its raw score, $P=100$ the number
of prompts, and $n=10$ the number of images per prompt. The mean is
\begin{equation}
  \widehat\mu_{\mathrm{IR}}
  =\frac{1}{P}\sum_{p=0}^{P-1}\bar r_p,
  \qquad \bar r_p=\frac{1}{n}\sum_{i=1}^{n}r_{pi}.
  \label{eq:sd-mean-reward}
\end{equation}
Sorting all $M=Pn$ scores jointly gives the lower-tail statistic
\begin{equation}
  \widehat{\mathrm{L\mbox{-}CVaR}}_{0.2}
  =\frac{1}{K}\sum_{j=1}^{K}r_{(j)},
  \qquad K=\lceil0.2M\rceil=200.
  \label{eq:sd-left-cvar}
\end{equation}
This is a global tail across prompts, not an average of per-prompt
tails. Evaluation uses raw scores and an empirical tail boundary,
independently of the fixed training thresholds.

\paragraph{CLIP and HPSv2.1.}
CLIP ViT-L/14 \citep{radford2021clip} measures text--image compatibility
through the cosine similarity of image and text features:
\begin{equation}
  C(x,c)=\frac{f(x)^\top h(c)}{\|f(x)\|_2\|h(c)\|_2}.
  \label{eq:sd-clip}
\end{equation}
HPSv2.1 \citep{wu2023hpsv2} supplies a separate learned preference
score. Both are reported without rescaling, averaged within prompts
and then across prompts. Neither is used for training.

\paragraph{DreamSim diversity.}
For unit-normalized embeddings $z_{pi}$ from the default pretrained
DreamSim model \citep{fu2023dreamsim}, we compute
\begin{equation}
  D_p=\frac{1}{2n(n-1)}\sum_{i=1}^{n}\sum_{j=1}^{n}
      \|z_{pi}-z_{pj}\|_2^2,
  \qquad \widehat D=\frac{1}{P}\sum_{p=0}^{P-1}D_p.
  \label{eq:sd-dreamsim}
\end{equation}
This is within-prompt perceptual variance, used only for evaluation; higher
diversity need not imply higher image quality or prompt alignment.

\paragraph{Uncertainty.}
The table reports one evaluation per method. For ImageReward mean,
CLIP, and HPS, let $a_p$ be the ten-image prompt mean; for DreamSim,
let $a_p=D_p$. The reported error term is
\begin{equation}
  \mathrm{SE}_{\mathrm{prompt}}
  =\frac{\sqrt{P^{-1}\sum_{p=0}^{P-1}(a_p-\bar a)^2}}{\sqrt P},
  \qquad \bar a=P^{-1}\sum_p a_p.
  \label{eq:sd-prompt-se}
\end{equation}
For lower-tail CVaR, it is instead
\begin{equation}
  \mathrm{SE}_{\mathrm{tail}}
  =\frac{\sqrt{K^{-1}\sum_{j=1}^{K}
    (r_{(j)}-\widehat{\mathrm{L\mbox{-}CVaR}}_{0.2})^2}}{\sqrt K},
  \qquad K=200.
  \label{eq:sd-tail-se}
\end{equation}
The tail term describes dispersion within the selected tail; it omits boundary
uncertainty and dependence between images of one prompt. Neither term measures
training-run variability or gives a confidence interval for method differences.

\paragraph{Training cost.}
Baseline timings on one A100 80\,GB GPU include active training, validation,
checkpoint writing, and repeated partial work after interruptions. They exclude
downtime, external startup, final evaluation, backbone pretraining, and
development runs.
The recorded totals are 82.28 hours for EXP-FT, 82.41 for FDC, and
85.77 for L-TFFT.

The selected FTFC checkpoint combines 800 initial updates and a
400-update refinement. Fitting takes 4.045 GPU-hours with a cached
bank. Generating and scoring its 80,000-image bank adds 43.711
GPU-hours, giving 47.756 GPU-hours (1.99 GPU-days) in total. FTFC's
timers for these stages exclude CPU calibration, startup, checkpoint writing,
validation, and exploratory or discarded updates. Because timing scopes and attained rewards differ, these costs do not establish
a speedup to matched quality.

\paragraph{Qualitative selection.}
Figure~\ref{fig:stable-diffusion-qualitative} shows ImageReward ranks
1, 5, and 10 among ten images per method for ``\textit{footage of an astronaut
in a tropical beach}.'' This prompt was chosen after inspecting
pretrained outputs and also occurs in training. Pretrained, EXP-FT,
L-TFFT, and FTFC reuse their evaluation images. The FDC batch was
resampled twice after visual inspection; its displayed ranks use the
last batch. This selection affects only the illustration, not the
quantitative benchmark. No images are retouched.

\subsubsection{L-TFFT: fixed-threshold lower-tail optimization}
\label{app:sd-ltfft}
L-TFFT applies the lower-tail transformation of
\citet{wang2026efficient} with $\beta=0.2$ and fixed threshold
$\tau=0.700759$:
$\phi_\tau(r)=-5[0.700759-r]_+$.
For a terminal latent $z$ and decoded reward $r(z,c)$, its unsmoothed
derivative is
\begin{equation}
  \nabla_z\phi_\tau(r(z,c))
  =5\,\mathbf{1}\{r(z,c)<\tau\}\,\nabla_z r(z,c),
  \label{eq:sd-ltfft-gradient}
\end{equation}
away from the threshold. Low-reward samples receive a direct reward gradient, while the KL control
penalty remains active for all samples. The threshold is fixed, so the fraction
below it need not remain 20\%.

The hinge is applied to each latent perturbation before clipping and averaging,
so an image above the threshold can still receive a gradient when a perturbation
falls below it. Training uses the shared Adjoint Matching settings and 1,600
updates in Tables~\ref{tab:sd-method-settings}--\ref{tab:sd-training-settings}.

L-TFFT improves mean ImageReward and lower-tail CVaR over pretrained, but not
over EXP-FT in this evaluation. With one run per method, this difference does
not establish equivalence or statistical superiority.

\clearpage
\section{Additional Results}
\label{app:additional-results}

law's preference for the left ring. FTFC has the smallest target distance at
$L=64$ (Figure~\ref{fig:toy2d-left-cvar}), reproducing both modes and
their unequal masses more closely than ITM or FDC.

\begin{figure}[!ht]
  \centering
  \includegraphics[width=\linewidth]{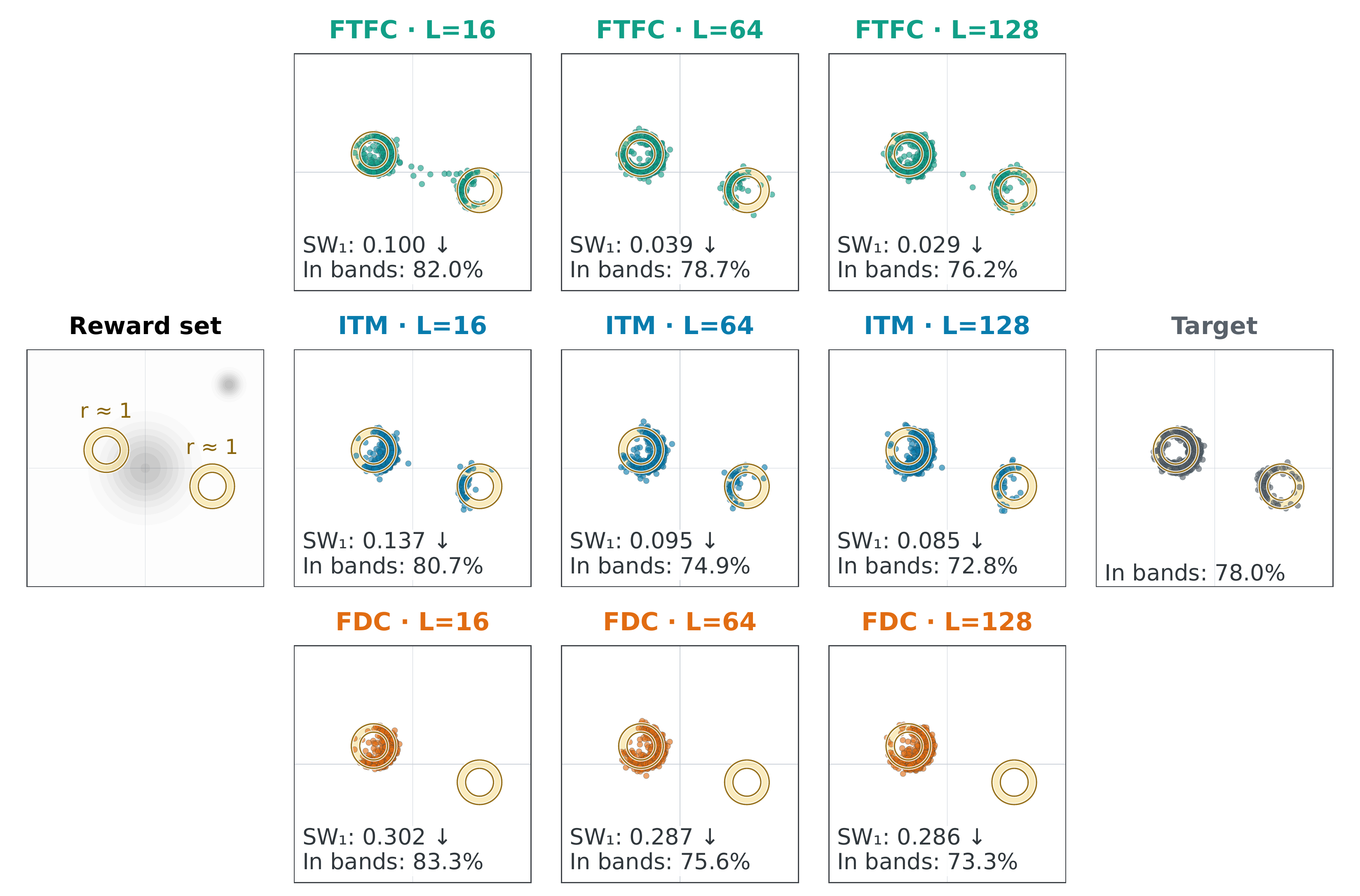}
  \caption{\textbf{Mean-reward objective}, $\mathcal F(p)=\mathbb E_p[r]$.
  The target is computed independently. Rows show FTFC, ITM, and FDC at
  $L=16,64,128$; the target is at middle right. Each panel shows 500 samples
  from one run; annotations use all 4,096 samples across three runs.}
  \label{fig:toy2d-mean}
\end{figure}

\paragraph{Convergence and cost.}
Figure~\ref{fig:toy2d-convergence} uses the same runs and numerical targets.
\begin{figure}[!ht]
  \centering
  \includegraphics[width=\linewidth]{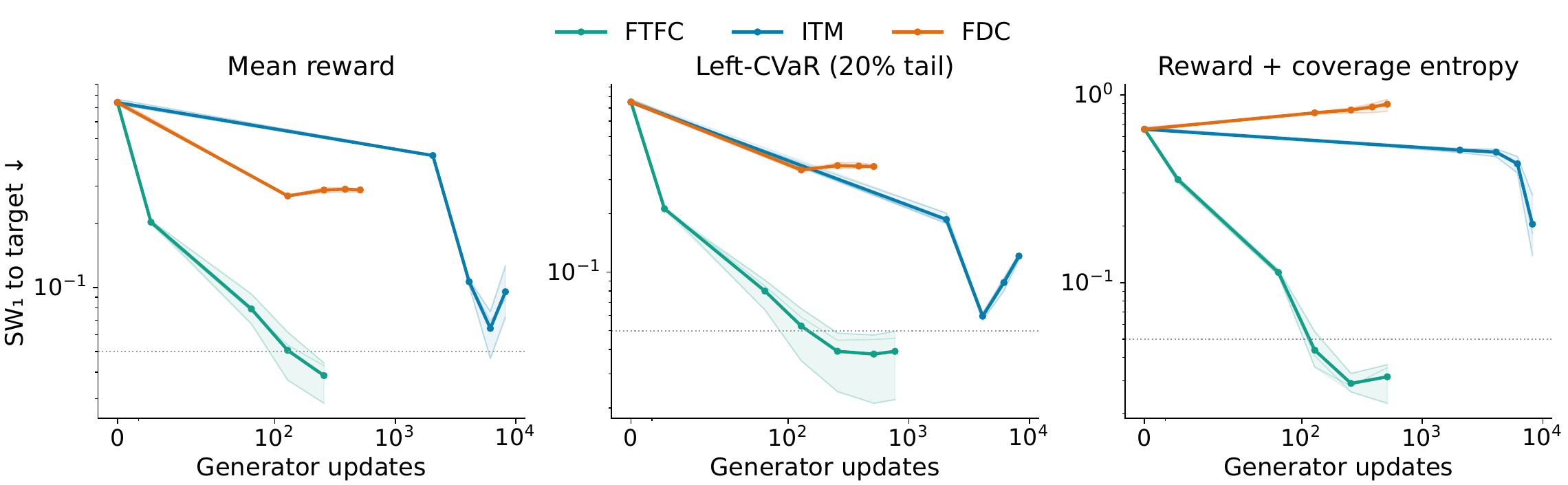}
  \caption{\textbf{Convergence to the numerical target at $L=64$.}
  Solid curves average three runs and shading spans their range. The dotted
  line marks sliced W1 $=0.05$. The axis counts generator updates; target
  optimization and endpoint acquisition are excluded. Batch sizes and sampling
  costs differ, so the curves do not establish equal-compute or end-to-end
  wall-clock speedups.}
  \label{fig:toy2d-convergence}
\end{figure}

\clearpage\subsection{Coverage entropy}
\label{app:toy2d-entropy}

The coverage objective $\mathcal F(p)=\mathbb E_p[r]+0.12H(\mathbb E_p[\phi])$
rewards entropy of aggregate feature masses, not the entropy of an individual
sample. It encourages both circles and their arcs (Figure~\ref{fig:toy2d-entropy}).
FTFC most closely matches this target; ITM concentrates on arcs and FDC largely
misses the right ring.

\begin{figure}[!ht]
  \centering
  \includegraphics[width=\linewidth]{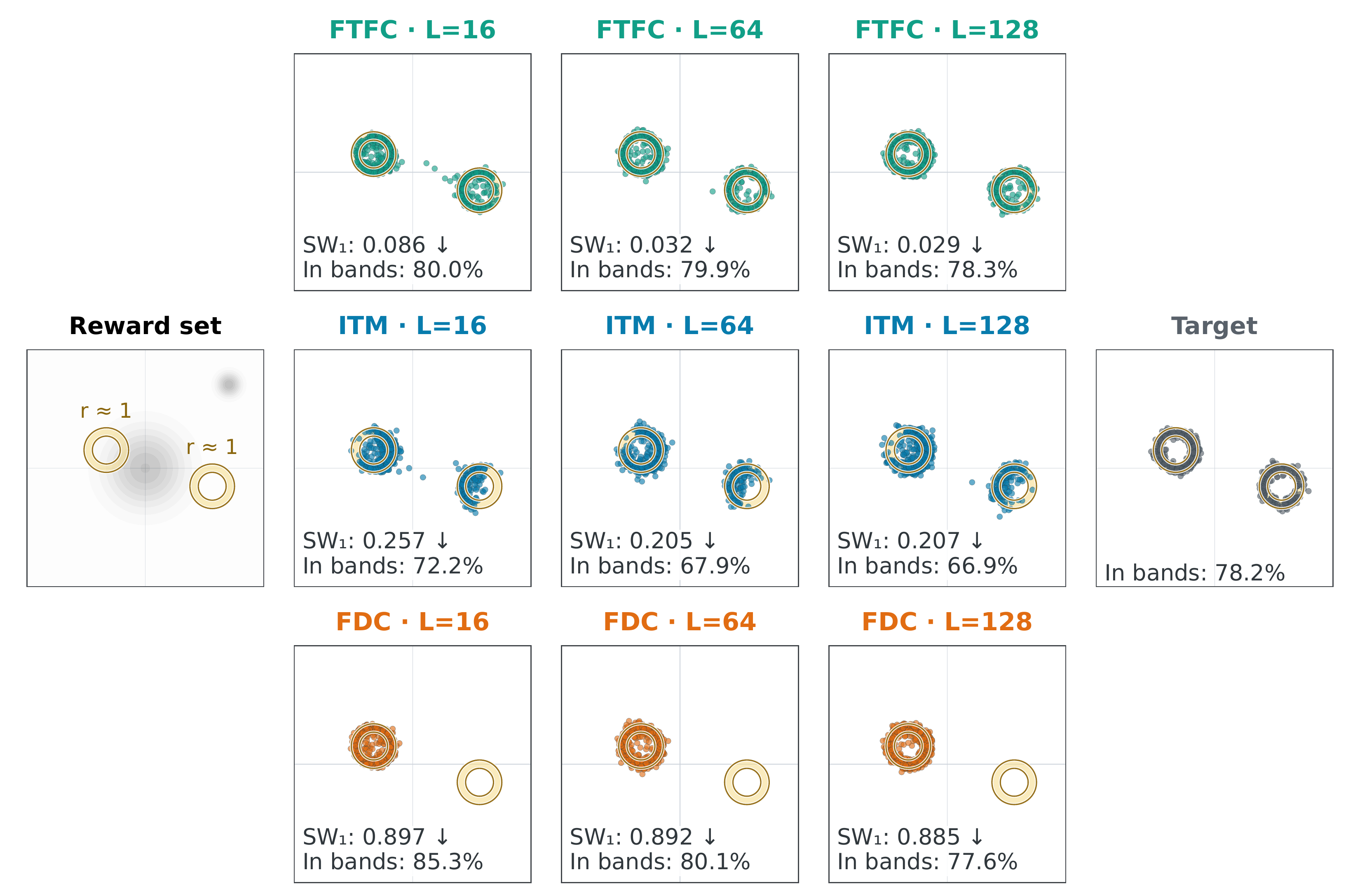}
  \caption{\textbf{Coverage-entropy objective.}
  $\mathcal F(p)=\mathbb E_p[r]+0.12H(\mathbb E_p[\phi])$ with KL coefficient
  $\alpha=0.03$. Rows, sampling lengths, evaluation protocol, and displayed
  run follow Figure~\ref{fig:toy2d-left-cvar}. The target retains mass outside
  the gold bands, so band-hit rate alone does not measure agreement.}
  \label{fig:toy2d-entropy}
\end{figure}

\end{document}